\documentclass[11pt]{article}

\usepackage{fullpage,times,url,bm}

\usepackage{amsthm,amsfonts,amsmath,amssymb,epsfig,color,float,graphicx,verbatim}
\usepackage{algorithm,algorithmic}
\usepackage{bbm}
\usepackage{enumitem}

\usepackage{graphicx}
\usepackage{subcaption}

\usepackage{array}

\usepackage[dvipsnames]{xcolor}

\usepackage[colorlinks,linkcolor = RawSienna, urlcolor  = Green, citecolor = Green, anchorcolor = ForestGreen,bookmarks=False]{hyperref}

%\widowpenalty=10000
%\clubpenalty=10000
%\flushbottom
\sloppy

\newtheorem{theorem}{Theorem}[section]
\newtheorem{proposition}{Proposition}[section]
\newtheorem{lemma}{Lemma}[section]

\newtheorem{definition}{Definition}[section]

\newtheorem{example}{Example}

\newcommand{\secref}[1]{Section~\ref{#1}}
\newcommand{\subsecref}[1]{Section~\ref{#1}}

\renewcommand{\eqref}[1]{Eq.~(\ref{#1})}
\newcommand{\lemref}[1]{Lemma~\ref{#1}}

\newcommand{\thmref}[1]{Theorem~\ref{#1}}

\newcommand{\appref}[1]{Appendix~\ref{#1}}

\newcommand{\assref}[1]{Assumption~\ref{#1}}
\newcommand{\exampleref}[1]{Example~\ref{#1}}

\usepackage{amssymb}

\usepackage{bbm}
\newcommand{\onefunc}{\mathbbm{1}}

\newcommand{\stam}[1]{}
\newcommand{\ignore}[1]{}
\usepackage{comment}

\usepackage{mathtools}

\newtheorem{assumption}[theorem]{Assumption}

\newcommand{\be}{\mathbf{e}}
\newcommand{\bx}{\mathbf{x}}
\newcommand{\bw}{\mathbf{w}}

\newcommand{\bb}{\mathbf{b}}
\newcommand{\bu}{\mathbf{u}}
\newcommand{\bv}{\mathbf{v}}
\newcommand{\bz}{\mathbf{z}}

\newcommand{\bh}{\mathbf{h}}

\newcommand{\bmu}{\boldsymbol{\mu}}
\newcommand{\balpha}{\boldsymbol{\alpha}}

\newcommand{\bxi}{\boldsymbol{\xi}}

\newcommand{\btheta}{{\boldsymbol{\theta}}}

\newcommand{\co}{{\cal O}}

\newcommand{\cd}{{\cal D}}

\newcommand{\cl}{{\cal L}}

\newcommand{\cu}{{\cal U}}

\newcommand{\cs}{{\cal S}}
\newcommand{\cn}{{\cal N}}

\DeclareMathOperator*{\sign}{sign}
\DeclareMathOperator*{\argmax}{argmax}

\DeclareMathOperator*{\cluster}{cluster}

\newcommand{\bbs}{{\mathbb S}}
\newcommand{\reals}{{\mathbb R}}

\newcommand{\zero}{{\mathbf{0}}}
\newcommand{\ndist}{\mathsf{N}}
\newcommand{\Dclust}{{\mathcal{D}_{\text{clusters}}}}

\DeclareMathOperator{\poly}{poly}

\newcommand{\inner}[1]{\langle #1 \rangle}
\newcommand{\binner}[1]{\biggl< #1 \biggr>}
\newcommand{\norm}[1]{\left\|#1\right\|}
 %small norm

\usepackage[style = alphabetic,citestyle=alphabetic,maxbibnames=99,sorting=nyt,natbib=true,backref=true]{biblatex}
\addbibresource{bib.bib}
\DefineBibliographyStrings{english}{%
  backrefpage = {Cited on page},% originally "cited on page"
  backrefpages = {Cited on pages},% originally "cited on pages"
}

\newcommand{\note}[1]{\textcolor{red}{\textbf{#1}}}

\makeatletter
\newcommand{\printfnsymbol}[1]{%
  \textsuperscript{\@fnsymbol{#1}}%
}
\makeatother

\title{\textbf{The Double-Edged Sword of Implicit Bias: \\Generalization vs. Robustness in ReLU Networks}}
\author{
Spencer Frei\footnote{Equal contribution.}\phantom{$^\ast$}\\
UC Davis\\
\texttt{sfrei@ucdavis.edu}\\
\and 
Gal Vardi$^\ast$\\
TTI-Chicago and Hebrew University\\ 
\texttt{galvardi@ttic.edu}
\and
Peter L. Bartlett\\
UC Berkeley and Google\\
\texttt{peter@berkeley.edu}\\
\and
Nathan Srebro\\
TTI-Chicago\\
\texttt{nati@ttic.edu}
\and
{Collaboration on the Theoretical Foundations of Deep Learning (\href{https://deepfoundations.ai}{deepfoundations.ai})}
}

% \date{}

\begin{document}

\maketitle

\begin{abstract}
In this work, we study the implications of the implicit bias of gradient flow on generalization and adversarial robustness in ReLU networks.
    We focus on a setting where the data consists of clusters and the correlations between cluster means are small, and show that in two-layer ReLU networks gradient flow is biased towards solutions that generalize well, but are 
    vulnerable
    to adversarial examples. Our results hold even in cases where the network 
    is highly overparameterized.
    Despite the potential for 
    harmful 
    overfitting in such 
    settings, 
    we prove that the implicit bias of gradient flow prevents it. 
    However, the implicit bias also leads to non-robust solutions (susceptible to small adversarial $\ell_2$-perturbations), even though robust networks that fit the data exist.
\end{abstract}

\section{Introduction}

A central question in the theory of deep learning is how neural networks 
can 
generalize even when trained without any explicit regularization, and when there are more learnable parameters than training examples. In such optimization problems there are many solutions that label the training data correctly, and gradient descent seems to prefer solutions that generalize well \citep{zhang2017understanding}. Thus, it is believed that gradient descent induces an {\em implicit bias} towards solutions which enjoy favorable properties 
\citep{neyshabur2017exploring}.
Characterizing this bias in various settings has been a subject of extensive research in recent years, 
but
it is still not well understood when the implicit bias provably implies generalization in non-linear neural networks.

An additional intriguing phenomenon in deep learning 
is the abundance of \emph{adversarial examples} in trained neural networks.
In a seminal paper, \citet{szegedy2014intriguing} observed that deep networks are extremely vulnerable to adversarial examples,
namely, very small perturbations to the inputs can 
significantly 
change the predictions. This phenomenon has attracted considerable interest, and various attacks (e.g., \cite{goodfellow2015explaining,carlini2017adversarial,papernot2017practical,athalye2018obfuscated,carlini2018audio,wu2020making}) and defenses (e.g., \cite{papernot2016distillation,kurakin2017adversarial,madry2018towards,wong2018provable,croce2020reliable,wong2020fast}) were developed. 
However, the fundamental principles underlying the existence of adversarial examples are still unclear, and it is believed that for most tasks where trained neural networks suffer from a vulnerability to adversarial attacks, there should exist other neural networks which can be robust to such attacks.  This is suggestive of the possible role of the optimization algorithms used to train neural networks in the existence of adversarial examples. 
\ignore{
Despite a great deal of research, it is 
unclear why neural networks are so susceptible to adversarial examples. 
Specifically, it is not well-understood why gradient methods learn \emph{non-robust networks}, namely, networks that are susceptible to adversarial examples, even in cases where robust classifiers exist.
}

In this work, we study the implications of the implicit bias for generalization and robustness in ReLU networks, in a setting where the data 
consists of clusters 
(i.e., Gaussian mixture model) and the correlations between cluster means are small.
We
show that in two-layer ReLU networks trained with the logistic loss or the exponential loss, gradient flow is biased towards solutions that generalize well, albeit they are non-robust. 
Our results are independent of the network width, and hence they hold even where the network has significantly more parameters than training examples. In such an overparameterized setting, one might expect 
harmful 
overfitting to occur, but we prove that the implicit bias of gradient flow prevents it. On the flip side, in our setting the distances between clusters are large, and thus one might hope that gradient flow will converge to a robust network. However, we show that the implicit bias leads to non-robust solutions. 

Our results rely on known properties of the implicit bias in two-layer ReLU networks trained with the logistic or the exponential loss, which were shown by \citet{lyu2019gradient} and~\citet{ji2020directional}. They proved that if gradient flow in homogeneous models (which include two-layer ReLU networks) with such losses 
reaches
a small training loss, then it converges (in direction) to a KKT point of the maximum-margin problem in parameter space. We show that in clustered data distributions, with high probability over the training dataset, every network that satisfies the KKT conditions of the maximum-margin problem generalizes well  but is non-robust. 
Thus, instead of analyzing the trajectory of gradient flow directly in the complex setting of training two-layer ReLU networks, we demonstrate that investigating the KKT points is a powerful tool for understanding generalization and robustness.
We emphasize that our results hold in the \emph{rich} (i.e., \emph{feature learning}) regime, namely, the neural network training does not lie in the kernel regime, and thus we provide guarantees which go beyond the analysis achieved using NTK-based results. 

In a bit more detail, our main contributions are the following:
\begin{itemize}
	\item Suppose that the data distribution consists of $k$ clusters, and the training dataset is of size $n \geq \tilde{\Omega}(k)$. We show that with high probability over the size-$n$ dataset, if gradient flow achieves training loss smaller than $\frac{1}{n}$ at some time $t_0$, then it converges in direction to a network that generalizes well (i.e., has a small test error). Thus, gradient-flow-trained networks
 cannot 
 harmfully 
 overfit even if the network is highly overparameterized. The sample complexity $\tilde{\Omega}(k)$ in this result is optimal (up to log factors), since we cannot expect to perform well on unseen data using a training dataset that does not include at least one example from each cluster.
	\item In the same setting as above, we prove that gradient flow converges in direction to a non-robust network, even though there exist robust networks that classify the data correctly. 
	Specifically, we consider data distributions on $\reals^d$ such that the distance between every pair of clusters is $\Omega(\sqrt{d})$, and we show that there exists a two-layer ReLU network where flipping the output sign of a test example requires w.h.p. an $\ell_2$-perturbation of size $\Omega(\sqrt{d})$, but gradient flow converges to a network where we can flip the output sign of a test example with an $\ell_2$-perturbation of size much smaller than $\sqrt{d}$.
Moreover, the adversarial perturbation depends only on the data distribution, and not on the specific test example or trained neural network. Thus, the perturbation is both \emph{universal} \citep{moosavi2017universal,zhang2021survey} and \emph{transferable} \citep{liu2017delving,akhtar2018threat}. 
	We argue that clustered data distributions are a natural setting for analyzing the tendency of gradient methods to converge to non-robust solutions. Indeed, if positive and negative examples are not well-separated (i.e., the distances between points with opposite labels are small), then robust solutions do not exist. Thus, in order to understand the role of the optimization algorithm, we need a setting with sufficient separation between positive and negative examples.
\end{itemize}

The remainder of this paper is structured as follows: Below we discuss related work. In \secref{sec:prelim} we provide necessary notations and background, and introduce our setting and assumptions. In Sections~\ref{sec:generalization} and~\ref{sec:robustness} we state our main results on generalization and robustness (respectively), and provide some proof ideas, with all formal proofs deferred to the appendix. We conclude with a short discussion (\secref{sec:discussion}).

\subsection*{Related work}

\paragraph{Implicit bias in neural networks.}

The literature on implicit bias in neural networks has rapidly expanded in recent years, and cannot be reasonably surveyed here (see \citet{vardi2022implicit} for a survey).
In what follows, we discuss results that apply to two-layer ReLU or leaky-ReLU networks trained with gradient flow in classification settings.

By \citet{lyu2019gradient} and~\citet{ji2020directional}, homogeneous neural networks (and specifically two-layer ReLU networks, which are the focus of this paper) trained with exponentially-tailed classification losses converge in direction to a KKT point of the maximum-margin problem. 
Our analysis of the implicit bias relies on this result.
We note that the aforementioned KKT point may not be a global optimum of the maximum-margin problem \citep{vardi2022margin}.
Recently, \citet{kunin2022asymmetric} extended this result by showing bias towards margin maximization in a broader family of networks called \emph{quasi-homogeneous}.

\citet{lyu2021gradient,sarussi2021towards,frei2023implicit} studied implicit bias in two-layer leaky-ReLU networks with linearly-separable data, and proved that under some additional assumptions gradient flow converges to a linear classifier.
 \ignore{
studied the implicit bias in 
two-layer leaky ReLU networks trained on linearly separable and symmetric data, and showed that gradient flow converges to a linear classifier which maximizes the $\ell_2$ margin. Note that in our work we do not assume that the data is symmetric, but we assume that it is nearly orthogonal. Also, in our case we show that gradient flow might converge to a linear classifier that does not maximize the $\ell_2$ margin.
\citet{sarussi2021towards} studied gradient flow on two-layer leaky ReLU networks, where the training data is linearly separable. They showed convergence to a linear classifier based on an assumption called \emph{Neural Agreement Regime (NAR)}: starting from some time point, 
all positive neurons (i.e., neurons with a positive outgoing weight) agree on the classification of the training data, and similarly for the negative neurons.
However, it is unclear when this assumption holds a priori. 
}
\citet{chizat2020implicit} studied the dynamics of gradient flow on infinite-width homogeneous two-layer networks with exponentially-tailed losses, and showed bias towards margin maximization w.r.t. a certain function norm known as the variation norm.
\citet{phuong2020inductive} studied the implicit bias in two-layer ReLU networks trained on \emph{orthogonally separable data}. 

\citet{safran2022effective} proved implicit bias towards minimizing the number of linear regions in univariate two-layer ReLU networks, and used this result to obtain generalization bounds. Similarly to our work, they used the KKT conditions of the maximum-margin problem in parameter space to prove generalization in overparameterized networks. However, our setting is significantly different. Implications of the bias towards KKT points of the maximum-margin problem were also studied in \citet{haim2022reconstructing}, where they showed that this implicit bias can be used for reconstructing training data from trained ReLU networks.

\paragraph{Theoretical explanations for non-robustness in neural networks.}
Despite much research, the reasons for the abundance of adversarial examples in trained networks are still unclear \citep{goodfellow2015explaining,fawzi2018adversarial,shafahi2019adversarial,schmidt2018adversarially,khoury2018geometry,bubeck2019adversarial,allen2021feature,wang2020high,shah2020pitfalls,shamir2021dimpled,singla2021shift,wang2022adversarial,dohmatob2022non}. Below we discuss several prior theoretical works on this question.

In one line of work, it has been shown that 
small adversarial perturbations can be found for any fixed input in certain neural networks with random weights (drawn from the Gaussian distribution) 
\citep{daniely2020most,bubeck2021single,bartlett2021adversarial,montanari2022adversarial}.  These works differ in the assumptions about the width and depth of the networks as well as the activation functions considered.  However, since trained networks are non-random, these works are unable to 
% explain the 
capture the existence of adversarial examples in trained networks. 
\ignore{ 
%Building on \citet{shamir2019simple}, it was shown in \citet{daniely2020most} 
 \citet{daniely2020most} showed
that small adversarial $\ell_2$-perturbations can be found in random ReLU networks where each layer has vanishing width relative to the previous layer. \citet{bubeck2021single} extended this result to two-layer neural networks without the vanishing width assumption, and \citet{bartlett2021adversarial} extended it to a large family of ReLU networks of constant depth. Finally, \citet{montanari2022adversarial} provided a similar result, but with weaker assumptions on the network width and activation functions. 
These works aim to explain the abundance of adversarial examples in neural networks, since they imply that adversarial examples are common in random networks, and specifically in random initializations of gradient-based methods. However, trained networks are clearly not random, and properties that hold in random networks may not hold in trained networks.
}

The result closest to ours was shown in \citet{vardi2022gradient}. Similarly to our result, they used the KKT conditions of the maximum-margin problem in parameter space, in order to prove that gradient flow converges to non-robust two-layer ReLU networks under certain assumptions. More precisely, they considered a setting where the training dataset $\cs$ consists of nearly-orthogonal points, and proved that every KKT point is non-robust w.r.t. $\cs$. Namely, for every two-layer network that satisfies the KKT conditions of the maximum-margin problem, and every point $\bx_i$ from $\cs$, it is possible to flip the output's sign with a small perturbation. Their result has two main limitations: (1) It considers robustness w.r.t. the training data, while the more common setting in the literature considers robustness w.r.t. test data, as it is often more crucial to avoid adversarial perturbations in test examples; (2) Since they assume near orthogonality of the training data, the size of the dataset $\cs$ must be smaller than the input dimension.\footnote{They also give a version of their result, where instead of assuming this upper bound on the size of the dataset, they assume an upper bound on the number of points that attain the margin in the trained network, but it is not clear a priori when this assumption is likely to hold.} 
Thus, they considered a \emph{high dimensional} setting. We note that high-dimensional settings often have a different generalization behavior than low-dimensional settings (e.g., overfitting can be \emph{benign} in the high-dimensional setting, but harmful in a low-dimensional setting \citep{kornowski2023tempered}). 
Our result does not suffer from these limitations, since we consider robustness w.r.t. test data, and the size of our training dataset might be very large. In our results, we essentially require near orthogonality of the cluster means, as opposed to near orthogonality of the training dataset in their result.
%We note that our proof builds on some technical ideas from  \cite{vardi2022gradient}, as we discuss in \secref{sec:robustness}.

Finally, in \citet{bubeck2021law} and \citet{bubeck2021universal}, the authors proved (under certain assumptions) that overparameterization is necessary if one wants to interpolate training data using a neural network with a small Lipschitz constant. 
Namely, neural networks with a small number of parameters are not expressive enough to interpolate the training data while having a small Lipschitz constant.
These results suggest that overparameterization might be necessary for robustness. In this work, we show that even if the network is highly overparameterized, the implicit bias of the optimization method 
can 
prevent convergence to robust solutions.     

\ignore{
\paragraph{Feature learning.}

This work considers generalization and robustness in neural networks trained in the feature-learning regime.
Several works in recent years analyzed the trajectory when training neural networks with gradient methods 
in this regime, in settings different from ours. These works include \cite{brutzkus2017sgd,frei2021provable,lyu2021gradient,sarussi2021towards,frei2022rfa}
\note{TODO} 
}%ignore

\section{Preliminaries} \label{sec:prelim}

We use bold-face letters to denote vectors, e.g., $\bx=(x_1,\ldots,x_d)$. For $\bx \in \reals^d$ we denote by $\norm{\bx}$ the Euclidean norm.
We denote by $\onefunc[\cdot]$ the indicator function, for example $\onefunc[t \geq 5]$ equals $1$ if $t \geq 5$ and $0$ otherwise. We denote $\sign(z) = 1$ if $z > 0$ and $-1$ otherwise.
For an integer $d \geq 1$ we denote $[d]=\{1,\ldots,d\}$. For a set $A$ we denote by $\cu(A)$ the uniform distribution over $A$. We denote by $\ndist(\mu,\sigma^2)$ the normal distribution with mean $\mu \in \reals$ and variance $\sigma^2$, and by $\ndist(\bmu,\Sigma)$ the multivariate normal distribution with mean $\bmu$ and covariance matrix $\Sigma$. The identity matrix of size $d$ is denoted by $I_d$.
We use standard asymptotic notation $\co(\cdot)$ and $\Omega(\cdot)$ 
to hide constant factors, and $\tilde{\co}(\cdot), \tilde{\Omega}(\cdot)$ to hide logarithmic factors.
We use $\log$ for the logarithm with base $2$ and $\ln$ for the natural logarithm.

In this work, we consider depth-$2$ ReLU neural networks. The ReLU activation function is defined by $\phi(z) = \max\{0,z\}$.  Formally, a depth-$2$ network $\cn_\btheta$ of width $m$ is parameterized by $\btheta = [\bw_1,\ldots,\bw_m, \bb, \bv]$ where $\bw_i \in \reals^d$ for all $i \in [m]$ and $\bb,\bv \in \reals^m$, and for every input $\bx \in \reals^d$ we have 
\[
	\cn_\btheta(\bx) = \sum_{j \in [m]}v_j \phi(\bw_j^\top \bx + b_j)~. 
\]
We sometimes view $\btheta$ as the vector obtained by concatenating the vectors $\bw_1,\ldots,\bw_m, \bb, \bv$. Thus, $\norm{\btheta}$ denotes the $\ell_2$ norm of the vector $\btheta$. We note that in this work we train both layers of the ReLU network.

We denote $\Phi(\btheta; \bx) := \cn_\btheta(\bx)$.
We say that a network is \emph{homogeneous} if there exists $L>0$ such that for every $\alpha>0$ and $\btheta,\bx$ we have $\Phi(\alpha \btheta; \bx) = \alpha^L \Phi(\btheta; \bx)$. Note that depth-$2$ ReLU networks as defined above are homogeneous (with $L=2$).

We next define gradient flow and remind the reader of some recent results on the implicit bias of gradient flow in two-layer ReLU networks. 
Let $\cs = \{(\bx_i,y_i)\}_{i=1}^n \subseteq \reals^d \times \{-1,1\}$ be a binary classification training dataset. Let $\Phi(\btheta; \cdot):\reals^d \to \reals$ be a neural network parameterized by $\btheta$. 
For a loss function $\ell:\reals \to \reals$ the \emph{empirical loss} of $\Phi(\btheta; \cdot)$ on the dataset $\cs$ is 
\begin{equation}
\label{eq:objective}
	\cl(\btheta) := \frac{1}{n} \sum_{i=1}^n \ell(y_i \Phi(\btheta; \bx_i))~.
\end{equation} 
We focus on the exponential loss $\ell(q) = e^{-q}$ and the logistic loss $\ell(q) = \log(1+e^{-q})$.

We consider gradient flow on the objective given in \eqref{eq:objective}. This setting captures the behavior of gradient descent with an infinitesimally small step size. Let $\btheta(t)$ be the trajectory of gradient flow. Starting from an initial point $\btheta(0)$, the dynamics of $\btheta(t)$ is given by the differential equation 
$\frac{d \btheta(t)}{dt} \in -\partial^\circ \cl(\btheta(t))$. Here, $\partial^\circ$ denotes the \emph{Clarke subdifferential} \citep{clarke2008nonsmooth}, which is a generalization of the derivative for non-differentiable functions. % (see Appendix~\ref{app:KKT} for a formal definition).

We now remind the reader of a recent result concerning the implicit bias of gradient flow over the exponential and logistic losses for homogeneous neural networks.  Note that since homogeneous networks satisfy $\sign(\Phi(\alpha \btheta; \bx)) = \sign(\Phi(\btheta;\bx))$ for any $\alpha>0$, the sign of the network output of homogeneous networks depends only on the direction of the parameters $\btheta$.  The following theorem provides a characterization of the implicit bias of gradient flow by showing that the trajectory of the weights $\btheta(t)$ \textit{converge in direction} to a first-order stationary point of a particular constrained optimization problem, where $\btheta$ \textit{converges in direction} to $\tilde \btheta$ means
$\lim_{t \to \infty}\frac{\btheta(t)}{\norm{\btheta(t)}} = 
\frac{\tilde{\btheta}}{\norm{\tilde{\btheta}}}$.  Note that since ReLU networks are non-smooth, the first-order stationarity conditions (i.e., the Karush--Kuhn--Tucker conditions, or KKT conditions for short) are defined using the Clarke subdifferential
%See Appendix~\ref{app:KKT} for more details of the KKT conditions.
(see \citet{lyu2019gradient} and \citet{dutta2013approximate} for more details on the KKT conditions in non-smooth optimization problems).

\begin{theorem}[Paraphrased from \citet{lyu2019gradient,ji2020directional}] \label{thm:known KKT}
    Let $\Phi(\btheta; \cdot)$ be a homogeneous ReLU neural network parameterized by $\btheta$. Consider minimizing either the exponential or the logistic loss over a binary classification dataset $ \{(\bx_i,y_i)\}_{i=1}^n$ using gradient flow. Assume that there exists time $t_0$ such that $\cl(\btheta(t_0))<\frac{1}{n}$ 
    (and thus $y_i \Phi(\btheta(t_0); \bx_i) > 0$ for every $\bx_i$). 
    Then, gradient flow converges in direction to a first-order stationary point (KKT point) of the following maximum margin problem in parameter space:
\begin{equation}
\label{eq:optimization problem}
	\min_\btheta \frac{1}{2} \norm{\btheta}^2 \;\;\;\; \text{s.t. } \;\;\; \forall i \in [n] \;\; y_i \Phi(\btheta; \bx_i) \geq 1~.
\end{equation}
Moreover, $\cl(\btheta(t)) \to 0$ and $\norm{\btheta(t)} \to \infty$ as $t \to \infty$.
\end{theorem}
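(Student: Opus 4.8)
The statement is, restricted to two-layer ReLU networks, the combination of the margin-maximization result of \citet{lyu2019gradient} with the directional-convergence refinement of \citet{ji2020directional}, so the plan is to reproduce that argument in four stages: (i) show $\cl(\btheta(t))\to 0$ and $\norm{\btheta(t)}\to\infty$; (ii) show that a suitable normalized margin is monotone and converges to a strictly positive limit; (iii) upgrade this to convergence of the direction $\btheta(t)/\norm{\btheta(t)}$; and (iv) verify that the limiting direction is a KKT point of \eqref{eq:optimization problem}.

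For (i): since $\cl(\btheta(t_0))<\tfrac1n$ we have $\ell(y_i\Phi(\btheta(t_0);\bx_i))<1$ for every $i$, and for both the exponential and the logistic loss this forces $y_i\Phi(\btheta(t_0);\bx_i)>0$; as gradient flow never increases $\cl$, all training margins stay positive for $t\geq t_0$. Using the chain rule for the Clarke subdifferential together with Euler's identity $\langle \bh,\btheta\rangle = L\,\Phi(\btheta;\bx)$ for $\bh\in\partial^\circ\Phi(\btheta;\bx)$, one computes $\tfrac{d}{dt}\tfrac12\norm{\btheta(t)}^2 = -\langle\partial^\circ\cl(\btheta(t)),\btheta(t)\rangle = -\tfrac{L}{n}\sum_i \ell'\!\big(y_i\Phi(\btheta(t);\bx_i)\big)\, y_i\Phi(\btheta(t);\bx_i) > 0$ (both losses have $\ell'<0$), so $\norm{\btheta(t)}$ is increasing. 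A quantitative lower bound on this derivative (it is, up to constants, at least $\norm{\btheta(t)}$ times a fixed power of the normalized margin) then shows $\norm{\btheta(t)}\to\infty$, and since the training margins scale like $\norm{\btheta(t)}^L$ this yields $\cl(\btheta(t))\to 0$.

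For (ii)--(iii): define the normalized margin $\bar\gamma(t) := \min_i y_i\Phi(\btheta(t);\bx_i)/\norm{\btheta(t)}^L$ and the smoothed surrogate $\tilde\gamma(t) := \ell^{-1}\!\big(n\,\cl(\btheta(t))\big)/\norm{\btheta(t)}^L$; using homogeneity and the exponential tail of $\ell$ one shows $\tilde\gamma(t)$ is non-decreasing for $t\geq t_0$ and is squeezed between $\bar\gamma(t)$ and $\bar\gamma(t)$ plus a vanishing term, so $\bar\gamma(t)$ converges to some $\gamma^\ast\geq\bar\gamma(t_0)>0$; in particular any limit point of $\btheta(t)/\norm{\btheta(t)}$, rescaled so the tightest constraint is active, is feasible for \eqref{eq:optimization problem}. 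The main obstacle is to turn this into genuine convergence of the direction, since a priori $\btheta(t)/\norm{\btheta(t)}$ could keep rotating on the sphere. Here I would use that $\Phi$ (a ReLU network) and $\ell$ are definable in an o-minimal structure, so a \L{}ojasiewicz-type gradient inequality with a desingularizing function holds along the flow; combining it with the monotonicity of $\tilde\gamma$ bounds the spherical arc length $\int_{t_0}^{\infty}\norm{\tfrac{d}{dt}\big(\btheta(t)/\norm{\btheta(t)}\big)}\,dt<\infty$, hence $\btheta(t)/\norm{\btheta(t)}\to\tilde\btheta/\norm{\tilde\btheta}$ for some $\tilde\btheta$, which we rescale so that $\min_i y_i\Phi(\tilde\btheta;\bx_i)=1$.

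For (iv): write $-\partial^\circ\cl(\btheta(t)) = \sum_i \beta_i(t)\, y_i\, \partial^\circ\Phi(\btheta(t);\bx_i)$ with $\beta_i(t) = -\tfrac1n\ell'\!\big(y_i\Phi(\btheta(t);\bx_i)\big)>0$. Because $\ell'$ decays exponentially, after normalizing these coefficients the contribution of the non-margin points vanishes relative to that of the points attaining the normalized margin; passing to the limit along the flow, using upper semicontinuity of $\partial^\circ\Phi(\cdot;\bx)$ and the fact that $\btheta(t)$ becomes aligned with $\tilde\btheta$, one obtains $\tilde\btheta = \sum_i \lambda_i\, y_i\, \bh_i$ with $\lambda_i\geq 0$, $\bh_i\in\partial^\circ\Phi(\tilde\btheta;\bx_i)$, and $\lambda_i>0$ only if $y_i\Phi(\tilde\btheta;\bx_i)=1$. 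These are exactly the stationarity, dual-feasibility and complementary-slackness conditions, which together with the primal feasibility from step (iii) say that $\tilde\btheta$ is a KKT point of \eqref{eq:optimization problem}; the ``moreover'' part was already obtained in step (i). I expect step (iii) — the o-minimality/\L{}ojasiewicz machinery establishing finite length of the spherical trajectory — to be the technically heaviest part.
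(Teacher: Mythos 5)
The paper does not prove Theorem~\ref{thm:known KKT}; it is explicitly stated as a paraphrase of results from \citet{lyu2019gradient} and \citet{ji2020directional}, with no proof given in the text or appendix. Your four-stage outline — (i) loss $\to 0$ and norm $\to\infty$ via Euler's identity for homogeneous functions, (ii) monotonicity of the smoothed normalized margin $\ell^{-1}(n\cl)/\norm{\btheta}^L$, (iii) directional convergence via o-minimality and a {\L}ojasiewicz-type inequality bounding the spherical arc length, and (iv) extraction of KKT multipliers from the normalized gradient-flow coefficients with the exponential tail killing non-margin contributions — is a faithful and correctly structured sketch of exactly that argument, with (i), (ii), (iv) following Lyu--Li and (iii) being the Ji--Telgarsky refinement, so there is nothing in the paper to compare it against and no gap to report.
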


Theorem~\ref{thm:known KKT} gives a characterization of the implicit bias of gradient flow with the exponential and the logistic loss for homogeneous ReLU networks.  Note that the theorem makes no assumption on the initialization, training data, or number of parameters in the network; the only requirement is that the network is homogeneous and that at some time point in the gradient flow trajectory, the network is able to achieve small training loss.  The theorem shows that although there are many ways to configure the network parameters to achieve small training loss (via overparameterization), gradient flow only converges (in direction) to networks which satisfy the KKT conditions of Problem~(\ref{eq:optimization problem}).  It is important to note that satisfaction of the KKT conditions is not sufficient for global optimality of the constrained optimization problem~\citep{vardi2022margin}.  We further note that if the training data are sampled i.i.d. from a distribution with label noise (e.g., a class-conditional Gaussian mixture model, or a distribution where labels $y_i$ are flipped to $-y_i$ with some nonzero probability), networks which have parameters that are feasible w.r.t. the constraints of Problem~(\ref{eq:optimization problem}) have overfit to noise, and understanding the generalization behavior of even globally optimal solutions to Problem~(\ref{eq:optimization problem}) in this setting is the subject of significant research~\citep{montanari2019maxmargin,chatterji2020linearnoise,frei2023benign}.

Finally, we introduce the distributional setting that we consider.  We consider a distribution $\Dclust$ on $\reals^d \times \{-1,1\}$ that consists of $k$ clusters with means $\bmu^{(1)},\ldots,\bmu^{(k)} \in \reals^d$ and covariance $\sigma^2 I_d$ (i.e., a Gaussian mixture model), such that the examples in the $j$-th cluster are labeled by $y^{(j)} \in \{-1,1\}$. More formally, $(\bx,y) \sim \Dclust$ is generated as follows: we draw $j \sim \cu([k])$ and $\bx \sim \ndist(\bmu^{(j)}, \sigma^2 I_d)$, and set $y=y^{(j)}$. We assume that there exist $i,j \in [k]$ with $y^{(i)} \neq y^{(j)}$. 
Moreover, we assume the following: 
\begin{assumption} \label{ass:dist}
    We have:
    \begin{itemize}
    	\item $\norm{\bmu^{(j)}} = \sqrt{d}$ for all $j \in [k]$.
    	\item 
    		$0< \sigma \leq 1$.
    	\item $k \left( \max_{i \neq j} |\inner{\bmu^{(i)}, \bmu^{(j)}}| + 4\sigma \sqrt{d} \ln(d) + 1 \right) \leq \frac{d  - 4\sigma \sqrt{d} \ln(d) + 1 }{10}$.
    \end{itemize}
\end{assumption}

\begin{example} \label{ex:dist}
    Below we provide simple examples of settings that satisfy the assumption:
    \begin{itemize}
        \item Suppose that the cluster means satisfy $|\inner{\bmu^{(i)}, \bmu^{(j)}} | = \tilde{\co}(\sqrt{d})$ for every $i \neq j$. This condition holds, e.g., if we choose each cluster mean i.i.d. from the uniform distribution on the sphere $\sqrt{d} \cdot \bbs^{d-1}$ (see, e.g., \citet[Lemma 3.1]{vardi2022gradient}). Let $\sigma=1$, namely, each cluster has a radius of roughly $\sqrt{d}$. Then, the assumption can be satisfied by choosing $k = \tilde{\co}(\sqrt{d})$.
        \item Suppose that the cluster means are exactly orthogonal (i.e., $\inner{\bmu^{(i)}, \bmu^{(j)}} = 0$ for all $i\neq j$), and $\sigma=1/\sqrt{d}$. Then, the assumption can be satisfied by choosing $k = \tilde{\co}(d)$.
        \item If the number of clusters is $k=\tilde{\co}(1)$, then the assumption may hold even where $\max_{i \neq j} |\inner{\bmu^{(i)}, \bmu^{(j)}}| = \tilde{\Theta}(d)$ (for any $0 < \sigma \leq 1)$.
    \end{itemize}
\end{example}

A few remarks are in order.
First, the assumption that $\norm{\bmu^{(j)}}$ is exactly $\sqrt{d}$ is for convenience, and we note that it may be relaxed (to have all cluster means approximately of the same norm) without affecting our results significantly. Note that in the case where $\sigma=1$, the radius of each cluster is roughly of the same magnitude as the cluster mean.
Second, we assume for convenience that the noise (i.e., the deviation from the cluster's mean) is drawn from a Gaussian distribution with covariance matrix $\sigma^2 I_d$. However, we note that this assumption can be generalized to any distribution $\cd_{\text{noise}}$ such that for every unit vector $\be$ the noise $\bxi \sim \cd_{\text{noise}}$ satisfies w.h.p. that $\inner{\bxi,\be} = \tilde{\co}(1)$ and $\norm{\bxi} = \tilde{\co}(\sqrt{d})$. This property holds, e.g., for a $d$-dimensional Gaussian distribution $\mathsf{N}(0,\Sigma)$, where $\mathrm{tr}[\Sigma] = d$ and $\norm{\Sigma}_2 = O(1)$ (see~\citet[Lemma 3.3]{frei2023implicit}), and more generally for a class of sub-Gaussian distributions (see~\citet[Claim 3.1]{hu2020surprising}). 
Third, note that the third part of \assref{ass:dist} essentially requires that the number of clusters $k$ cannot be too large and the correlations between cluster means cannot be too large.
Finally, we remark that when $k$ is small, our results may be extended to the case where $\sigma>1$. For example, if $k = \tilde{\co}(1)$ and $\max_{i \neq j} |\inner{\bmu^{(i)}, \bmu^{(j)}}| = \tilde{\co}(\sqrt{d})$, our generalization result (\thmref{thm:generalizaton}) can be extended to the case where $\sigma = \tilde{\co}(d^{1/8})$. We preferred to avoid handling $\sigma>1$ in order to simplify the proofs.  

Moreover, it is worth noting that \assref{ass:dist} implies that the data is w.h.p. linearly separable (see \lemref{lem:linearly separable} below, and a proof in \appref{app:proof of linearly separable}). However, in this work we consider learning using overparameterized ReLU networks, and it is not obvious a priori that gradient methods do not 
harmfully 
overfit in this case.  
Indeed, it has been shown that ReLU networks trained by gradient descent can interpolate training data and fail to generalize well in some distributional settings~\citep{kou2023benign}. 

\begin{lemma} \label{lem:linearly separable}
    Let $\bu = \sum_{q \in [k]} y^{(q)} \bmu^{(q)}$. Then, with probability at least $1-2d^{1-\ln(d)/2} = 1-o_d(1)$ over $(\bx,y) \sim \Dclust$, we have $y = \sign(\bu^\top \bx)$. 
\end{lemma}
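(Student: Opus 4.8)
The plan is to condition on the (randomly drawn) cluster index $j$, write $\bx = \bmu^{(j)} + \bxi$ with $\bxi \sim \ndist(\zero, \sigma^2 I_d)$, and show that the ``signal'' contribution $\inner{\bmu^{(j)}, \bmu^{(j)}} = \norm{\bmu^{(j)}}^2 = d$ dominates everything else in $y^{(j)} \bu^\top \bx$. Expanding $\bu = \sum_{q \in [k]} y^{(q)} \bmu^{(q)}$ against $\bx = \bmu^{(j)} + \bxi$ and using $\norm{\bmu^{(j)}}^2 = d$,
\[
y^{(j)} \bu^\top \bx = d \;+\; y^{(j)} \sum_{q \neq j} y^{(q)} \inner{\bmu^{(q)}, \bmu^{(j)}} \;+\; y^{(j)} \inner{\bu, \bxi}~.
\]
The middle (cross-correlation) term is controlled deterministically: its absolute value is at most $(k-1)\max_{i \neq j} |\inner{\bmu^{(i)}, \bmu^{(j)}}|$. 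The last (noise) term has absolute value at most $\sum_{q \in [k]} |\inner{\bmu^{(q)}, \bxi}|$, so it suffices to control each $|\inner{\bmu^{(q)}, \bxi}|$.

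For the noise, I would observe that for each fixed $q$, $\inner{\bmu^{(q)}, \bxi}$ is a centered Gaussian with variance $\sigma^2 \norm{\bmu^{(q)}}^2 = \sigma^2 d$, so the standard Gaussian tail bound gives $\Pr\big[|\inner{\bmu^{(q)}, \bxi}| > \sigma \sqrt{d}\, \ln(d)\big] \le 2 e^{-(\ln d)^2/2} = 2 d^{-\ln(d)/2}$. Taking a union bound over $q \in [k]$ and using $k \le d$ (which follows from the third part of \assref{ass:dist}, since each factor on its left-hand side is at least $1$, so $k \le \tfrac{1}{10}(d - 4\sigma\sqrt d \ln d + 1) \le d$), we get that with probability at least $1 - 2 d^{1 - \ln(d)/2}$ over $\bxi$, the bound $|\inner{\bmu^{(q)}, \bxi}| \le \sigma\sqrt{d}\,\ln(d)$ holds for every $q \in [k]$ simultaneously, hence $|\inner{\bu, \bxi}| \le k \sigma \sqrt{d}\, \ln(d)$. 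On this event,
\[
y^{(j)} \bu^\top \bx \;\ge\; d - (k-1)\max_{i \neq j} |\inner{\bmu^{(i)}, \bmu^{(j)}}| - k\sigma\sqrt d \ln(d) \;\ge\; d - k\Big( \max_{i \neq j} |\inner{\bmu^{(i)}, \bmu^{(j)}}| + 4\sigma\sqrt d \ln(d) + 1 \Big)~,
\]
and the third part of \assref{ass:dist} bounds the subtracted quantity by $\tfrac{1}{10}(d - 4\sigma\sqrt d \ln d + 1) \le \tfrac{d+1}{10}$, so $y^{(j)} \bu^\top \bx \ge \tfrac{9d - 1}{10} > 0$. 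This forces $\sign(\bu^\top \bx) = y^{(j)} = y$ on this event. Since $j$ is independent of $\bxi$, integrating over $j$ preserves the probability bound, yielding the statement over $(\bx, y) \sim \Dclust$.

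There is no genuinely hard step here: the lemma is a concentration-plus-union-bound argument. The only things requiring a little care are (i) the bookkeeping with \assref{ass:dist}, i.e.\ making sure the cross-correlation error, the noise error, and the $+1$/$+k$ slack are all absorbed into the $d/10$ budget (the assumption is deliberately generous for precisely this purpose, and much weaker bounds would suffice for this lemma alone), and (ii) choosing the per-cluster deviation threshold to be exactly $\sigma\sqrt d \ln(d)$ so that the union bound produces the stated failure probability $2 d^{1 - \ln(d)/2} = 2 d \cdot e^{-(\ln d)^2/2} = o_d(1)$.
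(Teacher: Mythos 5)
Your proof is correct and takes essentially the same approach as the paper: decompose $\bx = \bmu^{(j)} + \bxi$, isolate the $d$-sized diagonal term, bound the cross-correlations deterministically and the noise projections via a Gaussian tail bound plus a union bound over $q \in [k]$ (using $k \le d$), and absorb everything into the $d/10$ budget from Assumption~\ref{ass:dist}. The only cosmetic difference is that you derive the tail bound $\Pr[|\inner{\bmu^{(q)},\bxi}| > \sigma\sqrt d\ln d] \le 2d^{-\ln(d)/2}$ inline, whereas the paper cites its Lemma~\ref{lem:bound mu xi product}, which encapsulates the same calculation.
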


\section{Generalization} \label{sec:generalization}

In this section, we show that under our assumptions on the distribution $\Dclust$, gradient flow does not 
harmfully 
overfit. Namely, even if the learned network is highly overparameterized, the implicit bias of gradient flow guarantees convergence to a solution that generalizes well. Moreover, we show that the sample complexity is optimal.
The main result of this section is stated in the following theorem:

\begin{theorem} \label{thm:generalizaton}
        Let $\epsilon, \delta \in (0,1)$.
	Let $\cs = \{(\bx_i,y_i)\}_{i=1}^n \subseteq \reals^d \times \{-1,1\}$ be a training set drawn i.i.d. from the distribution $\Dclust$, where $n \geq k \ln^2(d)$. Let $\cn_\btheta$ be a depth-$2$ ReLU network such that $\btheta = [\bw_1,\ldots,\bw_m,\bb,\bv]$ is a KKT point of Problem~(\ref{eq:optimization problem}). 
        Provided $d$ is sufficiently large such that 
        $\delta^{-1} \leq \frac{1}{3} d^{\ln(d) - 1}$
        and 
        \[
            n \leq \min\left\{ \sqrt{\frac{\delta}{3}} \cdot e^{d/32}, \frac{\sqrt{\delta}}{3} \cdot d^{\ln(d)/4}, \frac{\epsilon}{4} \cdot d^{\ln(d)/2} \right\}~,
        \]
        with probability at least $1-\delta$ over $\cs$, we have 
        \[
            \Pr_{(\bx,y) \sim \Dclust} \left[ y \cn_\btheta(\bx) \leq 0 \right] \leq \epsilon~.
        \]
\end{theorem}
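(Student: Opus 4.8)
The plan is to bypass the gradient-flow dynamics and argue directly from \thmref{thm:known KKT}: since gradient flow converges in direction to a KKT point, it suffices to prove the stated bound for \emph{every} $\btheta$ that is a KKT point of Problem~(\ref{eq:optimization problem}). First I would unpack the KKT conditions at the neuron level. Write $g_{ji}\in[0,1]$ for the chosen Clarke subgradient of $\phi$ at $\bw_j^\top\bx_i+b_j$ (so $g_{ji}=\onefunc[\bw_j^\top\bx_i+b_j>0]$ whenever the pre-activation is nonzero), and let $\lambda_i\ge 0$ be the multipliers. Stationarity forces $(\bw_j,b_j)=v_j\tilde\bp_j$ with $\tilde\bp_j:=\sum_{i=1}^n\lambda_i y_i g_{ji}(\bx_i,1)\in\reals^{d+1}$, and substituting back into the $\partial/\partial v_j$ equation forces $v_j=0$ or $\norm{\tilde\bp_j}=1$; Euler's identity for $2$-homogeneous functions together with complementary slackness gives $\tfrac12\snorm{\btheta}^2=\sum_i\lambda_i=\sum_j v_j^2=:S$. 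Partitioning the live neurons into $J^\pm=\{j:\pm v_j>0\}$, one obtains
\[
  \cn_\btheta(\bx)\;=\;\sum_{j\in J^+} v_j^2\,\phi\big(\langle\tilde\bp_j,(\bx,1)\rangle\big)\;-\;\sum_{j\in J^-} v_j^2\,\phi\big(-\langle\tilde\bp_j,(\bx,1)\rangle\big),
\]
so the sign of $\cn_\btheta(\bx)$ is governed only by the unit vectors $\tilde\bp_j$, the weights $v_j^2$, and the signs of the $v_j$.

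Second, I would establish the concentration events. With probability at least $1-\delta/2$ over $\cs$: (a) every cluster contains a training example (balls-into-bins: a fixed cluster is missed with probability at most $e^{-n/k}\le d^{-\ln d}$ since $n\ge k\ln^2 d$, and we union bound over $\le d$ clusters); and (b) every noise vector satisfies $\norm{\bxi_i}=(1\pm o(1))\sigma\sqrt d$, $|\inner{\bxi_i,\bmu^{(q)}}|\le\sigma\sqrt d\ln d$ for all $q$, and $|\inner{\bxi_i,\bxi_{i'}}|\le\sigma^2\sqrt d\ln d$ for all $i,i'$ (union bound over the $\co(n^2)$ pairs via sub-Gaussian and $\chi^2$ tails --- this is where the upper bounds on $n$ and on $\delta^{-1}$ are used). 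On this event the entries $K_{ii'}:=\inner{(\bx_i,1),(\bx_{i'},1)}$ satisfy $K_{ii'}\in[\alpha,3d]$ for $i,i'$ in the same cluster, with $\alpha:=d+1-4\sigma\sqrt d\ln d>0$, and $|K_{ii'}|\le\rho$ for $i,i'$ in different clusters, with $\rho:=\max_{q\ne q'}|\inner{\bmu^{(q)},\bmu^{(q')}}|+4\sigma\sqrt d\ln d+1$; the third part of \assref{ass:dist} is exactly $k\rho\le\alpha/10$.

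Third --- the structural heart --- I would show each live neuron is polarized toward one label. For a cluster index $q$ (with $c(i)$ denoting the cluster of $\bx_i$) set $\Lambda_{j,q}:=\sum_{i:c(i)=q}\lambda_i g_{ji}\ge 0$ and $\Lambda_j:=\sum_q\Lambda_{j,q}$. If $j\in J^+$ and $y^{(q)}=-1$, then for any $\bx_i$ of cluster $q$ with $g_{ji}=1$ we have $\langle\tilde\bp_j,(\bx_i,1)\rangle>0$ (as $v_j>0$); expanding this with the bounds of step~2 yields $\langle\tilde\bp_j,(\bx_i,1)\rangle\le-\alpha\Lambda_{j,q}+\rho\Lambda_j$, so $\Lambda_{j,q}\le(\rho/\alpha)\Lambda_j$. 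Summing over the $\le k$ wrong-label clusters and using $k\rho\le\alpha/10$ gives $\sum_{q:y^{(q)}=-1}\Lambda_{j,q}\le\tfrac1{10}\Lambda_j$, hence $\sum_{q:y^{(q)}=+1}\Lambda_{j,q}\ge\tfrac9{10}\Lambda_j$; symmetrically for $J^-$. Expanding $1=\norm{\tilde\bp_j}^2$ with the same entries (each same-cluster diagonal block contributes at least $\alpha\Lambda_{j,q}^2$, each cross-cluster block at most $\rho\Lambda_j^2$ in absolute value) gives $\tfrac{1}{\sqrt{3d}}\le\Lambda_j\le 2\sqrt{k/d}$ for every live $j$, which I will need to control error terms.

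Finally I would transfer to a fresh example. Fix $(\bx,y)\sim\Dclust$ in cluster $c$, say $y=y^{(c)}=+1$ (the other sign is symmetric). With probability at least $1-\epsilon$ over $(\bx,y)$ --- its cluster is one of the represented ones, and its noise $\bxi$ has $\norm{\bxi}\le2\sigma\sqrt d$ and small inner products with all $\bmu^{(q)}$ and all $n$ training noises (where the remaining bound on $n$ enters) --- one gets $\langle\tilde\bp_j,(\bx,1)\rangle=d\,\Lambda_{j,c}+\eta_j$ with $|\eta_j|\le C\rho\Lambda_j$ for a universal $C$. Plugging into the displayed representation and using $\phi(t)\ge t$ and $\Lambda_{j,c}\ge 0$,
\[
  y\,\cn_\btheta(\bx)\;\ge\; d\sum_{j\in J^+} v_j^2\Lambda_{j,c}\;-\;C\rho\sum_j v_j^2\Lambda_j\;.
\]
Writing $M_q:=\sum_j v_j^2\Lambda_{j,q}$ and $M_c^+:=\sum_{j\in J^+}v_j^2\Lambda_{j,c}$, the polarization bound gives $\sum_j v_j^2\Lambda_j\le\tfrac{10}{9}\sum_q M_q$ and $M_c\approx M_c^+$, so it suffices to prove $d\,M_c^+>\tfrac{10}{9}C\rho\sum_q M_q$; since $k\rho\le\alpha/10<d/10$ this reduces to showing the cluster masses are \emph{balanced}, $M_c\gtrsim\tfrac1k\sum_q M_q$ for every cluster $c$. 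This balancedness is the step I expect to be the main obstacle, since a priori a KKT point could place negligible dual weight on a given cluster; resolving it is exactly where $n\ge k\ln^2 d$ is used (so that w.h.p.\ every cluster is represented), in combination with feasibility $\cn_\btheta(\bx_i)\ge1$ at the cluster-$c$ training points, complementary slackness, and the unit-norm constraints $\norm{\tilde\bp_j}=1$, which together should force each represented cluster to carry an $\Omega(1/k)$ fraction of the total mass. Granting this, $y\,\cn_\btheta(\bx)>0$ on the good event, so $\Pr_{(\bx,y)}[y\cn_\btheta(\bx)\le0]\le\epsilon$; a union bound with the $\le\delta/2$ failure of step~2 yields the theorem with probability at least $1-\delta$.
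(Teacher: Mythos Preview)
Your overall plan—concentration events on the dataset, structural consequences of KKT, then transfer to a test point—matches the paper's, and your per-neuron polarization inequality $\Lambda_{j,q}\le(\rho/\alpha)\Lambda_j$ for wrong-label clusters is essentially Case~2 of the paper's \lemref{lem:lam upper bound}. But the step you yourself flag as ``the main obstacle'' is a genuine gap, and the tools you propose for it (the layer-balance identity $\|\tilde\bp_j\|=1$ together with the bounds $\Lambda_j\in[\Omega(1/\sqrt d),O(\sqrt{k/d})]$) do not close it: these constrain each neuron's \emph{total} mass $\Lambda_j$, not how the aggregate $\sum_j v_j^2\Lambda_{j,q}$ is distributed across clusters $q$, so they cannot by themselves rule out a KKT point that concentrates nearly all dual mass on a few clusters.

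The paper resolves this by working directly with the cluster-level aggregates $M_q^\pm:=\sum_{j\in J_\pm}v_j^2\Lambda_{j,q}$ and proving matching two-sided bounds (\lemref{lem:lam upper bound} and \lemref{lem:lam lower bound}); the per-neuron normalization $\|\tilde\bp_j\|=1$ is never used. The upper bound comes first and uses complementary slackness: if $\alpha:=\max_q\max\{M_q^+,M_q^-\}$ is achieved at cluster $r$ with (say) $\alpha=M_r^+$, then $\alpha>0$ forces some $\lambda_{i'}>0$ with $i'\in I^{(r)}$, hence $y_{i'}\cn_\btheta(\bx_{i'})=1$ exactly; expanding $\cn_\btheta(\bx_{i'})$ via the KKT formulas and the correlation bounds yields $1\ge\alpha(d-\Delta+1)(1-2c')$, i.e.\ every $M_q^\pm\le\frac{1}{(1-2c')(d-\Delta+1)}$. (Your polarization argument is exactly what handles the subcase where the maximizing cluster has the ``wrong'' label.) With the upper bound in hand, the lower bound follows from feasibility at \emph{any} training point $\bx_s$ in cluster $c$: bounding $1\le y^{(c)}\cn_\btheta(\bx_s)\le(3d+\Delta+1)M_c^+ +(p+\Delta+1)\sum_{q\ne c}M_q^+$ and using the upper bound on the second term gives $(p+\Delta+1)\cdot k\cdot\frac{1}{(1-2c')(d-\Delta+1)}=\frac{c'}{1-2c'}\le\frac18$, so $M_c^+\gtrsim 1/d$. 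This is precisely the balancedness you need, and once you have it, your final displayed inequality $y\cn_\btheta(\bx)\ge \alpha M_c^+ - C\rho\sum_q M_q$ goes through immediately.

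In short, the missing idea is to exploit the \emph{margin constraints at training points}—equality at a support vector for the upper bound on cluster masses, inequality at any feasible point for the lower bound—rather than the per-neuron norm constraint. Your $\|\tilde\bp_j\|=1$ detour gives pleasant control on individual neurons but is neither needed nor sufficient for the cluster-balancedness you correctly identified as the crux.
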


\ignore{
\begin{theorem} \label{thm:generalizaton}
	Let $\cs = \{(\bx_i,y_i)\}_{i=1}^n \subseteq \reals^d \times \{-1,1\}$ be a training set drawn i.i.d. from the distribution $\Dclust$, where $n \geq k \ln^2(d)$ and is at most polynomial in $d$. Let $\cn_\btheta$ be a depth-$2$ ReLU network such that $\btheta = [\bw_1,\ldots,\bw_m,\bb,\bv]$ is a KKT point of Problem~(\ref{eq:optimization problem}). 
	Then, with probability at least 
	\[
		1- \left(3 n^2 d^{-\frac{\ln(d)}{2}} + n^2 e^{-d/16} + d^{1-\ln(d)} \right) 
		\geq 1 - d^{-\omega_d(1)}
		%= 1 - o_d(1)
	\]	
	over $\cs$, 
	we have 
	\[
		\Pr_{(\bx,y) \sim \Dclust} \left[ y \cn_\btheta(\bx)) \leq 0 \right] 
		\leq 4 n d^{-\frac{\ln(d)}{2}}  
		\leq d^{-\omega_d(1)}~.
		%= o_d(1)~.
	\]
\end{theorem}
}%ignore

The sample complexity requirement in \thmref{thm:generalizaton} is $n = \tilde{\Omega}(k)$. Essentially, it requires that the dataset $\cs$ will include at least one example from each cluster. 
Clearly, 
any learning algorithm cannot perform well on unseen clusters.
Hence the sample complexity requirement in 
%\thmref{thm:generalizaton} 
the theorem
is tight (up to log factors).

The assumptions in \thmref{thm:generalizaton} include upper bounds on $\delta^{-1}$ and $n$. Note that the expressions in these upper bounds are super-polynomial in $d$, and in particular if $n,\delta^{-1},\epsilon^{-1} = \poly(d)$, then these assumptions hold for a sufficiently large $d$.  
Admittedly, enforcing an upper bound on the training dataset's size is uncommon in generalization results.  However, if $n$ is exponential in $d$, it is not hard to see that there will be clusters which have both positive and negative examples within radius $\sigma$ of the cluster center, essentially introducing a form of label noise to the problem.  Since KKT points of Problem~(\ref{eq:optimization problem}) interpolate the training data, this would imply that the network has interpolated training data with label noise---in other words, it has `overfit' to noise.  Understanding the generalization behavior of interpolating neural networks in the presence of label noise is a very technically challenging problem for which much is unknown, especially if one seeks to understand this by only relying upon the properties of KKT conditions for margin maximization.  It is noteworthy that all existing non-vacuous generalization bounds for interpolating nonlinear neural networks in the presence of label noise require $n<d$~\citep{frei2022benign,cao2022convolutional,xu2023benign,frei2023benign,kou2023benign}.
% Admittedly, enforcing an upper bound on the training dataset's size is uncommon in generalization results. The requirement is an artifact of our analysis, since in our proof we show that if the dataset $\cs$ satisfies several ``nice" properties then every KKT point of Problem~(\ref{eq:optimization problem}) classifies correctly every fresh example $(\bx,y)$ that satisfies certain ``nice'' properties. In order to show that the dataset and the fresh example satisfy w.h.p. these nice properties, we use union bounds which depend on the size of the dataset, and thus the probabilities decrease as $n$ increases. 

\ignore{
The reader might notice that the dependence on $n$ in the above theorem is unusual in generalization results, namely, as $n$ increases our bound on the success probability (over $\cs$) decreases, and our bound on the error of the learned network increases (yet these bounds are still $1 - o_d(1)$ and $o_d(1)$, respectively, for every $n \leq \poly(d)$). This dependence is an artifact of our analysis, since in our proof we show that if the dataset $\cs$ satisfies several ``good" properties then every KKT point of Problem~(\ref{eq:optimization problem}) classifies correctly every fresh example $(\bx,y)$ that satisfies some ``good" properties. In order to show that the dataset and the fresh example satisfy w.h.p. these good properties, we use union bounds which depend on the size of the dataset, and thus the probabilities decrease as $n$ increases. The assumption in the theorem that $n$ is at most polynomial in $d$ is in order to ensure that the success probability and the error are $1 - o_d(1)$ and $o_d(1)$ (respectively).
}

Combining \thmref{thm:generalizaton} with \thmref{thm:known KKT}, we conclude that w.h.p. over a training dataset of size 
$n \geq k \ln^2(d)$ (and under some additional mild requirements),
if gradient flow reaches 
empirical loss 
smaller than $\frac{1}{n}$,
then it converges in direction to a neural network that generalizes well. This result is width-independent, thus, it holds irrespective of the network width. Specifically, even if the network is highly overparameterized, the implicit bias of gradient flow prevents 
harmful 
overfitting. Moreover, the result does not depend directly on the initialization of gradient flow. That is, it holds whenever gradient flow reaches small empirical loss after some finite time.
Thus, by relying on the KKT conditions of the max-margin problem instead of analyzing the full gradient flow trajectory, we can prove generalization without the need to prove convergence. 

\subsection{Proof idea} \label{subsec:generalization ideas}

The proof of \thmref{thm:generalizaton} is given in \appref{app:generalization}. Here we discuss the high-level approach.
Let $\btheta = [\bw_1,\ldots,\bw_m,\bb,\bv]$ be a KKT point of Problem~(\ref{eq:optimization problem}). Thus, we have $\cn_\btheta(\bx) = \sum_{j \in [m]} v_j \phi(\bw_j^\top \bx + b_j)$.
Since $\btheta$ satisfies the KKT conditions of Problem~(\ref{eq:optimization problem}), then there are $\lambda_1,\ldots,\lambda_n$ such that for every $j \in [m]$ we have
\begin{equation}
\label{eq:kkt condition w idea}
	\bw_j = \sum_{i \in [n]} \lambda_i \nabla_{\bw_j} \left( y_i \cn_{\btheta}(\bx_i) \right) =  \sum_{i \in [n]} \lambda_i y_i v_j \phi'_{i,j} \bx_i~,
\end{equation}
where $\phi'_{i,j}$ is a subgradient of $\phi$ at $\bw_j^\top \bx_i + b_j$, i.e., if $\bw_j^\top \bx_i + b_j \neq 0$ then $\phi'_{i,j} = \onefunc[\bw_j^\top \bx_i + b_j \geq 0]$, and otherwise $\phi'_{i,j}$ is some value in $[0,1]$. Also we have $\lambda_i \geq 0$ for all $i$, and $\lambda_i=0$ if 
$y_i  \cn_{\btheta}(\bx_i) 
\neq 1$. Likewise, we have
\begin{equation}
\label{eq:kkt condition b idea}
	b_j = \sum_{i \in [n]} \lambda_i \nabla_{b_j} \left( y_i \cn_{\btheta}(\bx_i) \right) =  \sum_{i \in [n]} \lambda_i y_i v_j \phi'_{i,j}~.
\end{equation}

In the proof, using a careful analysis of Eq.~(\ref{eq:kkt condition w idea})~and~(\ref{eq:kkt condition b idea}) we show that w.h.p. $\cn_\btheta$ classifies correctly a fresh example. 
Note that by \eqref{eq:kkt condition w idea}, each neuron $j \in [m]$ is a linear combination of the inputs $\bx_1,\ldots,\bx_n$ with coefficients $\lambda_i y_i v_j \phi'_{i,j}$. 
Let $r \in [k]$ be a cluster with $y^{(r)}=1$ (the argument for $y^{(r)}=-1$ is similar). Intuitively, we show that 
the sum of the aforementioned coefficients over all inputs in cluster $r$ and over all positive neurons (i.e., neurons with $v_j>0$) is large. Namely, the positive neurons have a large component in the direction of cluster $r$. Then, using the near-orthogonality of the clusters, we show that given a fresh example from cluster $r$, the positive contribution of the component in the direction of cluster $r$ is sufficiently large, such that the output of $\cn_\btheta$ is positive.

More precisely, the main argument can be described as follows.
We denote $J := [m]$, $J_+ := \{j \in J: v_j > 0\}$, and $J_- := \{j \in J: v_j < 0\}$. Moreover, we denote $I := [n]$ and $Q := [k]$.
For 
%$q \in Q$ 
$q \in Q$ 
we denote $I^{(q)} = \{i \in I: \bx_i \text{ is in cluster } q\}$.
Consider the network's output for an input $\bx$ from cluster $r \in Q$ with $y^{(r)}=1$. We have
\begin{equation} \label{eq:network.output.idea}
    \cn_\btheta(\bx) =
    \sum_{j\in J_+} v_j \phi(\bw_j^\top \bx + b_j) + \sum_{j\in J_-} v_j \phi(\bw_j^\top \bx + b_j) \geq \sum_{j\in J_+} v_j (\bw_j^\top \bx + b_j) + \sum_{j\in J_-} v_j \phi(\bw_j^\top \bx + b_j), 
\end{equation}
where we used $\phi(z) \geq z$ for all $z\in \mathbb R$.  This suggests the following possibility: if we can ensure that $\sum_{j\in J_+} v_j (\bw_j^\top \bx + b_j)$ is large and positive while $\sum_{j\in J_-} v_j \phi(\bw_j^\top \bx+b_j)$ is not too negative, then the network will accurately classify the example $\bx$. Using 
Eq.~(\ref{eq:kkt condition w idea})~and~(\ref{eq:kkt condition b idea})
and that 
$y^{(r)} = 1$
(so $y_i=1$ for $i\in I^{(r)}$),  the first term in the above decomposition is equal to
\begin{align*}
    \sum_{j\in J_+} v_j (\bw_j^\top \bx + b_j) &= \sum_{j \in J_+} v_j \left[ \sum_{i\in I}  \lambda_i y_i v_j\phi'_{i,j}(\bx_i^\top \bx+1) \right]
    \\
    &= \sum_{j \in J_+} \left[ \left( \sum_{i \in I^{(r)}} \lambda_i v_j^2 \phi'_{i,j} (\bx_i^\top \bx + 1) \right) + \sum_{q \in Q \setminus \{r\}} \sum_{i \in I^{(q)}} \lambda_i y_i v_j^2 \phi'_{i,j} (\bx_i^\top \bx + 1) \right]
    \\
    &\geq   \left( \sum_{i \in I^{(r)}} \sum_{j \in J_+} \lambda_i v_j^2 \phi'_{i,j} (\bx_i^\top \bx + 1) \right) - \sum_{q \in Q \setminus \{r\}} \sum_{i \in I^{(q)}} \sum_{j \in J_+} \lambda_i v_j^2 \phi'_{i,j} | \bx_i^\top \bx + 1|~.
\end{align*}
Since $\bx$ comes from cluster $r$ and the clusters are nearly orthogonal, the pairwise correlations $\bx_i^\top \bx$ will be large and positive when $i\in I^{(r)}$ but will be small in magnitude when 
$i \in I^{(q)}$ for $q \neq r$.
Thus, we can hope that this term will be large and positive if we can show that the quantity $\sum_{i\in I^{(r)}} \sum_{j\in J^+} \lambda_i v_j^2 \phi'_{i,j}$ is not too small relative to the quantity 
$\sum_{q \in Q \setminus \{r\}} \sum_{i \in I^{(q)}} \sum_{j \in J_+} \lambda_i v_j^2 \phi'_{i,j}$.
By similar arguments, in order to show the second term in~\eqref{eq:network.output.idea} is not too negative, we need to understand how the quantity $\sum_{i\in I^{(q)}} \sum_{j\in J_-}  \lambda_i v_j^2 \phi'_{i,j}$ varies across different clusters $q \in Q$. 
Hence, in the proof we analyze how the quantities
\[ \sum_{i\in I^{(q)}} \sum_{j\in J_+} \lambda_i v_j^2 \phi'_{i,j}, \quad \sum_{i\in I^{(q)}} \sum_{j\in J_-} \lambda_i v_j^2 \phi'_{i,j}\]
relate to each other for different clusters $q \in Q$, and show that these quantities are all of the same order. Then, we conclude that w.h.p. $\bx$ is classified correctly. 

\ignore{
In the proof, we show using a careful analysis of Eq.~(\ref{eq:kkt condition w idea})~and~(\ref{eq:kkt condition b idea}) that w.h.p. $\cn_\btheta$ classifies correctly a fresh example. 
Note that by \eqref{eq:kkt condition w idea}, each neuron $j \in [m]$ is a linear combination of the inputs $\bx_1,\ldots,\bx_n$ with coefficients $\alpha_{i,j} := \lambda_i y_i v_j \phi'_{i,j}$. 
Let $q \in [k]$ be a cluster with $y^{(q)}=1$ (the argument for $y^{(q)}=-1$ is similar). Intuitively, we show that the near orthogonality of the clusters implies that the sum of the coefficients $\alpha_{i,j}$ over all inputs in cluster $q$ and over all positive neurons (i.e., neurons with $v_j>0$) is large\footnote{More precisely, we show that the sum of $v_j \alpha_{i,j}$ (over all inputs in cluster $q$ and all positive neurons) is large.}. Namely, the positive neurons have a large component in the direction of cluster $q$. Then, using the near-orthogonality of the clusters, we show that given a fresh example from cluster $q$, the positive contribution of the component in the direction of cluster $q$ is sufficiently large, such that the output of $\cn_\btheta$ is positive.
}%ignore

\section{Robustness} \label{sec:robustness}

We begin by introducing the definition of $R(\cdot)$-robustness.
\begin{definition}\label{def:robustness}
    Given some function $R(\cdot)$, we say that a neural network $\cn_\btheta$ is \emph{$R(d)$-robust} w.r.t. a distribution $\cd_\bx$ over $\reals^d$ if for every $r = o(R(d))$, with probability $1-o_d(1)$ over $\bx \sim \cd_\bx$, for every $\bx' \in \reals^d$ with $\norm{\bx - \bx'} \leq r$ we have $\sign(\cn_\btheta(\bx')) = \sign(\cn_\btheta(\bx))$.
\end{definition}
Thus, a neural net $\cn_\btheta$ is $R(d)$-robust if changing the label of an example cannot be done with a perturbation of size $o(R(d))$.
Note that we consider here $\ell_2$ perturbations.

%SF: Do we need to talk about homogeneity here?  What does it provide the reader?
\ignore{ 
As we mentioned in Section~\ref{sec:prelim}, since for a homogeneous $\Phi$ we have
%Hence, 
$\sign\left( \Phi(\alpha \btheta; \bx) \right) = \sign\left( \Phi(\btheta; \bx)\right)$ for every $\alpha>0$ and $\bx$, the robustness of the network depends only on the direction of $\frac{\btheta}{\norm{\btheta}}$, and does not depend on the scale of $\btheta$. 
}

For the distribution $\Dclust$ under consideration, it is straightforward to show that classifiers cannot be $R(d)$-robust if $R(d) = \omega(\sqrt d)$: since the distance between examples in different clusters is w.h.p. $\co(\sqrt d)$, it is clearly possible to flip the sign of an example with a perturbation of size $\co(\sqrt d)$.  In particular, the best we can hope for is $\sqrt d$-robustness.  In the following theorem, we show that there exist two-layer ReLU networks which can both achieve small test error and the optimal level of $\sqrt d$-robustness. 

\ignore{
Consider the distribution $\Dclust$, and let $\cd_\bx$ be the marginal distribution on $\reals^d$. Thus, $\cd_\bx$ is a Gaussian mixture.  
In the distribution $\cd_\bx$, the distance between two inputs is w.h.p. $\co(\sqrt{d})$, and therefore in any network with small error w.r.t. $\Dclust$, a perturbation of size $\co(\sqrt{d})$ clearly suffices for flipping the sign of the output. Hence, the best we can hope for is $\sqrt{d}$-robustness. We will now show that a $\sqrt{d}$-robust network indeed exists, but gradient flow converges to a network where we can flip the sign of the output w.h.p. with perturbations of size much smaller than $\sqrt{d}$ (and hence it is not $\sqrt{d}$-robust).

We start by showing that there exists a robust network with a small test error for the distribution $\Dclust$:
}

\begin{theorem} \label{thm:robust exists}
    For every $r \geq k$, there exists a depth-$2$ ReLU network $\cn:\reals^d \to \reals$ of width $r$ such that for $(\bx,y) \sim \Dclust$, with probability at least $1-d^{-\omega_d(1)}$ we have $y\cn(\bx) \geq 1$, and flipping the sign of the output requires a perturbation of size larger than $\frac{\sqrt{d}}{8}$ (for a sufficiently large $d$). Thus, $\cn$ classifies the data correctly w.h.p., and it is $\sqrt{d}$-robust w.r.t. $\cd_\bx$.
\end{theorem}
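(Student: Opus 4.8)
The plan is to exhibit an explicit width-$r$ network that classifies ``by nearest cluster mean,'' realized with one ReLU unit per cluster plus inert padding. Fix the threshold $\tau := d/2$, set $\bw_q := \bmu^{(q)}$, $b_q := -\tau$, and $v_q := y^{(q)}$ for $q \in [k]$, and let the remaining $r-k$ units be inert (e.g.\ $\bw_j := \zero$, $b_j := 0$, $v_j := 0$), so that
\[
    \cn(\bx) = \sum_{q \in [k]} y^{(q)}\, \phi\!\left( \inner{\bmu^{(q)}, \bx} - \tfrac{d}{2} \right)~.
\]
The point is that $\cn$ depends on $\bx$ only through the $k$ correlations $\inner{\bmu^{(q)}, \bx}$, and that any perturbation $\bz$ changes each of them by at most $\snorm{\bmu^{(q)}}\,\snorm{\bz} = \sqrt{d}\,\snorm{\bz}$; so both the margin bound and the robustness bound reduce to one-dimensional bookkeeping.

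First I would record the needed concentration. For $(\bx,y)\sim\Dclust$ with $\bx$ drawn from cluster $s$, write $\bx = \bmu^{(s)} + \bxi$ with $\bxi \sim \ndist(\zero, \sigma^2 I_d)$; then $\inner{\bmu^{(q)}, \bxi} \sim \ndist(0, \sigma^2 d)$, so a Gaussian tail bound plus a union bound over $q \in [k]$ --- using that the third part of \assref{ass:dist} forces $k \leq d$ --- gives that with probability $1 - d^{-\omega_d(1)}$ we have $|\inner{\bmu^{(q)}, \bxi}| \leq 4\sigma\sqrt{d}\ln(d)$ for every $q\in[k]$. On this event $\inner{\bmu^{(s)}, \bx} = d + \inner{\bmu^{(s)},\bxi} \geq d - 4\sigma\sqrt{d}\ln(d)$, while for $q \neq s$ we have $|\inner{\bmu^{(q)}, \bx}| \leq \max_{i\neq j}|\inner{\bmu^{(i)},\bmu^{(j)}}| + 4\sigma\sqrt{d}\ln(d) \leq \tfrac{d+1}{10}$, the last inequality being exactly the third part of \assref{ass:dist} together with $k \geq 1$.

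Both conclusions are then short computations on this event. For correctness: since $\tfrac{d+1}{10} < \tfrac d2 < d - 4\sigma\sqrt d\ln(d)$ for large $d$, only the $q=s$ unit is active, so $\cn(\bx) = y^{(s)}\big(\inner{\bmu^{(s)},\bx} - \tfrac d2\big)$ and $y\,\cn(\bx) = \inner{\bmu^{(s)},\bx} - \tfrac d2 \geq \tfrac d2 - 4\sigma\sqrt d\ln(d) \geq 1$. For robustness: let $\snorm{\bz} \leq \sqrt d/8$, so $|\inner{\bmu^{(q)},\bz}| \leq d/8$ for every $q$; then for $q \neq s$, $\inner{\bmu^{(q)},\bx+\bz} \leq \tfrac{d+1}{10} + \tfrac d8 < \tfrac d2$ (that unit stays inactive), and $\inner{\bmu^{(s)},\bx+\bz} \geq d - 4\sigma\sqrt d\ln(d) - \tfrac d8 > \tfrac d2$ for large $d$ (the $q=s$ unit stays active), hence $\cn(\bx+\bz) = y^{(s)}\big(\inner{\bmu^{(s)},\bx+\bz} - \tfrac d2\big)$ has the same nonzero sign as $\cn(\bx)$. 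So on the good event no $\ell_2$-perturbation of norm $\leq \sqrt d/8$ flips the sign, and since every $r = o(\sqrt d)$ is eventually $\leq \sqrt d/8$, Definition~\ref{def:robustness} yields $\sqrt d$-robustness w.r.t.\ $\cd_\bx$.

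The only design choice --- hence the only place something could go wrong --- is the threshold $\tau$: it must simultaneously leave room for a classification margin (the drop from $\inner{\bmu^{(s)},\bx}\approx d$ to $\tau$ must be $\geq 1$) and for a buffer of $\sqrt d\cdot\tfrac{\sqrt d}{8} = \tfrac d8$ on \emph{both} sides of the decision, so that a perturbation of norm $\tfrac{\sqrt d}{8}$ neither deactivates the correct unit nor activates a wrong one. This is possible precisely because \assref{ass:dist} makes the gap between the in-cluster correlation $\Theta(d)$ and the cross-cluster correlations $\co(d/k)$ of order $\Omega(d) \gg \tfrac d8$; beyond choosing $\tau$ and bookkeeping the constants, nothing here is subtle.
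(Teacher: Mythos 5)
Your proposal is correct and follows essentially the same approach as the paper: one ReLU unit per cluster, pointed at the cluster mean with a negative bias so that only the matching unit is active on a typical sample, with classification margin and robustness both following from the near-orthogonality of cluster means and a Cauchy--Schwarz bound on how much a small $\ell_2$-perturbation can move each $\inner{\bmu^{(q)},\cdot}$. The only cosmetic difference is the scaling of the construction (the paper takes $\bw_j = 4\bmu^{(j)}/d$, $b_j = -2$, which is your network multiplied by $4/d$), yielding the same predictor up to a positive constant factor, and the paper handles extra width by padding with zero-weight neurons exactly as you do.
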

 
Thus, we see that $\sqrt d$-robust networks exist.  In the following theorem, we show that the implicit bias of gradient flow constrains the level of robustness of \textit{trained} networks whenever the number of clusters $k$ is large. 

\begin{theorem} \label{thm:non-robust}
        Let $\epsilon, \delta \in (0,1)$.
	Let $\cs = \{(\bx_i,y_i)\}_{i=1}^n \subseteq \reals^d \times \{-1,1\}$ be a training set drawn i.i.d. from the distribution $\Dclust$, where $n \geq k \ln^2(d)$. 
	We denote $Q_+ = \{q \in [k]: y^{(q)} = 1\}$ and $Q_- = \{q \in [k]: y^{(q)} = -1\}$, and assume that $\min\left\{\frac{|Q_+|}{k},\frac{|Q_-|}{k} \right\} \geq c$ for some $c > 0$.
	Let $\cn_\btheta$ be a depth-$2$ ReLU network such that $\btheta = [\bw_1,\ldots,\bw_m,\bb,\bv]$ is a KKT point of Problem~(\ref{eq:optimization problem}). 
	Provided $d$ is sufficiently large such that
        $\delta^{-1} \leq \frac{1}{3} d^{\ln(d) - 1}$
        and 
        \[
            n \leq \min\left\{ \sqrt{\frac{\delta}{3}} \cdot e^{d/32}, \frac{\sqrt{\delta}}{3} \cdot d^{\ln(d)/4}, \frac{\epsilon}{4} \cdot d^{\ln(d)/2} \right\}~,
        \]
        with probability at least $1-\delta$ over $\cs$, there is a vector $\bz = \eta \cdot \sum_{j \in [k]} y^{(j)} \bmu^{(j)}$ with $\eta > 0$ and $\norm{\bz} \leq \co\left( \sqrt{\frac{d}{c^2 k}} \right)$, such that 
	\[
		\Pr_{(\bx,y) \sim \Dclust} \left[ \sign(\cn_\btheta(\bx)) \neq \sign(\cn_\btheta(\bx - y \bz)) \right] 
		\geq 1 - \epsilon~.
	\]
\end{theorem}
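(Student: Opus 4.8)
The plan is to exploit the KKT structure already analyzed in the proof of \thmref{thm:generalizaton}, and to exhibit an explicit universal perturbation. Let $\bu = \sum_{q \in [k]} y^{(q)} \bmu^{(q)}$ as in \lemref{lem:linearly separable}, and set $\bz = \eta \bu$ with $\eta = \frac{C_0}{ck}$ for a large enough absolute constant $C_0$. Using $\norm{\bmu^{(q)}} = \sqrt d$ and, by the third part of \assref{ass:dist}, $k^2 \max_{p \neq q}|\inner{\bmu^{(p)},\bmu^{(q)}}| \leq \frac{kd}{10}(1+o(1))$, we get $\norm{\bu}^2 = kd \pm \frac{kd}{10}(1+o(1)) = \Theta(kd)$, hence $\norm{\bz} = \eta\norm{\bu} = \co\!\left(\sqrt{d/(c^2 k)}\right)$ as claimed (enlarging $C_0$ only changes the constant). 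Condition on the event of probability at least $1-\delta$ over $\cs$ under which the structural estimates from the proof of \thmref{thm:generalizaton} hold. Writing $\beta_{i,j} := \lambda_i \phi'_{i,j} \geq 0$, $\sigma_{q,j} := \sum_{i \in I^{(q)}} \beta_{i,j}$, $\Sigma_j := \sum_{i \in [n]} \beta_{i,j}$ and $a_q^{\pm} := \sum_{j \in J_\pm} v_j^2 \sigma_{q,j} = \sum_{i \in I^{(q)}} \sum_{j \in J_\pm} v_j^2 \beta_{i,j}$, we will use the following facts from that analysis: for every $q \in Q_+$ one has $a_q^+ = \Theta(1/d)$ and $a_q^- = O(1/d^2)$ (negative neurons barely fire on support vectors of a positive cluster), and symmetrically for $q \in Q_-$; and $y_i \bx_i^\top \bu = d \pm O(d/k) = \Theta(d)$ for every $i \in [n]$. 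Both follow from near-orthogonality of the cluster means and concentration of the Gaussian noise, exactly as in the generalization proof. By the symmetry of $\Dclust$ and of the construction under flipping all labels, it suffices to treat a fresh $(\bx, y) \sim \Dclust$ whose cluster $r$ has $y^{(r)} = 1$, so that $\bx - y\bz = \bx - \bz$.

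I would then rewrite both network outputs using the KKT stationarity conditions~\eqref{eq:kkt condition w idea} and~\eqref{eq:kkt condition b idea}. Put $S_j(\bx) := \sum_{i \in [n]} \beta_{i,j} y_i (\bx_i^\top \bx + 1)$ and $T_j := \sum_{i \in [n]} \beta_{i,j} y_i \bx_i^\top \bu$; then $\bw_j^\top \bx + b_j = v_j S_j(\bx)$ and $\bw_j^\top \bz = v_j \eta T_j$, so $\bw_j^\top (\bx - \bz) + b_j = v_j (S_j(\bx) - \eta T_j)$. Since $v_j \phi(v_j t) = v_j^2 t^+$ for $j \in J_+$ and $v_j \phi(v_j t) = -v_j^2 (-t)^+$ for $j \in J_-$ (writing $t^+ := \max\{t,0\}$), we obtain
\[
    \cn_\btheta(\bx) = \sum_{j \in J_+} v_j^2\, S_j(\bx)^+ - \sum_{j \in J_-} v_j^2\, (-S_j(\bx))^+, \qquad \cn_\btheta(\bx - \bz) = \sum_{j \in J_+} v_j^2 (S_j(\bx) - \eta T_j)^+ - \sum_{j \in J_-} v_j^2 (\eta T_j - S_j(\bx))^+ .
\]
By \thmref{thm:generalizaton}, on an event of probability at least $1-\epsilon$ over $\bx$ we have $\cn_\btheta(\bx) > 0$; this event only involves the quantities $\bx_i^\top \bx$ (large for $i \in I^{(r)}$, of magnitude at most $\delta_2 := \max_{i \neq j}|\inner{\bmu^{(i)},\bmu^{(j)}}| + 4\sigma\sqrt d \ln(d) + 1 \leq \frac{d+1}{10k}$ for $i$ outside cluster $r$), which is exactly what is needed below, together with $T_j = \Theta(d)\,\Sigma_j$ from the $\cs$-event. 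It remains to show $\cn_\btheta(\bx - \bz) < 0$.

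The key inequality to establish is $\sum_{j \in J_+} v_j^2 (S_j(\bx) - \eta T_j)^+ < \sum_{j \in J_-} v_j^2 (\eta T_j - S_j(\bx))^+$. For the left side: $S_j(\bx) \leq (d + o(d))\sigma_{r,j} + \delta_2 \Sigma_j$, while $\eta T_j \geq \frac{4\eta d}{5}\Sigma_j \geq 2\delta_2 \Sigma_j$ once $C_0$ is large enough; hence $(S_j(\bx) - \eta T_j)^+ \leq (d + o(d))\sigma_{r,j}$ for \emph{every} $j \in J_+$, and summing gives $\sum_{j \in J_+} v_j^2 (S_j(\bx) - \eta T_j)^+ \leq (d + o(d))\, a_r^+ = 1 + o(1)$ --- crucially this bound holds uniformly over the test cluster $r \in Q_+$, since $a_r^+ = \Theta(1/d)$ for all such $r$. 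For the right side: for $j \in J_-$ the quantity $\sigma_{r,j}$ is small (as $a_r^- = O(1/d^2)$), so $|S_j(\bx)| \leq (d+o(d))\sigma_{r,j} + \delta_2 \Sigma_j$; restricting to the set $J_-^{\mathrm{sm}}$ of $j \in J_-$ with $(d+o(d))\sigma_{r,j} \leq \frac{\eta d}{4}\Sigma_j$ (which carries all but an $O(1/(\eta d^2))$ fraction of the $v_j^2 \Sigma_j$-mass, by the bound on $a_r^-$), we get $(\eta T_j - S_j(\bx))^+ \geq \frac{\eta d}{2}\Sigma_j$ there, hence
\[
    \sum_{j \in J_-} v_j^2 (\eta T_j - S_j(\bx))^+ \;\geq\; \frac{\eta d}{2}\sum_{j \in J_-^{\mathrm{sm}}} v_j^2 \Sigma_j \;\geq\; \frac{\eta d}{2}\Bigl(\textstyle\sum_{q} a_q^- - o(1/d)\Bigr) \;=\; \Omega\!\left(\eta d \cdot \frac{|Q_-|}{d}\right) \;=\; \Omega\!\left(\frac{|Q_-|}{ck}\right) \;=\; \Omega(1),
\]
using $\sum_q a_q^- \geq \sum_{q \in Q_-} a_q^- = \Omega(|Q_-|/d)$ and $|Q_-| \geq ck$. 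Choosing $C_0$ large enough that the $\Omega(1)$ on the right strictly exceeds the $1 + o(1)$ on the left for all large $d$ gives $\cn_\btheta(\bx - \bz) < 0 < \cn_\btheta(\bx)$, i.e. the sign is flipped. Accounting for the $\delta$ probability of the bad $\cs$-event and the $\epsilon$ probability of the bad $\bx$-event (no further randomness of $\bx$ is used, since $S_j, T_j$ depend on $\bx$ only through $\{\bx_i^\top \bx\}$), and treating $y^{(r)} = -1$ symmetrically with $Q_+ \leftrightarrow Q_-$, yields $\Pr_{(\bx,y)}[\sign(\cn_\btheta(\bx)) \neq \sign(\cn_\btheta(\bx - y\bz))] \geq 1 - \epsilon$.

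The main obstacle is importing the per-cluster two-sided control $a_q^+ = \Theta(1/d)$ for $q \in Q_+$ (and $a_q^- = O(1/d^2)$ there): this is what forces the positive-neuron sum to be $O(1)$ \emph{uniformly over the test cluster} rather than only in expectation, and it is itself the technical core of the generalization proof --- it relies on $n \geq k \ln^2(d)$ so that every cluster appears with enough support vectors, and on near-orthogonality to decouple the clusters. The remaining effort is routine: tracking the $o(1)$ error terms (all controlled by \assref{ass:dist}) and fixing the absolute constant $C_0$ in $\eta$ so that the negative-neuron lower bound dominates the positive-neuron upper bound, which costs only a constant factor in $\norm{\bz}$.
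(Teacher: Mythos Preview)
Your high-level plan --- set $\bz = \eta \bu$ with $\eta = \Theta(1/(ck))$, bound $\norm{\bz}$, and compare positive versus negative neuron contributions after perturbation --- is the same as the paper's. But your execution diverges from the paper and rests on one claim that is not true as stated.

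\textbf{The gap.} You assert that for $q \in Q_+$ one has $a_q^- = O(1/d^2)$ and that this ``is itself the technical core of the generalization proof.'' It is not: nothing in the generalization analysis yields a $1/d^2$ bound. What Case~2 of the proof of \lemref{lem:lam upper bound} (run with $J_+ \leftrightarrow J_-$) does give is $a_q^- \le c' \max_{q'} a_{q'}^- \le \frac{c'}{(1-2c')(d-\Delta+1)}$ for $q \in Q_+$, i.e.\ only $O(c'/d)$ with $c' \le 1/10$. Fortunately this weaker bound still rescues your argument: the mass of $J_- \setminus J_-^{\mathrm{sm}}$ is at most $\frac{4}{\eta}a_r^- = O(c' ck/d)$, which is an $O(c'/C_0)$ \emph{fraction} of the total $J_-$ mass $\sum_q a_q^- = \Omega(ck/d)$, so for a large enough absolute constant $C_0$ the set $J_-^{\mathrm{sm}}$ retains at least half the mass. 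Your statement ``all but an $O(1/(\eta d^2))$ fraction'' must be replaced by this constant-fraction loss, and the final constant-tuning redone accordingly. (A smaller slip: $y_i \bx_i^\top \bu = d \pm O(c' d)$, not $d \pm O(d/k)$; the conclusion $=\Theta(d)$ is unaffected.)

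\textbf{How the paper proceeds instead.} The paper avoids any per-neuron control of $\sigma_{r,j}$ for $j \in J_-$ by proving a uniform lower bound on neuron inputs at nice test points: $\bw_j^\top \bx + b_j \ge -(2\Delta+p+1)\sum_i \lambda_i|v_j|\phi'_{i,j}$ (\lemref{lem:neuron input not small}). The key observation there is that if $j \in J_-$ and the input at $\bx$ were very negative, then the input at every \emph{training} point in cluster $r$ would also be negative, forcing $\phi'_{i',j}=0$ for all $i' \in I^{(r)}$ and hence making the input not very negative after all. This enables a clean two-step perturbation: first move by $\eta_1 \bu$ (with $\eta_1 = \frac{2\Delta+p+1}{d-\Delta-k(p+\Delta)}$) so that \emph{all} $J_-$ neuron inputs become nonnegative (\lemref{lem:neurons input become positive}); then move by $\eta_2 \bu$ with every $J_-$ neuron now in its linear regime, so the negative contribution grows exactly by $\eta_2(d-\Delta-k(p+\Delta))\sum_q a_q^-$ (\lemref{lem:pert flips1}). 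This sidesteps the need for a ``bad neuron'' set $J_-^{\mathrm{sm}}$ and the attendant $C_0$-tuning. Your direct ReLU-bounding route is a legitimate alternative once the $a_r^- = O(c'/d)$ bound is stated and proved, but you should not invoke a nonexistent $O(1/d^2)$ fact.
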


\ignore{
\begin{theorem} \label{thm:non-robust}
	Let $\cs = \{(\bx_i,y_i)\}_{i=1}^n \subseteq \reals^d \times \{-1,1\}$ be a training set drawn i.i.d. from the distribution $\Dclust$, where $n \geq k \ln^2(d)$ and is at most polynomial in $d$. 
	We denote $Q_+ = \{q \in [k]: y^{(q)} = 1\}$ and $Q_- = \{q \in [k]: y^{(q)} = -1\}$, and assume that $\min\left\{\frac{|Q_+|}{k},\frac{|Q_-|}{k} \right\} \geq c$ for some $c > 0$.
	Let $\cn_\btheta$ be a depth-$2$ ReLU network such that $\btheta = [\bw_1,\ldots,\bw_m,\bb,\bv]$ is a KKT point of Problem~(\ref{eq:optimization problem}). %w.r.t. the dataset $\cs$.
	Then, 
        %for a sufficiently large $d$, 
        with probability at least
	\[
		1- \left(3 n^2 d^{-\frac{\ln(d)}{2}} + n^2 e^{-d/16} + d^{1-\ln(d)} \right) 
		\geq 1 - d^{-\omega_d(1)} 
		%= 1 - o_d(1)
	\]	
	over $\cs$, there is a vector $\bz = \eta \cdot \sum_{j \in [k]} y^{(j)} \bmu^{(j)}$ with $\eta > 0$ and $\norm{\bz} \leq \co\left( \sqrt{\frac{d}{c^2 k}} \right)$, such that 
	\[
		\Pr_{(\bx,y) \sim \Dclust} \left[ \sign(\cn_\btheta(\bx)) \neq \sign(\cn_\btheta(\bx - y \bz)) \right] 
		\geq 1 - 4 n d^{-\frac{\ln(d)}{2}}
		\geq 1 - d^{-\omega_d(1)}~.
		%= 1 - o_d(1)~.
	\]
\end{theorem}
}%ignore

Note that the expressions in the upper bounds on $n$ and $\delta^{-1}$ are super-polynomial in $d$, and hence these requirements are mild (e.g., they hold for a sufficiently large $d$ when $n,\delta^{-1},\epsilon^{-1} = \poly(d)$).  As we mentioned in the discussion following \thmref{thm:generalizaton}, we believe removing the requirement for an upper bound on $n$ would be highly nontrivial.  

\thmref{thm:non-robust} implies that if $c^2 k=\omega_d(1)$, then w.h.p. over the training dataset, every KKT point of Problem~(\ref{eq:optimization problem}) is not $\sqrt{d}$-robust. 
Specifically, if $c$ is constant, namely, at least a constant fraction of the clusters have positive labels and a constant fraction of the clusters have negative labels, then the network is not $\sqrt{d}$-robust if $k=\omega_d(1)$.
Recall that by \thmref{thm:generalizaton}, we also have w.h.p. that every KKT point generalizes well. Overall, combining Theorems~\ref{thm:known KKT},~\ref{thm:generalizaton},~\ref{thm:robust exists}, and~\ref{thm:non-robust}, we conclude that for $c^2 k=\omega_d(1)$, w.h.p. over a training dataset of size $n \geq k \ln^2(d)$, if gradient flow reaches empirical loss smaller than $\frac{1}{n}$, then it converges in direction to a neural network that generalizes well but is not $\sqrt{d}$-robust, even though there exist $\sqrt{d}$-robust networks that generalize well. Thus, in our setting, there is bias towards solutions that generalize well but are non-robust.

\ignore{
The above theorem implies that if $c^2 k=\omega_d(1)$, then w.h.p. over the training dataset, every KKT point of Problem~(\ref{eq:optimization problem}) is not $\sqrt{d}$-robust. Recall that by \thmref{thm:generalizaton}, we also have w.h.p. that every KKT point generalizes well. Overall, combining Theorems~\ref{thm:known KKT},~\ref{thm:generalizaton},~\ref{thm:robust exists}, and~\ref{thm:non-robust}, we conclude that for $c^2 k=\omega_d(1)$, w.h.p. over a training dataset of size $k \ln^2(d) \leq n \leq \poly(d)$, if gradient flow reaches empirical loss smaller than $\frac{1}{n}$, then it converges in direction to a neural network that generalizes well but is not $\sqrt{d}$-robust, even though there exist $\sqrt{d}$-robust networks that generalize well. Thus, in our setting, there is bias towards solutions that generalize well but are non-robust.
}%ignore

\begin{example}
    Consider the setting from the first item of \exampleref{ex:dist}. Thus, the cluster means satisfy $|\inner{\bmu^{(i)}, \bmu^{(j)}} | = \tilde{\co}(\sqrt{d})$ for every $i \neq j$, and we have $\sigma=1$ and $k = \tilde{\Theta}(\sqrt{d})$. Suppose that $c=\Theta(1)$, namely, there is at least a constant fraction of clusters with each label $\{-1,1\}$. Then, the adversarial perturbation $\bz$ from \thmref{thm:non-robust} satisfies 
    \[
        \norm{\bz} 
        = \co\left( \sqrt{\frac{d}{k}} \right)
        = \tilde{\co}\left( d^{1/4} \right)
        = o(\sqrt{d})~.
    \]
\end{example}

Similarly to our discussion after \thmref{thm:generalizaton}, we note that \thmref{thm:non-robust} is width-independent, i.e., it holds irrespective of the network width. It implies that we cannot hope to obtain a robust solution by choosing an appropriate width for the trained network.
As we discussed in the related work section, \citet{bubeck2021law} and \citet{bubeck2021universal} considered the expressive power of neural networks, and showed that overparameterization might be necessary for robustness. By \thmref{thm:non-robust}, even when the network is overparameterized, the implicit bias of the optimization method 
can 
prevent convergence to robust solutions.  

Moreover, our result does not depend directly on the initialization of gradient flow. Recall that by \thmref{thm:known KKT} if gradient flow reaches small empirical loss then it converges in direction to a KKT point of Problem~(\ref{eq:optimization problem}). Hence our result holds whenever gradient flow reaches a small empirical loss.

Note that in \thmref{thm:non-robust}, the adversarial perturbation does not depend on the input (up to sign). It corresponds to the well-known empirical phenomenon of \emph{universal adversarial perturbations}, where one can find a single perturbation that simultaneously flips the label of many inputs (cf. \cite{moosavi2017universal,zhang2021survey}).
Moreover, the same perturbation applies to all depth-$2$ networks to which gradient flow might converge (i.e., all KKT points). It corresponds to the well-known empirical phenomenon of \emph{transferability} in adversarial examples, where one can find perturbations that simultaneously flip the labels of many different trained networks (cf. \cite{liu2017delving,akhtar2018threat}).

It is worth noting that Theorems~\ref{thm:generalizaton} and~\ref{thm:non-robust} demonstrate that trained neural networks exhibit different properties than the $1$-nearest-neighbour learning rule, irrespective of the number of parameters in the network. For example, consider the case where $\sigma=\frac{1}{\sqrt{d}}$, namely, the examples of each cluster are concentrated within a ball of radius $O(1)$ around its mean. Then, the distance between every pair of points from the same cluster is $O(1)$, and the distance between points from different clusters is $\Omega(\sqrt{d})$. In this setting, 
both the $1$-nearest-neighbour classifier and the trained neural network will classify a fresh example correctly w.h.p., but in the $1$-nearest-neighbour classifier flipping the output's sign will require a perturbation of size $\Omega(\sqrt{d})$, while in the neural network a much smaller perturbation will suffice. 

Finally, we remark that in the limit $\sigma \to 0$, we get a distribution supported on $\bmu^{(1)},\ldots,\bmu^{(k)}$. Then, a training dataset of size $n \geq k \ln^2(d)$ will contain w.h.p. all examples in the support, and hence robustness w.r.t. test data is equivalent to robustness w.r.t. the training data.  In this case, we recover the results of~\citet{vardi2022gradient} which characterized the non-robustness of KKT points of ReLU networks trained on nearly orthogonal training data.  In particular, our \thmref{thm:non-robust} is a strict generalization of their Theorem~4.1.
\ignore{ 
Therefore, in this case \thmref{thm:non-robust} is similar to the non-robustness guarantee w.r.t. nearly-orthogonal training data that was established in \citet{vardi2022gradient}. Indeed, for the training examples $\{\bmu^{(1)},\ldots,\bmu^{(k)}\}$, Theorem~4.1 in \cite{vardi2022gradient} guarantees an adversarial perturbation of size $\co\left( \sqrt{\frac{d}{c^2 k}} \right)$ for each example in the dataset, which is exactly the bound from \thmref{thm:non-robust}. 
}

\subsection{Proof ideas}

Here we discuss the main ideas in the proofs of Theorems~\ref{thm:robust exists} and~\ref{thm:non-robust}. See Appendices~\ref{app:proof of robust exists} and~\ref{app:proof of non-robust} for the formal proofs.

\subsubsection{\thmref{thm:robust exists}}

The proof follows by the following simple construction.
The robust network includes $k$ neurons, each corresponding to a single cluster. 
That is, we have $\cn(\bx) = \sum_{j=1}^k v_j \sigma(\bw^\top \bx + b_j)$, where $v_j=y^{(j)}$, $\bw_j = \frac{4\mu^{(j)}}{d}$, and $b_j=-2$.
Note that the $j$-th neuron points at the direction of the $j$-th cluster and has a negative bias term, such that the neuron is active on points from the $j$-th cluster, and inactive on points from the other clusters. 
Then, given a fresh example $(\bx,y) \sim \Dclust$, we show that the network classifies it correctly w.h.p. with margin at least $1$. 
Also, there is w.h.p exactly one neuron that is active on $\bx$, and hence the gradient of the network w.r.t. the input is affected only by this neuron and is of size $\co(1/\sqrt{d})$. Therefore, we need a perturbation of size $\Omega(\sqrt{d})$ in order to flip the output's sign.    
\ignore{
We remark that a similar construction was used in \cite{vardi2022gradient} to show the existence of networks that are robust w.r.t. the training data for nearly-orthogonal datasets.
}

\subsubsection{\thmref{thm:non-robust}}

While the proof of \thmref{thm:robust exists} follows by a simple construction, the proof of \thmref{thm:non-robust} is more challenging.
The intuition for this result can be described as follows. Recall that in our construction of a robust network in the previous subsection, an example $(\bx,y)\sim \Dclust$ is w.h.p. in an active region of exactly one neuron, and hence in the neighborhood of $\bx$ the output of the network is sensitive only to perturbations in the direction of that neuron. Now, consider the linear model $\bx \mapsto \bw^\top \bx$, where $\bw = \sum_{q=1}^k \frac{1}{d} y^{(q)} \bmu^{(q)}$. It is not hard to verify that for $(\bx,y) \sim \Dclust$ we have w.h.p. that $0 < y \bw^\top \bx \leq \co(1)$. Moreover, the gradient of this linear predictor is of size $\norm{\bw} = \Omega(\sqrt{k/d})$. Hence, we can flip the output's sign with a perturbation of size $\co(\sqrt{d/k})$. Thus, the linear classifier is non-robust if $k=\omega_d(1)$. Intuitively, the difference between our robust ReLU network and the non-robust linear classifier is the fact that in the neighborhood of $\bx$ the robust network is sensitive only to perturbations in the direction of one cluster, while the linear classifier is sensitive to perturbations in the directions of all $k$ clusters. In the proof, we analyze ReLU networks which are KKT points of Problem~(\ref{eq:optimization problem}), and show that although these ReLU networks are non-linear, they are still sensitive to perturbations in the directions of all $k$ clusters, similarly to the aforementioned linear classifier. 

The formal proof follows by a careful analysis of the KKT conditions of Problem~(\ref{eq:optimization problem}),
given in Eq.~(\ref{eq:kkt condition w idea}) and~(\ref{eq:kkt condition b idea}).
We show that for every solution $\btheta$ of these equations, we have w.h.p. over $(\bx,y) \sim \Dclust$ that $|\cn_\btheta(\bx)| = \co(1)$, and that a perturbation in the direction of $\bz$ and size $s$ changes the output by $\Omega\left(s \cdot \sqrt{\frac{c^2 k}{d}} \right)$. 
Recall that in \subsecref{subsec:generalization ideas}, we mentioned that Eq.~(\ref{eq:kkt condition w idea}) and~(\ref{eq:kkt condition b idea}) imply that the positive (respectively, negative) neurons have ``large'' components in the directions of all positive (respectively, negative) clusters. In the proof, we use this property in order to show that perturbations in the direction of $\bz$, which has components in the directions of all $k$ clusters, change the network's output sufficiently fast.

We remark that in the proof of \thmref{thm:non-robust} we use some technical ideas from \citet{vardi2022gradient}. However, there are significant differences between the two settings. For example, they assume that the training data are nearly orthogonal, which only holds when the dimension is large relative to the number of samples; thus, it is unclear whether the existence of small adversarial perturbations in their setting is due to the high-dimensionality of the data or if a similar phenomenon exists in the more common $n> d$ setting.  At a more technical level, their proof relies on showing that in a KKT point all inputs must lie exactly on the margin, while in our setting they are not guaranteed to lie exactly on the margin.

\section{Discussion} \label{sec:discussion}

In this paper, we considered clustered data, and showed that gradient flow in two-layer ReLU networks does not 
harmfully 
overfit, but also 
hinders 
robustness. 
Our results follow by analyzing the KKT points of the max-margin problem in parameter space.
%We believe that our work provides insight into a possible tradeoff between generalization and robustness in neural networks, and into the role of optimization algorithms in this tradeoff.
%We focused on a clustered data distribution where the clusters are well-separated.
In our distributional setting, the clusters are well-separated, and hence there exist robust classifiers, which allows us to consider the effect of the implicit bias of gradient flow on both generalization and robustness.
Understanding generalization and robustness in 
%more complex 
additional
data distributions and neural network architectures is a challenging but important question. As a possible next step, it would be interesting to study whether the approach used in this paper can be extended to the following data distributions:

First, our assumption on the data distribution (\assref{ass:dist}) implies that the number of clusters cannot be too large, and as a result the data is linearly separable (\lemref{lem:linearly separable}). We conjecture that our results hold even for a significantly larger number of clusters, such that the data is not linearly separable. 

Second, it would be interesting to understand whether our generalization result 
holds for linearly separable data distributions that are not clustered. That is, given a distribution that is linearly separable with some margin $\gamma>0$ and a training dataset that is large enough to allow learning with a max-margin linear classifier, are there KKT points of the max-margin problem for two-layer ReLU networks that do not generalize well? In other words, do ReLU networks that satisfy the KKT conditions generalize at least as well as max-margin linear classifiers? 

\subsection*{Acknowledgements}

SF, GV, PB, and NS acknowledge the support of the NSF and the Simons Foundation for the Collaboration on the Theoretical Foundations of Deep Learning through awards DMS-2031883 and \#814639, and of the NSF
through grant DMS-2023505.

%\bibliographystyle{abbrvnat}
%\bibliography{bib}
\printbibliography

\newpage

\appendix

%{\hypersetup{linkcolor=Black}\tableofcontents} 

\ignore{
\section{Preliminaries on the Clarke subdifferential and the KKT conditions}
\label{app:KKT}

Below we review the definition of the KKT conditions for non-smooth optimization problems (cf. \cite{lyu2019gradient,dutta2013approximate}).

Let $f: \reals^d \to \reals$ be a locally Lipschitz function. The Clarke subdifferential \citep{clarke2008nonsmooth} at $\bx \in \reals^d$ is the convex set
\[
	\partial^\circ f(\bx) := \text{conv} \left\{ \lim_{i \to \infty} \nabla f(\bx_i) \; \middle| \; \lim_{i \to \infty} \bx_i = \bx,\; f \text{ is differentiable at } \bx_i  \right\}~.
\]
If $f$ is continuously differentiable at $\bx$ then $\partial^\circ f(\bx) = \{\nabla f(\bx) \}$.
For the Clarke subdifferential the chain rule holds as an inclusion rather than an equation.
That is, for locally Lipschitz functions $z_1,\ldots,z_n:\reals^d \to \reals$ and $f:\reals^n \to \reals$, we have
\[
	\partial^\circ(f \circ \bz)(\bx) \subseteq \text{conv}\left\{ \sum_{i=1}^n \alpha_i \bh_i: \balpha \in \partial^\circ f(z_1(\bx),\ldots,z_n(\bx)), \bh_i \in \partial^\circ z_i(\bx) \right\}~.
\]

Consider the following optimization problem
\begin{equation}
\label{eq:KKT nonsmooth def}
	\min f(\bx) \;\;\;\; \text{s.t. } \;\;\; \forall n \in [N] \;\; g_n(\bx) \leq 0~,
\end{equation}
where $f,g_1,\ldots,g_n : \reals^d \to \reals$ are locally Lipschitz functions. We say that $\bx \in \reals^d$ is a \emph{feasible point} of Problem~(\ref{eq:KKT nonsmooth def}) if $\bx$ satisfies $g_n(\bx) \leq 0$ for all $n \in [N]$. We say that a feasible point $\bx$ is a \emph{KKT point} if there exists $\lambda_1,\ldots,\lambda_N \geq 0$ such that 
\begin{enumerate}
	\item $\zero \in \partial^\circ f(\bx) + \sum_{n \in [N]} \lambda_n \partial^\circ g_n(\bx)$;
	\item For all $n \in [N]$ we have $\lambda_n g_n(\bx) = 0$.
\end{enumerate}
}%ignore

\section{Proof of \thmref{thm:generalizaton}} \label{app:generalization}

%In this section we prove Theorem~\ref{thm:generalizaton}. 

We will prove the following theorem:
\begin{theorem} \label{thm:generalizaton orig}
	Let $\cs = \{(\bx_i,y_i)\}_{i=1}^n \subseteq \reals^d \times \{-1,1\}$ be a training set drawn i.i.d. from the distribution $\Dclust$, where $n \geq k \ln^2(d)$. Let $\cn_\btheta$ be a depth-$2$ ReLU network such that $\btheta = [\bw_1,\ldots,\bw_m,\bb,\bv]$ is a KKT point of Problem~(\ref{eq:optimization problem}). 
	Then, with probability at least 
	\[
		1 - \left(3 n^2 d^{-\frac{\ln(d)}{2}} + n^2 e^{-d/16} + d^{1-\ln(d)} \right) 
	\]	
	over $\cs$, 
	we have 
	\[
		\Pr_{(\bx,y) \sim \Dclust} \left[ y \cn_\btheta(\bx)) \leq 0 \right] 
		\leq 4 n d^{-\frac{\ln(d)}{2}}~.
	\]
\end{theorem}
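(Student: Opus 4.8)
The plan is to show that with high probability over $\cs$, the dataset satisfies a list of ``good'' geometric events, and that conditioned on these events, every KKT point $\btheta$ classifies a fresh example $(\bx,y)$ correctly with high probability over $(\bx,y)$. We then take an expectation/union bound to convert the per-example guarantee into the stated bound on $\Pr_{(\bx,y)}[y\cn_\btheta(\bx)\le 0]$. The good events we need are: (i) for all $i\ne j$ in $[n]$, $|\bx_i^\top\bx_j - \inner{\bmu^{(a)},\bmu^{(b)}}|$ is small when $\bx_i,\bx_j$ lie in clusters $a,b$ (concentration of $\inner{\bxi,\be}$ and $\norm{\bxi}$ for Gaussian noise $\bxi$, via standard sub-Gaussian/chi-square tails), so in particular $\bx_i^\top\bx_j$ is $\Theta(d)$ if $i,j$ are in the same cluster and $\tilde{\co}(\sigma\sqrt d)$ in magnitude otherwise; (ii) each of the $k$ clusters is represented in $\cs$ (this is where $n\ge k\ln^2(d)$ is used, via a coupon-collector bound, contributing the $d^{1-\ln(d)}$ term); (iii) $\norm{\bx_i}^2 = d \pm \tilde{\co}(\sigma\sqrt d)$ for all $i$. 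The $n^2$ factors come from union-bounding the pairwise correlation events over all pairs in $\cs$, and the additional union over the fresh point contributes another factor of $n$ in the final error bound.

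The core of the argument is the deterministic analysis of the KKT conditions, Eq.~(\ref{eq:kkt condition w idea}) and~(\ref{eq:kkt condition b idea}), sketched in \secref{subsec:generalization ideas}. Fix a cluster $r$ with $y^{(r)}=1$ (the case $y^{(r)}=-1$ is symmetric). For each cluster $q$ define the nonnegative quantities $P_q := \sum_{i\in I^{(q)}}\sum_{j\in J_+}\lambda_i v_j^2\phi'_{i,j}$ and $M_q := \sum_{i\in I^{(q)}}\sum_{j\in J_-}\lambda_i v_j^2\phi'_{i,j}$. The first step is to show these are all of the same order across clusters: this follows by writing the margin constraint $y_i\cn_\btheta(\bx_i)=1$ for a representative $i$ of each cluster, expanding $\cn_\btheta(\bx_i)$ using the KKT expressions for $\bw_j,b_j$, and using that the diagonal correlations $\bx_i^\top\bx_{i'}\approx d$ dominate the off-diagonal ones $\approx \tilde{\co}(\sigma\sqrt d)$ by \assref{ass:dist} (the third bullet is exactly what makes $k\cdot(\text{off-diagonal}) \ll d$). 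This pins $P_q, M_q$ to a common scale, say within a constant factor of some $\beta>0$, up to lower-order error terms. The second step plugs this back into~\eqref{eq:network.output.idea} for a fresh $\bx$ from cluster $r$: the term $\sum_{j\in J_+}v_j(\bw_j^\top\bx+b_j) \ge P_r\cdot(d - \tilde{\co}(\sigma\sqrt d)) - \sum_{q\ne r}P_q\cdot\tilde{\co}(\sigma\sqrt d) \ge \beta\cdot\Omega(d) - \beta\cdot k\cdot\tilde{\co}(\sigma\sqrt d) = \beta\cdot\Omega(d)$, again using the third bullet of \assref{ass:dist}; meanwhile $|\sum_{j\in J_-}v_j\phi(\bw_j^\top\bx+b_j)|$ is bounded using $\phi$ being $1$-Lipschitz and $\phi(0)=0$ by $\sum_q M_q\cdot\tilde{\co}(d\text{ or }\sigma\sqrt d)$, which, because $M_q$ are all of order $\beta$ and the direction-$r$ contribution must be controlled carefully, is shown to be $o(\beta d)$ or at most a small constant fraction of the positive term. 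Hence $\cn_\btheta(\bx)>0$.

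I expect the main obstacle to be the second step — controlling the negative-neuron contribution $\sum_{j\in J_-}v_j\phi(\bw_j^\top\bx+b_j)$. Unlike the positive term, we cannot use $\phi(z)\ge z$ to our advantage here (that inequality goes the wrong way), so we must bound $\phi(\bw_j^\top\bx+b_j) = \phi\big(\sum_i\lambda_i y_i v_j\phi'_{i,j}(\bx_i^\top\bx+1)\big)$ from above in magnitude, and crucially show that the contribution coming from the $i\in I^{(r)}$ terms (where $\bx_i^\top\bx\approx d$ is large) does \emph{not} blow up — intuitively because a negative neuron that is strongly active on cluster $r$ would push the margin constraints for cluster-$r$ points the wrong way, but making this precise requires again invoking the margin equalities and the relation between $P_r$ and $M_r$. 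The bookkeeping of the various $\tilde{\co}$ error terms, and verifying that every one of them is dominated using the slack in \assref{ass:dist} (the factor $1/10$) and the range of $n,\sigma,\delta$ assumed, is where most of the technical work lies; the probabilistic part is routine Gaussian concentration plus coupon collector.
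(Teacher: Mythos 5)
Your overall architecture matches the paper's: concentration bounds plus coupon-collector give the ``good dataset'' events; the KKT equations (\ref{eq:kkt condition w idea})--(\ref{eq:kkt condition b idea}) and the margin constraints pin the per-cluster sums $P_q := \sum_{i\in I^{(q)}}\sum_{j\in J_+}\lambda_i v_j^2\phi'_{i,j}$ and $M_q := \sum_{i\in I^{(q)}}\sum_{j\in J_-}\lambda_i v_j^2\phi'_{i,j}$ to a common scale $\Theta(1/d)$ (the paper's Lemmas~\ref{lem:lam upper bound}--\ref{lem:lam lower bound}); and a fresh $\bx$ is then classified correctly via the decomposition (\ref{eq:network.output.idea}). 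The probability-side bookkeeping you describe is essentially the paper's.

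The gap is in the step you flagged yourself: controlling $\sum_{j\in J_-}v_j\phi(\bw_j^\top\bx+b_j)$ for $\bx$ from a positive cluster $r$. Your plan bounds it via $1$-Lipschitzness of $\phi$, which yields a term of order $(d+\Delta+1)\cdot M_r$ from the $i\in I^{(r)}$ part. But the paper's lemmas only give the two-sided bound $M_r,P_r = \Theta(1/d)$, so $(d+\Delta+1)\cdot M_r$ is $\Theta(1)$ --- the same order as the positive contribution $(d-\Delta+1)\cdot P_r$ --- and the Lipschitz bound cannot separate them. There is in fact no relation between $P_r$ and $M_r$ that would rescue this (for $r\in Q_+$ the paper proves a lower bound for $P_r$ only, not an upper bound for $M_r$ below the $\Theta(1/d)$ scale, and none is needed). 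The resolution is a sign observation you have not noted: for $r\in Q_+$ and $j\in J_-$, the KKT identity gives $\bw_j^\top\bx+b_j=\sum_{i\in I^{(r)}}\lambda_i v_j\phi'_{i,j}(\bx_i^\top\bx+1)+\sum_{q\neq r}\sum_{i\in I^{(q)}}\lambda_i y_i v_j\phi'_{i,j}(\bx_i^\top\bx+1)$, and since $v_j<0$, $\lambda_i,\phi'_{i,j}\geq 0$, and $\bx_i^\top\bx+1>0$ for $i\in I^{(r)}$, the cluster-$r$ term is non-positive. Hence the pre-activation is bounded above by the off-cluster contribution alone, so $\phi(\bw_j^\top\bx+b_j)\leq\phi\bigl(\sum_{q\neq r}\sum_{i\in I^{(q)}}\lambda_i|v_j|\phi'_{i,j}(p+\Delta+1)\bigr)$ and $\sum_{j\in J_-}v_j\phi(\bw_j^\top\bx+b_j)\geq -(p+\Delta+1)\sum_{q\neq r}M_q\geq -(p+\Delta+1)k\cdot\max_q M_q$, which is a small constant by \assref{ass:dist}; the $M_r\cdot\Theta(d)$ term never appears. (The same sign structure is also what makes the paper's upper-bound lemma, your $\alpha\lesssim 1/d$ step, go through; your plan invokes a ``margin equality for a representative $i$'' without distinguishing the feasibility inequality from the complementary-slackness equality, and without the case split the paper needs there.) Without this observation, the argument as written does not close.
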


It is easy to verify that \thmref{thm:generalizaton orig} implies \thmref{thm:generalizaton}. Indeed, if 
%$d \geq \frac{9}{\delta}$ 
$\frac{1}{\delta} \leq \frac{1}{3} d^{\ln(d) - 1}$
and 
\[
    n \leq \min\left\{ \sqrt{\frac{\delta}{3}} \cdot e^{d/32}, \frac{\sqrt{\delta}}{3} \cdot d^{\ln(d)/4}, \frac{\epsilon}{4} \cdot d^{\ln(d)/2} \right\}~,
\]    
then we have:
\begin{enumerate}
    \item 
        \begin{align*}
            3 n^2 d^{-\frac{\ln(d)}{2}}
            \leq 3 \left(\frac{\sqrt{\delta}}{3} d^{\ln(d)/4}\right)^2 d^{-\ln(d)/2}
            = 3 \cdot \frac{\delta}{9} \cdot d^{\ln(d)/2} d^{-\ln(d)/2}
            = \frac{\delta}{3}~.
        \end{align*}
    \item 
        \[
            n^2 e^{-d/16}
            \leq \left( \sqrt{\frac{\delta}{3}} \cdot e^{d/32} \right)^2 e^{-d/16}
            = \frac{\delta}{3}~.
        \]
    \item 
        \[  
            d^{1-\ln(d)} \leq \frac{\delta}{3}~.   
        \]
    \ignore{
        \begin{align*}
            d^{1-\ln(d)}
            &= \exp\left(\ln(d)(1-\ln(d))\right)
            \leq \exp\left(1-\ln(d)\right)
            \leq \exp\left(1-\ln(9/\delta)\right)
            \\
            &\leq \exp\left(1-\ln(3e/\delta)\right)
            = \exp\left(1-(1+\ln(3/\delta))\right)
            = \exp\left(-\ln(3/\delta))\right)
            = \frac{\delta}{3}~.
        \end{align*}
    }%ignore
    \item 
        \[
            4 n d^{-\frac{\ln(d)}{2}}
            \leq 4 \cdot \frac{\epsilon}{4} \cdot d^{\ln(d)/2} \cdot d^{-\ln(d)/2}
            = \epsilon~. 
        \]
\end{enumerate}
Hence, under the above assumptions on $\delta$ and $n$,  \thmref{thm:generalizaton orig} implies that with probability at least $1-\delta$ over $\cs$, we have $\Pr_{(\bx,y) \sim \Dclust} \left[ y \cn_\btheta(\bx)) \leq 0 \right] \leq \epsilon$.

We now turn to prove \thmref{thm:generalizaton orig}. The high-level idea for the proof is as follows.  First, we show that the training dataset is sufficiently ``nice'' with high probability, in the sense that samples within each cluster are highly correlated while samples in orthogonal clusters are nearly orthogonal (see properties~\ref{p:noise.norm} through~\ref{p:sample.in.every.cluster} below).  This analysis appears in Section~\ref{app:generalization.nicedata}.   We then show that datasets with ``nice'' properties impose a number of structural constraints on the properties of KKT points of the margin maximization problem for ReLU nets; this appears in Section~\ref{app:generalization.kkt.implications}.  We conclude in Section~\ref{app:generalization.conclusion} by showing how these structural conditions allow for generalization on fresh test data.

\subsection{Training dataset properties}\label{app:generalization.nicedata}

We denote $\cn_\btheta(\bx) = \sum_{j \in [m]} v_j \phi(\bw_j^\top \bx + b_j)$. Thus, $\cn_\btheta$ is a network of width $m$, where the weights in the first layer are $\bw_1,\ldots,\bw_m$, the bias terms are $b_1,\ldots,b_m$, and the weights in the second layer are $v_1,\ldots,v_m$. 
We denote $J := [m]$, $J_+ := \{j \in J: v_j > 0\}$, and $J_- := \{j \in J: v_j < 0\}$. 
Moreover, we denote $I := [n]$, $I_+ := \{i \in I: y_i =1\}$, and $I_- := \{i \in I: y_i = -1\}$. Finally, we denote $Q := [k]$, $Q_+ := \{q \in Q: y^{(q)} = 1\}$, and  $Q_- := \{q \in Q: y^{(q)} = -1\}$.

We denote $p := \max_{q \neq q'} | \inner{\bmu^{(q)},\bmu^{(q')}} |$. 
The distribution $\Dclust$ is such that each example $(\bx_i,y_i)$ in $\cs$ is generated as follows: we draw $q_i \sim \cu(Q)$ and $\bxi_i \sim \ndist(\zero, \sigma^2 I_d)$ and set $\bx_i  = \bmu^{(q_i)} + \bxi_i$ and $y_i=y^{(q_i)}$. We denote $\cluster(i) = q_i$. For $q \in Q$ we denote $I^{(q)} = \{i \in I: \cluster(i)=q\}$. We also denote $\Delta = 4\sigma \sqrt{d} \ln(d)$.

Our goal in this section will be to show that with high probability, the dataset $\cs$ satisfies the following properties.  
\begin{enumerate}[label=(P\arabic*)]
    \item \label{p:noise.norm} For every $i \in I$ we have $\norm{\bxi_i} \leq \sigma \sqrt{2d}$.
    \item For every $i \neq i'$ in $I$ we have $|\inner{\bxi_i,\bxi_{i'}}| \leq \sigma^2 \sqrt{2d} \ln(d)$.
    \item For every $i \in I$ and $q \in Q$ we have $| \inner{\bmu^{(q)},\bxi_i} | \leq \sigma \sqrt{d} \ln(d)$.
	\item For every $i,i' \in I$ with $\cluster(i) \neq \cluster(i')$ we have $ | \inner{\bx_i,\bx_{i'}} | \leq p + \Delta$.
	\item For every $i,i' \in I$ with $\cluster(i) = \cluster(i')$ we have $d - \Delta \leq \inner{\bx_i,\bx_{i'}} \leq 3d + \Delta$.
	\item \label{p:sample.in.every.cluster}For every $q \in Q$ there exists $i \in I$ with $\cluster(i)=q$ (i.e., $I^{(q)} \neq \emptyset$).
\end{enumerate}
More formally, in the remainder of this section we shall show the following proposition.

\begin{proposition}\label{prop:trainingdata}
With probability at least $1-\left(3n^2 d^{-\frac{\ln(d)}{2}} + n^2 e^{-d/16} + d^{1-\ln(d)}\right)$, the dataset $\cs$ satisfies the properties~\ref{p:noise.norm} through~\ref{p:sample.in.every.cluster}. 
\end{proposition}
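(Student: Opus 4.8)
The plan is to establish each of properties \ref{p:noise.norm}--\ref{p:sample.in.every.cluster} by a direct concentration estimate and then union-bound the resulting bad events. The only genuinely probabilistic statements are \ref{p:noise.norm}, (P2), (P3) and (P6): once \ref{p:noise.norm}, (P2), (P3) hold, properties (P4) and (P5) follow deterministically by expanding the relevant inner products, and (P6) is a separate coupon-collector estimate.

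For \ref{p:noise.norm}, $\norm{\bxi_i}^2/\sigma^2 \sim \chi^2_d$, so a standard chi-square upper-tail bound (Laurent--Massart) gives $\Pr[\norm{\bxi_i}^2 > 2\sigma^2 d] \le e^{-d/16}$, and a union bound over $i \in I$ costs $n e^{-d/16}$. For (P3), since $\norm{\bmu^{(q)}} = \sqrt d$ we have $\inner{\bmu^{(q)},\bxi_i} \sim \ndist(0,\sigma^2 d)$, so the Gaussian tail bound together with the elementary identity $e^{-\ln^2(d)/2} = d^{-\ln(d)/2}$ gives $\Pr[|\inner{\bmu^{(q)},\bxi_i}| > \sigma\sqrt d\,\ln d] \le 2d^{-\ln(d)/2}$; a union bound over the $nk \le n^2$ pairs $(i,q)$ (using $k \le n$, which follows from $n \ge k\ln^2(d)$) costs $2n^2 d^{-\ln(d)/2}$. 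For (P2), conditioning on $\bxi_i$ the quantity $\inner{\bxi_i,\bxi_{i'}}$ is Gaussian with variance $\sigma^2\norm{\bxi_i}^2$; intersecting with the event $\norm{\bxi_i} \le \sigma\sqrt{2d}$ (already paid for in \ref{p:noise.norm}) bounds the conditional standard deviation by $\sigma^2\sqrt{2d}$, so the Gaussian tail bound gives, per pair, a failure probability of at most $2d^{-\ln(d)/2}$, and a union bound over the $\binom n2$ pairs costs at most $n^2 d^{-\ln(d)/2}$ beyond the $n e^{-d/16}$ already counted.

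Properties (P4) and (P5) then follow by writing $\bx_i = \bmu^{(\cluster(i))} + \bxi_i$ and bounding each of the (at most four) terms in the expansion of $\inner{\bx_i,\bx_{i'}}$ using \ref{p:noise.norm}--(P3), the facts $\norm{\bmu^{(q)}}^2 = d$ and $p = \max_{q \ne q'}|\inner{\bmu^{(q)},\bmu^{(q')}}|$, and $\sigma \le 1$: for distinct clusters the three cross terms are each $\co(\sigma\sqrt d\,\ln d)$ and the mean--mean term is at most $p$, and one checks $2\sigma\sqrt d\,\ln d + \sigma\sqrt{2d}\,\ln d \le \Delta$; for equal clusters (including $i = i'$, which gives $\norm{\bx_i}^2$) the mean--mean term equals $d$, the noise contribution lies in $[0,2\sigma^2 d] \subseteq [0,2d]$, and the cross terms are again absorbed into $\pm\Delta$, yielding the window $[d-\Delta,3d+\Delta]$. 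Finally, (P6) is a coupon-collector bound: the labels $\cluster(i)$ are i.i.d.\ uniform on $[k]$, so $\Pr[I^{(q)} = \emptyset] = (1-1/k)^n \le e^{-n/k} \le e^{-\ln^2(d)} = d^{-\ln(d)}$ using $n \ge k\ln^2(d)$, and a union bound over $q \in [k]$ together with $k \le d$ (implied by the third part of \assref{ass:dist}) costs $d \cdot d^{-\ln(d)} = d^{1-\ln(d)}$. Summing $ne^{-d/16} + 3n^2 d^{-\ln(d)/2} + d^{1-\ln(d)}$ and using $n \ge 1$ to replace $ne^{-d/16}$ by $n^2 e^{-d/16}$ gives the stated bound. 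There is no substantial obstacle here --- the argument is routine concentration of measure; the only points that need care are the order of conditioning in (P2) (so the conditional variance is controlled by the already-established \ref{p:noise.norm}) and the constant bookkeeping, so that after converting $e^{-\ln^2(d)/2}$ to $d^{-\ln(d)/2}$ and paying the $\co(n^2)$ union-bound factors everything fits inside the stated failure probability.
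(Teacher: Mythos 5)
Your proof is correct and follows essentially the same route as the paper: concentration bounds for \ref{p:noise.norm}--(P3) (chi-square upper tail for the norm, Gaussian tail for the mean--noise and noise--noise inner products), deterministic deductions for (P4)--(P5) via expanding $\inner{\bx_i,\bx_{i'}}$, and a coupon-collector bound for (P6), all combined with a union bound using $k \le n$. The only cosmetic difference is that you bound the noise--noise cross-terms by conditioning on the already-established norm bound \ref{p:noise.norm}, whereas the paper packages the same two ingredients (Lemmas~\ref{lem:bound xi norm} and~\ref{lem:bound xis product fixed}) into the standalone unconditional Lemma~\ref{lem:bound xis product}; both bookkeepings land on the stated $3n^2 d^{-\ln(d)/2} + n^2 e^{-d/16} + d^{1-\ln(d)}$.
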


We start with some auxiliary lemmas.  The first bounds the norm of $\bxi$. 

\begin{lemma}  \label{lem:bound xi norm}
	Let $\bxi \sim \ndist(\zero,\sigma^2 I_d)$. Then, 
	\[
		\Pr\left[ \norm{\bxi}  \geq \sigma \sqrt{2d} \right] 
		\leq e^{-d/16}~.
	\]
\end{lemma}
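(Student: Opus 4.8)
The plan is to recognize $\norm{\bxi}^2/\sigma^2$ as a chi-squared random variable with $d$ degrees of freedom and control its upper tail with a standard Chernoff (exponential Markov) argument. Writing $\bxi = \sigma (Z_1,\ldots,Z_d)$ with $Z_1,\ldots,Z_d$ i.i.d.\ $\ndist(0,1)$, we have $\norm{\bxi}^2 = \sigma^2 \sum_{i=1}^d Z_i^2$, so
\[
    \Pr\left[ \norm{\bxi} \geq \sigma\sqrt{2d} \right] = \Pr\left[ \sum_{i=1}^d Z_i^2 \geq 2d \right].
\]

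First I would recall the moment generating function of a squared standard Gaussian: for every $\lambda \in (0, 1/2)$ we have $\E[e^{\lambda Z_i^2}] = (1-2\lambda)^{-1/2}$. Then, by independence and Markov's inequality applied to $e^{\lambda \sum_i Z_i^2}$, for any such $\lambda$,
\[
    \Pr\left[ \sum_{i=1}^d Z_i^2 \geq 2d \right] \leq e^{-2\lambda d} \prod_{i=1}^d \E\left[ e^{\lambda Z_i^2} \right] = \left( e^{-4\lambda}(1-2\lambda)^{-1} \right)^{d/2}.
\]
Next I would optimize over $\lambda$, or simply plug in the (in fact optimal) choice $\lambda = 1/4$, which gives $e^{-4\lambda}(1-2\lambda)^{-1} = e^{-1}\cdot 2 = 2/e$, and hence
\[
    \Pr\left[ \norm{\bxi} \geq \sigma\sqrt{2d} \right] \leq (2/e)^{d/2} = \exp\left( -\tfrac{d}{2}(1 - \ln 2) \right).
\]
Since $1 - \ln 2 > 0.3 \geq 1/8$, this gives $\tfrac{d}{2}(1-\ln 2) \geq d/16$, yielding the claimed bound $e^{-d/16}$.

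There is no genuine obstacle here: this is a textbook sub-exponential concentration estimate, and one could equally well invoke a known chi-squared tail inequality (e.g.\ Laurent--Massart) to reach the same conclusion. The only minor point to watch is that the natural constant produced by the Chernoff computation, $\tfrac12(1-\ln 2) \approx 0.153$, must be verified to dominate the looser constant $1/16$ appearing in the lemma statement, which it comfortably does; stating the weaker bound just keeps the constant clean for later use.
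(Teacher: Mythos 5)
Your proof is correct, and the Chernoff computation is carried out cleanly: the MGF of $Z_i^2$, the Markov step, the choice $\lambda = 1/4$, and the final numerical comparison $\tfrac{1}{2}(1 - \ln 2) \approx 0.153 > 1/16$ all check out. The paper instead cites the Laurent--Massart inequality $\Pr[\chi^2_d - d \geq 2\sqrt{dt} + 2t] \leq e^{-t}$ and plugs in $t = d/16$, which gives $d + 2\sqrt{d \cdot d/16} + 2 \cdot d/16 = d + d/2 + d/8 \leq 2d$ and hence the same conclusion. The two routes are equivalent in substance (Laurent--Massart is itself proved by exactly this kind of Chernoff optimization), and as you note, citing the known inequality is a shorthand for the argument you spelled out. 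The one thing your write-up offers beyond the paper is self-containment: the paper's proof requires the reader to trust or look up the cited lemma and to verify that $t = d/16$ makes $2\sqrt{dt} + 2t \leq d$, whereas yours is closed-form and optimizes $\lambda$ explicitly, making the true exponent $\tfrac{1}{2}(1-\ln 2)$ visible rather than the deliberately loose $1/16$.
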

\begin{proof}
	Note that $\norm{\frac{\bxi}{\sigma}}^2$ has the Chi-squared distribution. A concentration bound by Laurent and Massart \citep[Lemma~1]{laurent2000adaptive} implies that for all $t > 0$ we have
	\[
	 	\Pr\left[ \norm{\frac{\bxi}{\sigma}}^2 - d \geq 2\sqrt{dt} + 2t \right] \leq e^{-t}~.
	\]
	Plugging-in $t=\frac{d}{16}$, we get
	\begin{align*}
        \Pr\left[ \norm{\frac{\bxi}{\sigma}}^2  \geq  2d \right] 
		\leq \Pr\left[ \norm{\frac{\bxi}{\sigma}}^2 - d \geq  d/2 + d/8 \right] 
		\leq e^{-d/16}~.
	\end{align*}
\ignore{	
	and therefore
	\begin{align*}
		\Pr\left[ \norm{\frac{\bxi}{\sigma}}^2  \geq  2d \right] 
		\leq e^{-d/16}~.
	\end{align*}
}%ignore
	Thus, we have
	\begin{equation*} 
		\Pr\left[ \norm{\bxi}  \geq \sigma \sqrt{2d} \right] 
		\leq e^{-d/16}~.
	\end{equation*}
\end{proof}

Our next lemma bounds the projection of a Gaussian $\bxi'$ onto a fixed vector $\bxi$. 

\begin{lemma} \label{lem:bound xis product fixed}
	Let $\bxi \in \reals^d$ and let $\bxi' \sim \ndist(\zero,\sigma^2 I_d)$. Then, 
	\[
		\Pr\left[ | \inner{\bxi, \bxi'} | \geq \norm{\bxi} \sigma \ln(d) \right] 
		\leq 2 d^{-\frac{\ln(d)}{2}}~.
	\]
\end{lemma}
\begin{proof}
	Note that $\inner{\frac{\bxi}{\norm{\bxi}}, \bxi'}$ has the distribution $\ndist(0,\sigma^2)$. By a standard tail bound, we have for every $t \geq 0$ that $\Pr\left[ \left| \inner{\frac{\bxi}{\norm{\bxi}}, \bxi'} \right| \geq t \right] \leq 2\exp\left(-\frac{t^2}{2\sigma^2}\right)$. Hence,
	\begin{align*}
		\Pr\left[ \left| \binner{\frac{\bxi}{\norm{\bxi}}, \bxi'} \right| \geq  \sigma \ln(d) \right] 
		\leq 2 \exp\left(-\frac{\sigma^2 \ln^2(d)}{2\sigma^2}\right)
		= 2 d^{-\frac{\ln(d)}{2}}~.
	\end{align*}
	The lemma now follows immediately.
\end{proof}

We can utilize the two preceding lemmas to show that the pairwise correlations between independent Gaussians is small relative to the norms of the Gaussians. 

\begin{lemma} \label{lem:bound xis product}
	Let $\bxi,\bxi'$ drawn i.i.d. from $\ndist(\zero,\sigma^2 I_d)$. Then, 
	\[
		\Pr\left[ |\inner{\bxi,\bxi'}| \geq \sqrt{2d} \ln(d) \sigma^2 \right]
		\leq e^{-d/16} + 2d^{-\frac{\ln(d)}{2}}~.
	\]
\end{lemma}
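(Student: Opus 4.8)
The plan is to derive this as an immediate consequence of Lemmas~\ref{lem:bound xi norm} and~\ref{lem:bound xis product fixed}, combined via a conditioning argument and a union bound over two ``bad'' events. First I would define the event $E_1 = \{\norm{\bxi} \leq \sigma\sqrt{2d}\}$; by Lemma~\ref{lem:bound xi norm} we have $\Pr[E_1^c] \leq e^{-d/16}$.

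Next, I would exploit the independence of $\bxi$ and $\bxi'$ by conditioning on $\bxi$. For any fixed realization of $\bxi$, the vector $\bxi'$ still has distribution $\ndist(\zero,\sigma^2 I_d)$, so Lemma~\ref{lem:bound xis product fixed} applies verbatim with that fixed $\bxi$ in the role of the deterministic vector, giving $\Pr\big[\,|\inner{\bxi,\bxi'}| \geq \norm{\bxi}\,\sigma\ln(d) \,\big|\, \bxi\,\big] \leq 2d^{-\ln(d)/2}$. Taking expectation over $\bxi$, the event $E_2 = \{|\inner{\bxi,\bxi'}| \leq \norm{\bxi}\,\sigma\ln(d)\}$ satisfies $\Pr[E_2^c] \leq 2d^{-\ln(d)/2}$.

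Finally, on $E_1 \cap E_2$ we have $|\inner{\bxi,\bxi'}| \leq \norm{\bxi}\,\sigma\ln(d) \leq \sigma\sqrt{2d}\cdot\sigma\ln(d) = \sqrt{2d}\,\ln(d)\,\sigma^2$, so the complement of the target event contains $E_1 \cap E_2$, and a union bound yields $\Pr\big[\,|\inner{\bxi,\bxi'}| \geq \sqrt{2d}\,\ln(d)\,\sigma^2\,\big] \leq \Pr[E_1^c] + \Pr[E_2^c] \leq e^{-d/16} + 2d^{-\ln(d)/2}$, as claimed. There is no genuine obstacle in this argument; the only step requiring a modicum of care is the conditioning, i.e., correctly invoking the ``fixed vector'' lemma for the random vector $\bxi$ by using that $\bxi'$ is independent of $\bxi$ and identically distributed conditionally.
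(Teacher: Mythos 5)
Your proof is correct and follows essentially the same route as the paper: decompose the bad event into the two sub-events $\norm{\bxi} \geq \sigma\sqrt{2d}$ and $|\inner{\bxi/\norm{\bxi}, \bxi'}| \geq \sigma\ln(d)$, bound each via Lemmas~\ref{lem:bound xi norm} and~\ref{lem:bound xis product fixed}, and take a union bound. You are somewhat more explicit than the paper about why Lemma~\ref{lem:bound xis product fixed} applies to the random vector $\bxi$ (conditioning and independence), which is a sound clarification, but the argument is the same.
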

\begin{proof}
	Note that if $|\inner{\bxi,\bxi'}| \geq \sqrt{2d} \ln(d) \sigma^2$ then we have at least one of the following: (1) $\norm{\bxi} \geq  \sigma \sqrt{2d}$; (2) $|\inner{\frac{\bxi}{\norm{\bxi}}, \bxi'}| \geq \sigma \ln(d)$. We will bound the probability of these events.

\begin{comment}	
	First, note that $\norm{\frac{\bxi}{\sigma}}^2$ is distributed according to the Chi-squared distribution. A concentration bound by Laurent and Massart \citep{laurent2000adaptive} implies that for all $t > 0$ we have
	\[
	 	\Pr\left[ \norm{\frac{\bxi}{\sigma}}^2 - d \geq 2\sqrt{dt} + 2t \right] \leq e^{-t}~.
	\]
	Plugging-in $t=\frac{d}{16}$, we get
	\begin{align*}
%		\Pr\left[ \norm{\frac{\bxi}{\sigma}}^2 - d \geq 2\sqrt{d^2/16} + d/8 \right] = 
		\Pr\left[ \norm{\frac{\bxi}{\sigma}}^2 - d \geq  d/2 + d/8 \right] 
		\leq e^{-d/16}~,
	\end{align*}
	and therefore
	\begin{align*}
		\Pr\left[ \norm{\frac{\bxi}{\sigma}}^2  \geq  2d \right] 
		\leq e^{-d/16}~.
	\end{align*}
	Thus, we have
	\begin{equation} \label{eq:chi bound}
		\Pr\left[ \norm{\bxi}  \geq \sigma \sqrt{2d} \right] 
		\leq e^{-d/16}~.
	\end{equation}
\end{comment}
	
	First, by \lemref{lem:bound xi norm} we have 
	\begin{equation*} \label{eq:chi bound}
		\Pr\left[ \norm{\bxi}  \geq \sigma \sqrt{2d} \right] 
		\leq e^{-d/16}~.
	\end{equation*}
	Next, 
	by \lemref{lem:bound xis product fixed} we have 
	\begin{align*}
		\Pr\left[ \left| \binner{\frac{\bxi}{\norm{\bxi}}, \bxi'} \right| \geq  \sigma \ln(d) \right] 
		\leq 2d^{-\frac{\ln(d)}{2}}~.
	\end{align*}
	Combining the above 
	%with \eqref{eq:chi bound} 
	displayed equations
	%and using the union bound, 
	we conclude that 
	\[
		\Pr\left[ |\inner{\bxi,\bxi'}| \geq \sqrt{2d} \ln(d) \sigma^2 \right] 
		\leq e^{-d/16} + 2d^{-\frac{\ln(d)}{2}}~.
	\]
\end{proof}

Our next lemma bounds the projection of the noise vectors onto any cluster mean.  

\begin{lemma} \label{lem:bound mu xi product}
	Let $i \in [k]$ and let $\bxi \sim \ndist(\zero,\sigma^2 I_d)$. Then
	\[
		\Pr \left[ |\inner{\bmu^{(i)}, \bxi} | \geq \sigma \sqrt{d} \ln(d) \right] 
		\leq 2d^{-\frac{\ln(d)}{2}}~.
	\]
\end{lemma}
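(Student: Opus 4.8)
The plan is to reduce the statement to a one-dimensional Gaussian tail bound, exactly in the spirit of \lemref{lem:bound xis product fixed}. First I would observe that $\inner{\bmu^{(i)}, \bxi}$ is a linear functional of the Gaussian vector $\bxi \sim \ndist(\zero, \sigma^2 I_d)$, and hence is itself a univariate Gaussian with mean $\inner{\bmu^{(i)}, \E[\bxi]} = 0$ and variance $\sigma^2 \norm{\bmu^{(i)}}^2$. By the first bullet of \assref{ass:dist} we have $\norm{\bmu^{(i)}} = \sqrt d$, so this variance equals $\sigma^2 d$; equivalently, $\inner{\bmu^{(i)}, \bxi}/(\sigma \sqrt d) \sim \ndist(0,1)$.

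Then I would invoke the standard Gaussian tail bound $\Pr[|Z| \geq t] \leq 2 \exp(-t^2/2)$ for $Z \sim \ndist(0,1)$, applied with $t = \ln(d)$. This yields
\[
\Pr\left[ |\inner{\bmu^{(i)}, \bxi}| \geq \sigma \sqrt d \ln(d) \right] = \Pr\left[ |Z| \geq \ln(d) \right] \leq 2 \exp\left(-\tfrac{\ln^2(d)}{2}\right) = 2 d^{-\ln(d)/2},
\]
which is exactly the claimed bound.

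There is essentially no obstacle here: the only point requiring care is using the normalization $\norm{\bmu^{(i)}} = \sqrt d$ from \assref{ass:dist}, so that the variance is exactly $\sigma^2 d$ and the exponent comes out as $\tfrac{\ln^2(d)}{2}$ rather than something weaker. The argument is structurally identical to the proof of \lemref{lem:bound xis product fixed}, with the deterministic unit vector $\bxi/\norm{\bxi}$ appearing there replaced by $\bmu^{(i)}/\sqrt d$ here.
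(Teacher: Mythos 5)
Your proof is correct and takes essentially the same route as the paper: the paper proves this lemma by directly invoking \lemref{lem:bound xis product fixed} with the fixed vector $\bmu^{(i)}$ (using $\norm{\bmu^{(i)}}=\sqrt d$), and your argument simply inlines that lemma's one-dimensional Gaussian tail computation.
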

\begin{proof}
	Follows immediately from \lemref{lem:bound xis product fixed}.
\ignore{
	Note that $\inner{\frac{\bmu^{(i)}}{\norm{\bmu^{(i)}}}, \bxi}$ has the distribution $\ndist(0,\sigma^2)$. By a standard tail bound, we have for every $t \geq 0$ that $\Pr\left[ \inner{\frac{\bmu^{(i)}}{\norm{\bmu^{(i)}}}, \bxi'} \geq t \right] \leq \exp\left(-\frac{t^2}{2\sigma^2}\right)$. Hence,
	\begin{align*}
		\Pr\left[ | \inner{\bmu^{(i)}, \bxi} | \geq  \sigma \sqrt{d} \ln(d) \right] 
		= \Pr\left[ \left| \binner{\frac{\bmu^{(i)}}{\norm{\bmu^{(i)}}}, \bxi} \right| \geq  \sigma \ln(d) \right] 
		\leq \exp\left(-\frac{\sigma^2 \ln^2(d)}{2\sigma^2}\right)
		= d^{-\frac{\ln(d)}{2}}~.
	\end{align*}
}%ignore
\end{proof}

\begin{comment}
\begin{lemma} \label{lem:bound point-noise product}
	Let $i \in [k]$ and let $\bxi,\bxi'$ drawn i.i.d. from $\ndist(\zero,\sigma^2 I_d)$. Then
	\[
		\Pr \left[ |\inner{\bmu^{(i)} + \bxi, \bxi' } | \geq 3\sigma \sqrt{d} \ln(d) \right] 
		\leq 2 d^{-\frac{\ln(d)}{2}} + e^{-d/16}~.
	\]
\end{lemma}
\begin{proof}
	By \lemref{lem:bound mu xi product} we have 
	\begin{align*}
		\Pr\left[ | \inner{\bmu^{(i)}, \bxi'} | \geq  \sigma \sqrt{d} \ln(d) \right] 
		\leq d^{-\frac{\ln(d)}{2}}~.
	\end{align*}
	Combining it with \lemref{lem:bound xis product} and with $\sigma \leq 1$ we get
	\begin{align*}
		\Pr &\left[ |\inner{\bmu^{(i)} + \bxi, \bxi' } | \geq 3\sigma \sqrt{d} \ln(d) \right] 
		\\
		&\leq \Pr \left[ |\inner{\bmu^{(i)} + \bxi, \bmu^{(j)} + \bxi' } | \geq \sigma \sqrt{d} \ln(d) +  \sqrt{2d} \ln(d) \sigma^2 \right] 
		\\
		&\leq \Pr \left[ |\inner{\bmu^{(i)}, \bxi'}| \geq \sigma \sqrt{d} \ln(d)  \text{ or } |\inner{\bxi,\bxi'}| \geq \sqrt{2d} \ln(d) \sigma^2 \right] 
		\\
		&\leq d^{-\frac{\ln(d)}{2}} + e^{-d/16} + d^{-\frac{\ln(d)}{2}}
		= 2 d^{-\frac{\ln(d)}{2}} + e^{-d/16}~. 
	\end{align*}
\end{proof}
\end{comment}

The following lemmas use bounds on the pairwise interactions between noise vectors and cluster means to bound the correlations between the sums of noises and cluster means.   

\begin{lemma} \label{lem:bound points product ij}
    Let $i \neq j$ be indices in $[k]$. Let $\bxi,\bxi'$ such that the following hold:
    \begin{itemize}
        \item $| \inner{\bmu^{(i)}, \bxi'} | \leq  \sigma \sqrt{d} \ln(d)$.
        \item $| \inner{\bmu^{(j)}, \bxi} | \leq  \sigma \sqrt{d} \ln(d)$.
        \item $|\inner{\bxi,\bxi'}| \leq \sigma \sqrt{2d} \ln(d) $.
    \end{itemize}
    Then, 
    \[
        |\inner{\bmu^{(i)} + \bxi, \bmu^{(j)} + \bxi' } | 
        \leq 4\sigma \sqrt{d} \ln(d) + | \inner{\bmu^{(i)},\bmu^{(j)}} |~.
    \]
\end{lemma}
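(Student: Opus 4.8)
The plan is to expand the inner product by bilinearity and then control each of the four resulting terms. Writing
\[
\inner{\bmu^{(i)} + \bxi, \bmu^{(j)} + \bxi'} = \inner{\bmu^{(i)}, \bmu^{(j)}} + \inner{\bmu^{(i)}, \bxi'} + \inner{\bxi, \bmu^{(j)}} + \inner{\bxi, \bxi'},
\]
the triangle inequality gives
\[
|\inner{\bmu^{(i)} + \bxi, \bmu^{(j)} + \bxi'}| \le |\inner{\bmu^{(i)}, \bmu^{(j)}}| + |\inner{\bmu^{(i)}, \bxi'}| + |\inner{\bxi, \bmu^{(j)}}| + |\inner{\bxi, \bxi'}|.
\]

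Next I would substitute the three hypotheses. Note that they are stated with the ``crossed'' index pattern --- $\bxi'$ paired against $\bmu^{(i)}$ and $\bxi$ paired against $\bmu^{(j)}$ --- which is exactly the pattern produced by the expansion above, so the first hypothesis controls $|\inner{\bmu^{(i)}, \bxi'}|$, the second controls $|\inner{\bxi, \bmu^{(j)}}|$, and the third controls $|\inner{\bxi, \bxi'}|$. Summing the three noise-dependent bounds gives $\sigma\sqrt{d}\ln(d) + \sigma\sqrt{d}\ln(d) + \sigma\sqrt{2d}\ln(d) = (2+\sqrt{2})\,\sigma\sqrt{d}\ln(d)$, and since $2+\sqrt{2} < 4$ this is at most $4\sigma\sqrt{d}\ln(d)$. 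Adding back the term $|\inner{\bmu^{(i)}, \bmu^{(j)}}|$ yields the claimed inequality.

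There is no genuine obstacle here: this is a deterministic bookkeeping lemma whose only subtlety is keeping the crossed indices matched to the correct hypotheses. Its purpose is to serve as the deterministic core in the proof of Proposition~\ref{prop:trainingdata}: once Lemmas~\ref{lem:bound xi norm}, \ref{lem:bound mu xi product}, and~\ref{lem:bound xis product} are invoked (together with the assumption $\sigma \le 1$, which lets one pass from the $\sigma^2$-scale bound on $|\inner{\bxi,\bxi'}|$ appearing in property~(P2) to the $\sigma$-scale bound assumed here) and a union bound is taken over the $O(n^2)$ pairs of samples lying in distinct clusters, this lemma immediately delivers the near-orthogonality property~(P4), recalling that $\Delta = 4\sigma\sqrt{d}\ln(d)$ and $p = \max_{q \ne q'}|\inner{\bmu^{(q)},\bmu^{(q')}}|$.
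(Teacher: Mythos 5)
Your proof is correct and is essentially identical to the paper's: expand by bilinearity, apply the triangle inequality, plug in the three hypotheses, and observe $2+\sqrt{2}\le 4$. The extra remarks on how the lemma feeds into Proposition~\ref{prop:trainingdata} are accurate but not needed for the lemma itself.
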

\begin{proof}
    We have
    \begin{align*}
        |\inner{\bmu^{(i)} + \bxi, \bmu^{(j)} + \bxi' } | 
        &\leq | \inner{\bmu^{(i)},\bmu^{(j)}} | + | \inner{\bmu^{(i)},\bxi'} | + | \inner{\bxi,\bmu^{(j)}} | + | \inner{\bxi,\bxi'} |
        \\
        &\leq | \inner{\bmu^{(i)},\bmu^{(j)}} | + \sigma \sqrt{d} \ln(d) + \sigma \sqrt{d} \ln(d) + \sigma \sqrt{2d} \ln(d) 
        \\
        &\leq | \inner{\bmu^{(i)},\bmu^{(j)}} | + 4 \sigma \sqrt{d} \ln(d)~.
    \end{align*}
\end{proof}

\ignore{
\begin{lemma} \label{lem:bound points product ij}
	Let 
	$i \neq j$ be indices in $[k]$ 
	and let $\bxi,\bxi'$ drawn i.i.d. from $\ndist(\zero,\sigma^2 I_d)$. Then
	\[
		\Pr \left[ |\inner{\bmu^{(i)} + \bxi, \bmu^{(j)} + \bxi' } | \geq 4\sigma \sqrt{d} \ln(d) + | \inner{\bmu^{(i)},\bmu^{(j)}} | \right] 
		\leq 6 d^{-\frac{\ln(d)}{2}} + e^{-d/16}~.
	\]
\end{lemma}
\begin{proof}
	By \lemref{lem:bound mu xi product} we have 
	\begin{align*}
		\Pr\left[ | \inner{\bmu^{(i)}, \bxi'} | \geq  \sigma \sqrt{d} \ln(d) \right] 
		\leq 2 d^{-\frac{\ln(d)}{2}}~.
	\end{align*}
	A similar bound holds for $|\inner{\bmu^{(j)},\bxi}|$. 
	By \lemref{lem:bound xis product} we have 
	\[
		\Pr\left[ |\inner{\bxi,\bxi'}| \geq \sqrt{2d} \ln(d) \sigma^2 \right]
		\leq e^{-d/16} + 2d^{-\frac{\ln(d)}{2}}~.
	\]
	
	Combining it with $\sigma \leq 1$ we get
	\begin{align*}
		\Pr &\left[ |\inner{\bmu^{(i)} + \bxi, \bmu^{(j)} + \bxi' } | \geq 4\sigma \sqrt{d} \ln(d) + | \inner{\bmu^{(i)},\bmu^{(j)}} | \right] 
		\\
		&\leq \Pr \left[ |\inner{\bmu^{(i)} + \bxi, \bmu^{(j)} + \bxi' } | \geq 2\sigma \sqrt{d} \ln(d) +  \sqrt{2d} \ln(d) \sigma^2 + | \inner{\bmu^{(i)},\bmu^{(j)}} | \right] 
		\\
		&\leq \Pr \left[ |\inner{\bmu^{(i)}, \bxi'}| \geq \sigma \sqrt{d} \ln(d)  \text{ or } |\inner{\bxi,\bmu^{(j)}}| \geq \sigma \sqrt{d} \ln(d)  \text{ or } |\inner{\bxi,\bxi'}| \geq   \sqrt{2d} \ln(d) \sigma^2 \right] 
		\\
		&\leq 4 d^{-\frac{\ln(d)}{2}} + e^{-d/16} + 2d^{-\frac{\ln(d)}{2}}
		= 6 d^{-\frac{\ln(d)}{2}} + e^{-d/16}~. 
	\end{align*}
\end{proof}
}%ignore

\begin{lemma} \label{lem:bound points product ii}
    Let $i \in [k]$, and let $\bxi,\bxi'$ such that the following hold:
    \begin{itemize}
        \item $| \inner{\bmu^{(i)}, \bxi} | \leq  \sigma \sqrt{d} \ln(d)$.
        \item $| \inner{\bmu^{(i)}, \bxi'} | \leq \sigma \sqrt{d} \ln(d)$.
        \item $|\inner{\bxi,\bxi'}| \leq \sigma \sqrt{2d} \ln(d)$.
    \end{itemize}
    Then, 
    \[
        \left| \inner{\bmu^{(i)} + \bxi, \bmu^{(i)} + \bxi' } - d \right| 
        \leq  4\sigma \sqrt{d} \ln(d)~.
    \]
\end{lemma}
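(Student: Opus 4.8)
The plan is to prove this by a direct deterministic computation, mirroring the proof of \lemref{lem:bound points product ij}: expand the inner product using bilinearity, identify the diagonal term, and control the three cross terms with the three stated hypotheses. The only extra structural fact needed is the normalization $\norm{\bmu^{(i)}} = \sqrt{d}$ from the first bullet of \assref{ass:dist}; this is exactly what makes $\inner{\bmu^{(i)},\bmu^{(i)}} = d$, i.e.\ the quantity being subtracted off. So unlike the distinct-cluster lemma, here there is no residual $|\inner{\bmu^{(i)},\bmu^{(j)}}|$ term.

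Concretely, I would first write
\[
    \inner{\bmu^{(i)} + \bxi,\, \bmu^{(i)} + \bxi'} - d = \big(\norm{\bmu^{(i)}}^2 - d\big) + \inner{\bmu^{(i)},\bxi'} + \inner{\bxi,\bmu^{(i)}} + \inner{\bxi,\bxi'} = \inner{\bmu^{(i)},\bxi'} + \inner{\bxi,\bmu^{(i)}} + \inner{\bxi,\bxi'},
\]
using $\norm{\bmu^{(i)}} = \sqrt{d}$. Then by the triangle inequality and the three displayed hypotheses,
\[
    \left| \inner{\bmu^{(i)} + \bxi,\, \bmu^{(i)} + \bxi'} - d \right| \leq |\inner{\bmu^{(i)},\bxi'}| + |\inner{\bmu^{(i)},\bxi}| + |\inner{\bxi,\bxi'}| \leq \sigma\sqrt{d}\ln(d) + \sigma\sqrt{d}\ln(d) + \sigma\sqrt{2d}\ln(d).
\]
Finally I would collect terms: the right-hand side equals $(2 + \sqrt{2})\,\sigma\sqrt{d}\ln(d)$, and since $2 + \sqrt{2} < 4$ this is at most $4\sigma\sqrt{d}\ln(d) = \Delta$, which is the claimed bound.

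There is essentially no obstacle here: the statement is a purely algebraic consequence of the three hypotheses together with $\norm{\bmu^{(i)}} = \sqrt{d}$, and the only slack used is the crude estimate $2+\sqrt{2}\le 4$ (one could equally state the sharper constant). The role of this lemma in what follows is that, combined with the high-probability events supplied by \lemref{lem:bound mu xi product} and \lemref{lem:bound xis product} under a union bound over the $\co(n^2)$ pairs, it yields the same-cluster correlation bounds needed in \propref{prop:trainingdata}, namely that $\inner{\bx_i,\bx_{i'}}$ is within $\Delta$ of $d$ whenever $\cluster(i)=\cluster(i')$ (with $i\ne i'$); the companion bound $\inner{\bx_i,\bx_i}\le 3d+\Delta$ for $i=i'$ follows instead from \lemref{lem:bound xi norm}.
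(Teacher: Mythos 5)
Your proof is correct and matches the paper's argument exactly: expand by bilinearity, use $\norm{\bmu^{(i)}}^2 = d$ to cancel the subtracted $d$, apply the triangle inequality to the three cross terms, and bound $(2+\sqrt{2})\sigma\sqrt{d}\ln(d) \le 4\sigma\sqrt{d}\ln(d)$.
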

\begin{proof}
    We have
    \begin{align*}
        \left| \inner{\bmu^{(i)} + \bxi, \bmu^{(i)} + \bxi' } - d \right| 
        &= \left| \inner{\bmu^{(i)},\bxi'} + \inner{\bxi, \bmu^{(i)}} + \inner{\bxi,\bxi'} \right| 
        \\
        &\leq \sigma \sqrt{d} \ln(d) + \sigma \sqrt{d} \ln(d) + \sigma \sqrt{2d} \ln(d)
        \\
        &\leq 4\sigma \sqrt{d} \ln(d)~.
    \end{align*}
    
\end{proof}

\ignore{
\begin{lemma} \label{lem:bound points product ii}
	Let $i \in [k]$ and let $\bxi,\bxi'$ drawn i.i.d. from $\ndist(\zero,\sigma^2 I_d)$. Then
	\[
		\Pr \left[ \left| \inner{\bmu^{(i)} + \bxi, \bmu^{(i)} + \bxi' } - d \right| \geq  4\sigma \sqrt{d} \ln(d) \right] 
		\leq 6 d^{-\frac{\ln(d)}{2}} + e^{-d/16}~.
	\]
\end{lemma}
\begin{proof}
	By \lemref{lem:bound mu xi product} we have 
	\begin{align*}
		\Pr\left[ | \inner{\bmu^{(i)}, \bxi'} | \geq  \sigma \sqrt{d} \ln(d) \right] 
		\leq 2d^{-\frac{\ln(d)}{2}}~,
	\end{align*}
	and similarly for $ | \inner{\bxi, \bmu^{(i)}} |$. Combining it with \lemref{lem:bound xis product}, with $\inner{\bmu^{(i)},\bmu^{(i)}} = d$, and with $\sigma \leq 1$, we get
	\begin{align*}
		\Pr & \left[ \left| \inner{\bmu^{(i)} + \bxi, \bmu^{(i)} + \bxi' } - d \right| \geq  4\sigma \sqrt{d} \ln(d) \right] 
		\\
		&\leq \Pr \left[ \left| \inner{\bmu^{(i)} + \bxi, \bmu^{(i)} + \bxi' } - d \right| \geq  2\sigma \sqrt{d} \ln(d) + \sqrt{2d} \ln(d) \sigma^2 \right] 
		\\
		&= \Pr\left[ \left| \inner{\bmu^{(i)}, \bxi' } + \inner{\bxi, \bmu^{(i)}} + \inner{\bxi, \bxi' } \right|  \geq  2\sigma \sqrt{d} \ln(d) + \sqrt{2d} \ln(d) \sigma^2 \right]  
		\\
		&\leq \Pr \left[ |\inner{\bmu^{(i)}, \bxi'}| \geq \sigma \sqrt{d} \ln(d)  \text{ or } |\inner{\bxi,\bmu^{(i)}}| \geq \sigma \sqrt{d} \ln(d)  \text{ or } |\inner{\bxi,\bxi'}| \geq   \sqrt{2d} \ln(d) \sigma^2 \right] 
		\\
		&\leq 4 d^{-\frac{\ln(d)}{2}} + e^{-d/16} + 2d^{-\frac{\ln(d)}{2}}
		= 6 d^{-\frac{\ln(d)}{2}} + e^{-d/16}~. 
	\end{align*}
\end{proof}
}%ignore

The next lemma uses bounds on the projection of the noise vector onto cluster means and the norms of the cluster means to derive bounds on the norm of the sum $\bmu^{(i)}+\xi$.

\begin{lemma} \label{lem:bound point norm}
    Let $i \in [k]$, and let $\bxi$ such that the following hold:
    \begin{itemize}
        \item $| \inner{\bmu^{(i)}, \bxi} | \leq  \sigma \sqrt{d} \ln(d)$.
        \item $\norm{\bxi}^2 \leq 2 \sigma^2 d$.
    \end{itemize}
    Then, 
    \[
        d - 2\sigma \sqrt{d} \ln(d) \leq \norm{\bmu^{(i)} + \bxi}^2 \leq 3d + 2 \sigma \sqrt{d} \ln(d)~. 
    \]
\end{lemma}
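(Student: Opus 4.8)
The plan is to simply expand the squared norm and bound each term using the two hypotheses together with the facts $\norm{\bmu^{(i)}} = \sqrt{d}$ and $\sigma \leq 1$ from \assref{ass:dist}. First I would write
\[
    \norm{\bmu^{(i)} + \bxi}^2 = \norm{\bmu^{(i)}}^2 + 2\inner{\bmu^{(i)},\bxi} + \norm{\bxi}^2 = d + 2\inner{\bmu^{(i)},\bxi} + \norm{\bxi}^2~,
\]
using $\norm{\bmu^{(i)}}^2 = d$.

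For the lower bound, I would discard the nonnegative term $\norm{\bxi}^2$ and apply the first hypothesis, which gives $2\inner{\bmu^{(i)},\bxi} \geq -2\sigma\sqrt{d}\ln(d)$, hence $\norm{\bmu^{(i)} + \bxi}^2 \geq d - 2\sigma\sqrt{d}\ln(d)$. For the upper bound, I would use the first hypothesis again to bound $2\inner{\bmu^{(i)},\bxi} \leq 2\sigma\sqrt{d}\ln(d)$ and the second hypothesis to bound $\norm{\bxi}^2 \leq 2\sigma^2 d$, obtaining $\norm{\bmu^{(i)} + \bxi}^2 \leq d + 2\sigma\sqrt{d}\ln(d) + 2\sigma^2 d$; then invoking $\sigma \leq 1$ to replace $2\sigma^2 d$ by $2d$ yields the claimed bound $3d + 2\sigma\sqrt{d}\ln(d)$. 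There is no real obstacle here — the argument is a one-line computation — and the only point requiring minor care is checking that the constants in the stated inequality line up, which they do precisely because $\sigma \leq 1$.
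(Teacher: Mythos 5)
Your proof is correct and follows exactly the same route as the paper's: expand the squared norm, drop the nonnegative $\norm{\bxi}^2$ term for the lower bound, and use the hypotheses together with $\sigma \leq 1$ for the upper bound.
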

\begin{proof}
    We have 
    \begin{align*}
	    \norm{\bmu^{(i)} + \bxi}^2
	    &= \norm{\bmu^{(i)}}^2 + \norm{\bxi}^2 + 2 \inner{\bmu^{(i)},\bxi}
	    \geq \norm{\bmu^{(i)}}^2 - 2 \left| \inner{\bmu^{(i)},\bxi} \right|
	    \geq d - 2 \sigma \sqrt{d} \ln(d)~,
	\end{align*}
	and
	\begin{align*}
	    \norm{\bmu^{(i)} + \bxi}^2
	    &= \norm{\bmu^{(i)}}^2 + \norm{\bxi}^2 + 2 \inner{\bmu^{(i)},\bxi}
	    \leq d + 2 \sigma^2 d + 2 \sigma \sqrt{d} \ln(d) 
	    \leq 3d + 2 \sigma \sqrt{d} \ln(d)~. 
	\end{align*}
\end{proof}

\ignore{
\begin{lemma} \label{lem:bound point norm}
	Let $i \in [k]$ and let $\bxi \sim \ndist(\zero,\sigma^2 I_d)$. Then
	\[
		\Pr \left[ d - 2\sigma \sqrt{d} \ln(d) \leq \norm{\bmu^{(i)} + \bxi}^2 \leq 3d + 2 \sigma \sqrt{d} \ln(d) \right] 
		\geq 1 - \left( 2d^{-\frac{\ln(d)}{2}} + e^{-d/16} \right)~.
	\]
\end{lemma}
\begin{proof}
    By \lemref{lem:bound mu xi product} we have 
    \begin{align*}
		\Pr\left[ | \inner{\bmu^{(i)}, \bxi} | \geq \sigma \sqrt{d} \ln(d) \right] 
		\leq 2d^{-\frac{\ln(d)}{2}}~.
	\end{align*}
	By \lemref{lem:bound xi norm}, we have
	\[
	    \Pr \left[ \norm{\bxi}^2 \geq 2 \sigma^2 d \right] 
	    \leq e^{-d/16}~.
	\]
	Hence, with probability at least $1 - \left( 2d^{-\frac{\ln(d)}{2}} + e^{-d/16} \right)$ we have
	\begin{align*}
	    \norm{\bmu^{(i)} + \bxi}^2
	    &= \norm{\bmu^{(i)}}^2 + \norm{\bxi}^2 + 2 \inner{\bmu^{(i)},\bxi}
	    %\\
	    \geq \norm{\bmu^{(i)}}^2 - 2 \left| \inner{\bmu^{(i)},\bxi} \right|
	    %\\
	    > d - 2 \sigma \sqrt{d} \ln(d)~,
	\end{align*}
	and
	\begin{align*}
	    \norm{\bmu^{(i)} + \bxi}^2
	    &= \norm{\bmu^{(i)}}^2 + \norm{\bxi}^2 + 2 \inner{\bmu^{(i)},\bxi}
	    < d + 2 \sigma^2 d + 2 \sigma \sqrt{d} \ln(d) 
	    \leq 3d + 2 \sigma \sqrt{d} \ln(d)~. 
	\end{align*}
\end{proof}
}%ignore

Our final lemma in this section shows that each cluster contains some examples with high probability.  
\begin{lemma} \label{lem:bound coupons}
	With probability at least $1-d^{1-\ln(d)}$ the dataset $\cs$ contains at least one example from each cluster in $[k]$.
\end{lemma}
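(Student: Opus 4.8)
The plan is to run the standard coupon-collector argument together with a union bound over clusters. Fix a cluster index $q \in Q$. Since each example $(\bx_i,y_i)$ in $\cs$ is generated by drawing $q_i \sim \cu(Q)$ independently, the probability that a single example does not lie in cluster $q$ is $1 - \tfrac{1}{k}$, so the probability that \emph{none} of the $n$ examples lies in cluster $q$ is $\left(1-\tfrac{1}{k}\right)^n \le e^{-n/k}$, using the elementary bound $1-x \le e^{-x}$.

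Next I would invoke the sample-size hypothesis $n \ge k\ln^2(d)$, which gives $e^{-n/k} \le e^{-\ln^2(d)} = d^{-\ln(d)}$ (here $\ln$ is the natural logarithm, and $e^{-(\ln d)^2} = (e^{\ln d})^{-\ln d} = d^{-\ln d}$). A union bound over the $k$ clusters then shows that the probability that some cluster is unrepresented in $\cs$ is at most $k\cdot d^{-\ln(d)}$.

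Finally it remains to note that $k \le d$. This is implied by the third part of \assref{ass:dist}: the left-hand side there is at least $k$, since each factor inside the parentheses is at least $1$, so $k \le \tfrac{d - 4\sigma\sqrt{d}\ln(d) + 1}{10} \le \tfrac{d+1}{10} \le d$ for $d \ge 1$. Hence the failure probability is at most $k\cdot d^{-\ln(d)} \le d\cdot d^{-\ln(d)} = d^{1-\ln(d)}$, which is exactly the claimed bound, and equivalently $\cs$ contains at least one example from every cluster with probability at least $1 - d^{1-\ln(d)}$.

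There is essentially no genuine obstacle here; the only points that require a little care are bookkeeping with the logarithm base (so that the exponent $\ln^2(d)$ converts cleanly into the power $d^{-\ln(d)}$) and checking that the bound $k \le d$ needed for the union bound is already guaranteed by \assref{ass:dist} rather than being an extra assumption.
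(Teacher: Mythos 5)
Your proof is correct and follows essentially the same coupon-collector-plus-union-bound argument as the paper's own proof. The only (welcome) addition is that you explicitly derive $k \le d$ from \assref{ass:dist}, whereas the paper simply asserts it; this makes your write-up a bit more self-contained.
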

\begin{proof}
	Note that this problem corresponds to the ``coupons collector's problem''. 
	The probability that $\cs$ does not contain points from cluster $j$ is at most 
	\[
		\left( 1 - \frac{1}{k} \right)^n 
		\leq \exp \left(- \frac{n}{k} \right)
		\leq \exp \left(- \ln^2(d) \right)
		= d^{-\ln(d)}~,
	\]
	where in the second inequality we used $n \geq k \ln^2(d)$.
	By the union bound, the probability that there is a cluster that does not appear in $\cs$ is at most $k \cdot d^{-\ln(d)}$. Since $k \leq d$ this probability is at most $d^{1-\ln(d)}$.
\end{proof}

The proof of Proposition~\ref{prop:trainingdata} now follows by putting together Lemmas~\ref{lem:bound xi norm},~\ref{lem:bound xis product},~\ref{lem:bound mu xi product},~\ref{lem:bound points product ij},~\ref{lem:bound points product ii},~\ref{lem:bound point norm},and ~\ref{lem:bound coupons}, and using $\sigma \leq 1$.

\subsection{Structural implications of the KKT conditions}\label{app:generalization.kkt.implications}
In this section we show that if the dataset $\cs$ satisfies Properties~\ref{p:noise.norm} through~\ref{p:sample.in.every.cluster}, then the KKT conditions impose a number of constraints on the behavior of the neural network.  We shall show that these constraints imply that the network will generalize well to unseen test data.  The reader may find it useful to refer back to the beginning of Section~\ref{app:generalization.nicedata} before proceeding.

We first outline what types of structural conditions on the KKT points would be useful for understanding generalization.  Suppose that $\bx$ is a test example coming from cluster $r\in Q_+$.  Our hope is that $ \cn_\btheta(\bx)>0$ for such an example.  Recall that since $\btheta$ satisfies the KKT conditions of Problem~(\ref{eq:optimization problem}), then there are $\lambda_1,\ldots,\lambda_n$ such that for every $j \in J$ we have
\begin{equation}
\label{eq:kkt condition w}
	\bw_j = \sum_{i \in I} \lambda_i \nabla_{\bw_j} \left( y_i \cn_{\btheta}(\bx_i) \right) =  \sum_{i \in I} \lambda_i y_i v_j \phi'_{i,j} \bx_i~,
\end{equation}
where $\phi'_{i,j}$ is a subgradient of $\phi$ at $\bw_j^\top \bx_i + b_j$, i.e., if $\bw_j^\top \bx_i + b_j \neq 0$ then $\phi'_{i,j} = \onefunc[\bw_j^\top \bx_i + b_j \geq 0]$, and otherwise $\phi'_{i,j}$ is some value in $[0,1]$. Also we have $\lambda_i \geq 0$ for all $i$, and $\lambda_i=0$ if 
$y_i  \cn_{\btheta}(\bx_i) 
\neq 1$. Likewise, we have
\stam{
\begin{equation}
\label{eq:kkt condition v}
	v_j = \sum_{i \in I} \lambda_i \nabla_{v_j} \left( y_i \cn_{\btheta}(\bx_i) \right) =  \sum_{i \in I} \lambda_i y_i \phi(\bw_j^\top \bx_i + b_j)~.
\end{equation}
}%stam
\begin{equation}
\label{eq:kkt condition b}
	b_j = \sum_{i \in I} \lambda_i \nabla_{b_j} \left( y_i \cn_{\btheta}(\bx_i) \right) =  \sum_{i \in I} \lambda_i y_i v_j \phi'_{i,j}~.
\end{equation}
Now, consider the network output for an input $\bx$,
\begin{equation} \label{eq:network.output}
    \cn_\btheta(\bx) =
    \sum_{j\in J_+} v_j \phi(\bw_j^\top \bx + b_j) + \sum_{j\in J_-} v_j \phi(\bw_j^\top \bx + b_j) \geq \sum_{j\in J_+} v_j (\bw_j^\top \bx + b_j) + \sum_{j\in J_-} v_j \phi(\bw_j^\top \bx + b_j), 
\end{equation}
where we have used $\phi(z) \geq z$ for all $z\in \mathbb R$.  This suggests the following possibility: if we can ensure that $\sum_{j\in J_+} v_j (\bw_j^\top \bx + b_j)$ is large and positive while $\sum_{j\in J_-} v_j \phi(\bw_j^\top \bx+b_j)$ is not too negative, then the network will accurately classify the example $\bx$.   Using the KKT conditions and that $r\in Q_+$ (so $y_i=1$ for $i\in I^{(r)}$),  the first term in the above decomposition is equal to
\ignore{
\begin{align*}
    \sum_{j\in J_+} v_j (\bw_j^\top \bx + b_j) &= \sum_{j \in J_+} v_j \left[ \sum_{i\in I}  \lambda_i y_i v_j\phi'_{i,j}(\bx_i^\top \bx+1) \right]\\
    & =\sum_{j \in J_+} \left[ \left( \sum_{i \in I^{(r)}} \lambda_i v_j^2 \phi'_{i,j} (\bx_i^\top \bx + 1) \right) + \sum_{i \in I\setminus I^{(r)}} \lambda_i y_i v_j^2 \phi'_{i,j} (\bx_i^\top \bx + 1) \right]~.
\end{align*}
}%ignore
\begin{align*}
    \sum_{j\in J_+} v_j (\bw_j^\top \bx + b_j) &= \sum_{j \in J_+} v_j \left[ \sum_{i\in I}  \lambda_i y_i v_j\phi'_{i,j}(\bx_i^\top \bx+1) \right]
    \\
    &= \sum_{j \in J_+} \left[ \left( \sum_{i \in I^{(r)}} \lambda_i v_j^2 \phi'_{i,j} (\bx_i^\top \bx + 1) \right) + \sum_{q \in Q \setminus \{r\}} \sum_{i \in I^{(q)}} \lambda_i y_i v_j^2 \phi'_{i,j} (\bx_i^\top \bx + 1) \right]
    \\
    &\geq   \left( \sum_{i \in I^{(r)}} \sum_{j \in J_+} \lambda_i v_j^2 \phi'_{i,j} (\bx_i^\top \bx + 1) \right) - \sum_{q \in Q \setminus \{r\}} \sum_{i \in I^{(q)}} \sum_{j \in J_+} \lambda_i v_j^2 \phi'_{i,j} | \bx_i^\top \bx + 1|~.
\end{align*}
Since $\bx$ comes from cluster $r$ and the clusters are nearly orthogonal, the pairwise correlations $\bx_i^\top \bx$ will be large and positive when $i\in I^{(r)}$ but will be small in magnitude when 
$i \in I^{(q)}$ for $q \neq r$.
Thus, we can hope that this term will be large and positive if we can show that the quantity $\sum_{i\in I^{(r)}} \sum_{j\in J^+} \lambda_i v_j^2 \phi'_{i,j}$ is not too small relative to the quantity 
$\sum_{q \in Q \setminus \{r\}} \sum_{i \in I^{(q)}} \sum_{j \in J_+} \lambda_i v_j^2 \phi'_{i,j}$.
By similar arguments, in order to show the second term in~\eqref{eq:network.output} is not too negative, we need to understand how the quantity $\sum_{i\in I^{(q)}} \sum_{j\in J_-}  \lambda_i v_j^2 \phi'_{i,j}$ varies across different clusters $q \in Q$. 

The above sketch motivates a characterization of how the quantities
\[ \sum_{i\in I^{(q)}} \sum_{j\in J_+} \lambda_i v_j^2 \phi'_{i,j}, \quad \sum_{i\in I^{(q)}} \sum_{j\in J_-} \lambda_i v_j^2 \phi'_{i,j}\]
relate to each other for different clusters $q\in Q$.  
We will obtain upper and lower bounds for these quantities in Lemmas~\ref{lem:lam upper bound} and~\ref{lem:lam lower bound} below.
We now proceed with the proof.

Recall that $\Delta := 4 \sigma \sqrt d \ln(d)$.  Since by \assref{ass:dist} we have $k \leq \frac{d - \Delta + 1}{10(p+\Delta+1)}$, we let $c' \leq \frac{1}{10}$ be such that $k = c' \cdot \frac{d - \Delta + 1}{p+\Delta+1}$. 
Note that $d>\Delta$, and more precisely, the following holds
\begin{lemma} \label{lem:bound Delta}
    We have $\Delta \leq \frac{d}{21}$.
\end{lemma}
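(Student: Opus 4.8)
The plan is to read off the bound directly from the third bullet of \assref{ass:dist}, in the rearranged form $k \le \frac{d - \Delta + 1}{10(p + \Delta + 1)}$ recorded just before the lemma statement (recall $p := \max_{q \neq q'} |\inner{\bmu^{(q)},\bmu^{(q')}}| \ge 0$). The only external fact I need is that $k \ge 2$: this is guaranteed by the standing assumption that there exist $i,j \in [k]$ with $y^{(i)} \neq y^{(j)}$.

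Given $k \ge 2$, the inequality $k \le \frac{d-\Delta+1}{10(p+\Delta+1)}$ yields $2 \le \frac{d-\Delta+1}{10(p+\Delta+1)}$, hence $20(p+\Delta+1) \le d - \Delta + 1$. Dropping the nonnegative term $20p$ on the left gives $20(\Delta+1) \le d - \Delta + 1$, i.e. $21\Delta \le d - 19 \le d$, so $\Delta \le \frac{d}{21}$ as claimed. (One could even extract the slightly stronger $\Delta \le \frac{d-19}{21}$, but $\frac{d}{21}$ is all that is used downstream.)

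There is no real obstacle here; the statement is a one-line consequence of \assref{ass:dist} once the bound $k \ge 2$ is invoked. The only point worth a sentence of care is precisely that one cannot prove the claimed constant $21$ using only $k \ge 1$ — with $k = 1$ the assumption would merely give $\Delta \le \frac{d-9}{11}$, which exceeds $\frac{d}{21}$ for large $d$ — so the argument genuinely relies on the presence of clusters of both labels. I would state this explicitly so the reader sees where $k \ge 2$ enters.
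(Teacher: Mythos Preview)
Your proof is correct and follows essentially the same argument as the paper: use $k\ge 2$ (from the existence of clusters with both labels) together with $p\ge 0$ in the inequality $k(p+\Delta+1)\le \frac{d-\Delta+1}{10}$ to obtain $20(\Delta+1)\le d-\Delta+1$, hence $\Delta\le \frac{d-19}{21}\le \frac{d}{21}$. The paper's proof is the same one-liner, so there is nothing to add.
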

\begin{proof}
    Recall that $k \left( p + \Delta + 1 \right) \leq \frac{d - \Delta + 1 }{10}$. Since $k \geq 2$ and $p \geq 0$ it implies that $2(\Delta+1) \leq \frac{d - \Delta + 1 }{10}$. Hence, $\Delta \leq \frac{d-19}{21} \leq \frac{d}{21}$.
\end{proof}
%For every $q \in Q$ we denote $\lambda'_q = \sum_{i \in I^{(q)}} \lambda_i$.

We now show that the sums of the form $\sum_{i\in I^{(q)}} \sum_{j\in J_{\circ}} v_j^2 \lambda_i \phi'_{i,j}$, for $\circ \in \{+,-\}$, are never too large. 

\begin{lemma} \label{lem:lam upper bound}
	If $\cs$ satisfies the properties~\ref{p:noise.norm} through~\ref{p:sample.in.every.cluster}, then for all $q \in Q$ we have 
	\[
		\max\left\{ \sum_{i \in I^{(q)}} \sum_{j \in J_+} v_j^2 \lambda_i \phi'_{i,j}, \sum_{i \in I^{(q)}} \sum_{j \in J_-} v_j^2 \lambda_i \phi'_{i,j} \right\}
		\leq \frac{1}{(1-2c')(d-\Delta + 1)}~.
	\]
\stam{
	\[
		\sum_{i \in I^{(q)}} \sum_{j \in J_+} v_j^2 \lambda_i \phi'_{i,j} \leq \frac{1}{(1-2c')(d-\Delta + 1)}~,
	\]
	and 
	\[
		\sum_{i \in I^{(q)}} \sum_{j \in J_-} v_j^2 \lambda_i \phi'_{i,j} \leq \frac{1}{(1-2c')(d-\Delta + 1)}~.
	\]
}%stam
\end{lemma}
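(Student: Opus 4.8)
First note the trivial case: if $\lambda_i=0$ for every $i\in I^{(q)}$, then both sums on the left-hand side are $0$ and there is nothing to prove, so we may fix $i_0\in I^{(q)}$ with $\lambda_{i_0}>0$. By complementary slackness $y_{i_0}\cn_\btheta(\bx_{i_0})=1$, i.e. $\cn_\btheta(\bx_{i_0})=y^{(q)}$. Write $A_q^{+}:=\sum_{i\in I^{(q)}}\sum_{j\in J_+}v_j^2\lambda_i\phi'_{i,j}$ and $A_q^{-}:=\sum_{i\in I^{(q)}}\sum_{j\in J_-}v_j^2\lambda_i\phi'_{i,j}$. The core of the argument is a \emph{master inequality}: for every $q\in Q$ and every $\circ\in\{+,-\}$,
\[
(d-\Delta+1)\,A_q^{\circ}\ \le\ 1+(p+\Delta+1)\sum_{q'\ne q}\bigl(A_{q'}^{+}+A_{q'}^{-}\bigr)~.
\]
Granting this, the lemma follows quickly: let $M:=\max_{q\in Q}\max\{A_q^{+},A_q^{-}\}$ and apply the master inequality to a pair $(q,\circ)$ attaining $M$. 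Since $\sum_{q'\ne q}(A_{q'}^{+}+A_{q'}^{-})\le 2(k-1)M<2kM$, and by the definition of $c'$ we have $k(p+\Delta+1)=c'(d-\Delta+1)$, this gives $(d-\Delta+1)M\le 1+2c'(d-\Delta+1)M$, hence $M\le\frac{1}{(1-2c')(d-\Delta+1)}$, as claimed.

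To establish the master inequality I would plug $\bx_{i_0}$ into the KKT identities~(\ref{eq:kkt condition w}) and~(\ref{eq:kkt condition b}), which yield $\bw_j^\top\bx_{i_0}+b_j=v_j\sum_{i\in I}\lambda_i y_i\phi'_{i,j}(\bx_i^\top\bx_{i_0}+1)$ and therefore, for each sign $\circ$,
\[
\sum_{j\in J_\circ}v_j(\bw_j^\top\bx_{i_0}+b_j)=\sum_{i\in I}\lambda_i y_i(\bx_i^\top\bx_{i_0}+1)\sum_{j\in J_\circ}v_j^2\phi'_{i,j}~.
\]
Splitting the outer sum into $i\in I^{(q)}$ and $i\notin I^{(q)}$, and using the within-cluster bound $\bx_i^\top\bx_{i_0}\ge d-\Delta$ (property~\ref{p:noise.norm}ff., the $\cluster(i)=\cluster(i')$ case, which also covers $i=i_0$) together with the cross-cluster bound $|\bx_i^\top\bx_{i_0}|\le p+\Delta$, isolates a $(d-\Delta+1)A_q^{\circ}$ term up to an error of $(p+\Delta+1)\sum_{q'\ne q}A_{q'}^{\circ}$. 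It then remains to bound $\sum_{j\in J_\circ}v_j(\bw_j^\top\bx_{i_0}+b_j)$ above by $1$ plus a small error. When the sign $\circ$ agrees with $y^{(q)}$ this is immediate from $\phi(z)\ge z$: the sum is at most the corresponding ($J_+$ or $J_-$) part of $\cn_\btheta(\bx_{i_0})$, and since $\cn_\btheta(\bx_{i_0})=y^{(q)}$, it suffices to control the \emph{opposite}-sign part of the network at $\bx_{i_0}$; rewriting that part via the identity $\phi(z)=\phi'(z)z$ and the KKT formulas, its cluster-$q$ contribution has the favorable sign (nonpositive, resp. nonnegative), so the whole part is at most $(p+\Delta+1)\sum_{q'\ne q}(A_{q'}^{+}+A_{q'}^{-})$.

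The main obstacle is the \emph{opposite-sign} case of the master inequality, i.e. bounding $A_q^{-}$ for $q\in Q_+$ (and symmetrically $A_q^{+}$ for $q\in Q_-$). Here $\phi(z)\ge z$ points the wrong way; rewriting $z=\phi(z)-\phi(-z)$ forces the appearance of terms of the form $\sum_{j\in J_-}v_j^2(1-\phi'_{i_0,j})\phi'_{i,j}$ — negative neurons that are \emph{inactive} at the probe point $\bx_{i_0}$ but \emph{active} at another cluster-$q$ sample $\bx_i$ — and the cluster-$q$ contribution of these terms now has the \emph{un}favorable sign, so a single probe point no longer suffices. I would control this mismatch term by using that all points of cluster $q$ are within $2\sigma\sqrt{2d}$ of one another (property~\ref{p:noise.norm}), so a neuron inactive at $\bx_{i_0}$ but active at $\bx_i$ necessarily has a small pre-activation at $\bx_i$; alternatively, one can run the above derivation with $\bx_{i_0}$ replaced by \emph{every} margin point of cluster $q$ and combine the resulting inequalities, e.g. by averaging with weights $\lambda_{i_0}$. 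Making this step work without any a priori bound on the individual $\|\bw_j\|$ or $|v_j|$ is the one genuinely delicate point; the remainder is bookkeeping with the dataset properties of Proposition~\ref{prop:trainingdata} and the arithmetic in the first paragraph.
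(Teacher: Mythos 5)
You have the right overall shape — derive a single inequality for each $(q,\circ)$ pair and close the argument by taking the maximum — and your computation confirming that the ``master inequality'' would finish the job is correct (with $k(p+\Delta+1)=c'(d-\Delta+1)$ it indeed yields $M \le \frac{1}{(1-2c')(d-\Delta+1)}$). Your derivation of the master inequality in the \emph{agreeing} case ($\circ$ matching $y^{(q)}$) is also sound: split $\cn_\btheta(\bx_{i_0})=1$ into the $J_+$ and $J_-$ parts, use $\phi(z)\ge z$ on the agreeing side, and observe that for the opposite-sign neurons the cluster-$q$ contribution to the pre-activation has the favorable sign so it can be dropped before applying $\phi(z)\le \max\{0,z\}$. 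This matches what the paper does in its ``Case 1''.

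The gap is exactly where you flagged it: the opposite-sign case of the master inequality. Your suggested repairs (rewriting $z=\phi(z)-\phi(-z)$; using that cluster-$q$ points are within $O(\sigma\sqrt d)$ of each other; averaging over all margin points with weights $\lambda_{i_0}$) do not actually close it, because all of them try to compare pre-activations of an opposite-sign neuron at \emph{different} points of cluster $q$, which requires a handle on $\|\bw_j\|$ that you correctly note you do not have. The idea you are missing is that you do not need the margin condition at all for this case. Fix an opposite-sign neuron $j$ (say $q\in Q_+$ and $j\in J_-$). If $\phi'_{i,j}=0$ for all $i\in I^{(q)}$, then $j$ contributes nothing to $A_q^-$ and there is nothing to prove. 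Otherwise there is some $s\in I^{(q)}$ with $\phi'_{s,j}>0$, hence $\bw_j^\top\bx_s+b_j\ge 0$. Expanding this single inequality via~(\ref{eq:kkt condition w})--(\ref{eq:kkt condition b}) and observing that every cluster-$q$ term on the right has the sign $y^{(q)}v_j<0$ with magnitude at least $d-\Delta+1$, while every cross-cluster term has magnitude at most $p+\Delta+1$, gives directly
\[
(d-\Delta+1)\sum_{i\in I^{(q)}}\lambda_i|v_j|\phi'_{i,j}\ \le\ (p+\Delta+1)\sum_{q'\ne q}\sum_{i\in I^{(q')}}\lambda_i|v_j|\phi'_{i,j}~,
\]
and multiplying by $|v_j|$ and summing over $j\in J_-$ yields $(d-\Delta+1)A_q^-\le(p+\Delta+1)\sum_{q'\ne q}A_{q'}^-$. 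This is strictly stronger than the opposite-sign master inequality (no additive $1$, and no $A_{q'}^+$ on the right), and it immediately forces $A_q^-\le c'\cdot\max_{q'}A_{q'}^-$, which is enough. That is precisely the paper's ``Case 2'', and the key point is that it hinges on the nonnegativity of an active pre-activation, not on the margin constraint — so it sidesteps the rescaling obstruction you identified. Once you add this step your proof would be correct and essentially equivalent to the paper's, just organized as a single inequality rather than a max-and-contradiction.
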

\begin{proof}
	Let $\alpha_+ = \max_{q \in Q} \left( \sum_{i \in I^{(q)}} \sum_{j \in J_+} v_j^2 \lambda_i \phi'_{i,j} \right)$, and let  $\alpha_- = \max_{q \in Q} \left( \sum_{i \in I^{(q)}} \sum_{j \in J_-} v_j^2 \lambda_i \phi'_{i,j} \right)$. Assume w.l.o.g. that $\alpha_+ \geq \alpha_-$ (the proof for the case $\alpha_+ < \alpha_-$ is similar). Let $\alpha = \alpha_+$ and $r \in \argmax_{q \in Q} \left( \sum_{i \in I^{(q)}} \sum_{j \in J_+} v_j^2 \lambda_i \phi'_{i,j} \right)$.
	Assume towards contradiction that $\alpha > \frac{1}{(1-2c')(d-\Delta + 1)}$. 
	Note that we have $\sum_{i \in I^{(r)}} \lambda_i > 0$, since otherwise $\alpha = 0$. Hence, there exists $i' \in I^{(r)}$ with $\lambda_{i'} > 0$, and thus $y_{i'} \cn_\btheta(\bx_{i'}) = 1$.
	
	By \eqref{eq:kkt condition w} and~(\ref{eq:kkt condition b}) for every $j \in J$ we have 
	\begin{align} \label{eq:single neuron separating r}
		\bw_j^\top \bx_{i'} + b_j
		&=  \sum_{i \in I} \lambda_i y_i v_j \phi'_{i,j} \bx_i^\top \bx_{i'} + \sum_{i \in I} \lambda_i y_i v_j \phi'_{i,j} \nonumber
		\\
		&= \sum_{i \in I} \lambda_i y_i v_j \phi'_{i,j} (\bx_i^\top \bx_{i'} + 1) \nonumber
		\\
		&= \left( \sum_{q \in Q \setminus \{r\}} \sum_{i \in I^{(q)}} \lambda_i y_i v_j \phi'_{i,j} (\bx_i^\top \bx_{i'} + 1) \right) +  \sum_{i \in I^{(r)}} \lambda_i y_i v_j \phi'_{i,j} (\bx_i^\top \bx_{i'} + 1)~.
	\end{align}
	
	We consider two cases:
	
	{\bf Case 1:} Assume that $r \in Q_+$. We have
	\begin{align} \label{eq:positive clusters bound1}
		1
		&= y_{i'} \cn_\btheta(\bx_{i'}) \nonumber
		\\
		&= 1 \cdot \sum_{j \in J} v_j \phi(\bw_j^\top \bx_{i'} + b_j) \nonumber
		\\
		&\geq \sum_{j \in J_+} v_j (\bw_j^\top \bx_{i'} + b_j) + \sum_{j \in J_-} v_j \phi(\bw_j^\top \bx_{i'} + b_j)~.
	\end{align}
	By the case assumption $r\in Q_+$,~\eqref{eq:single neuron separating r} and our assumptions on the dataset $\cs$, we have
	\begin{align} \label{eq:positive clusters bound2}
		\sum_{j \in J_+} v_j (\bw_j^\top \bx_{i'} + b_j)
		&= \sum_{j \in J_+} \left[ \left( \sum_{q \in Q \setminus \{r\}} \sum_{i \in I^{(q)}} \lambda_i y_i v_j^2 \phi'_{i,j} (\bx_i^\top \bx_{i'} + 1) \right) +  \sum_{i \in I^{(r)}} \lambda_i y_i v_j^2 \phi'_{i,j} (\bx_i^\top \bx_{i'} + 1) \right]	\nonumber
		\\
		&\geq \sum_{j \in J_+} \left[ \left( - \sum_{q \in Q \setminus \{r\}} \sum_{i \in I^{(q)}} \lambda_i v_j^2 \phi'_{i,j} (p + \Delta + 1) \right) +  \sum_{i \in I^{(r)}} \lambda_i v_j^2 \phi'_{i,j} (d - \Delta + 1) \right] \nonumber
		\\
		&= \left( - (p + \Delta + 1) \sum_{q \in Q \setminus \{r\}} \sum_{i \in I^{(q)}} \sum_{j \in J_+}  \lambda_i v_j^2 \phi'_{i,j} \right) +  (d - \Delta + 1)  \sum_{i \in I^{(r)}} \sum_{j \in J_+} \lambda_i v_j^2 \phi'_{i,j} \nonumber
		\\
		&\geq - (p + \Delta + 1) k \alpha +  (d - \Delta + 1) \alpha~.
	\end{align}
In the last line we have used the definition $\alpha = \max_{q \in Q} \left( \sum_{i \in I^{(q)}} \sum_{j \in J_+} v_j^2 \lambda_i \phi'_{i,j} \right)$ and that the cluster with index $r$ achieves this maximum.  
	Moreover, using \eqref{eq:single neuron separating r} again we have
	\begin{align} \label{eq:positive clusters bound3}
		 \sum_{j \in J_-} v_j \phi(\bw_j^\top \bx_{i'} + b_j)
		 &= \sum_{j \in J_-} v_j \phi\left(  \left( \sum_{q \in Q \setminus \{r\}} \sum_{i \in I^{(q)}} \lambda_i y_i v_j \phi'_{i,j} (\bx_i^\top \bx_{i'} + 1) \right) +  \sum_{i \in I^{(r)}} \lambda_i y_i v_j \phi'_{i,j} (\bx_i^\top \bx_{i'} + 1) \right) \nonumber
		 \\
		 &\geq \sum_{j \in J_-} v_j \phi\left(  \left( \sum_{q \in Q \setminus \{r\}} \sum_{i \in I^{(q)}} \lambda_i | v_j | \phi'_{i,j} (p + \Delta + 1) \right) +  \sum_{i \in I^{(r)}} \lambda_i v_j \phi'_{i,j} (d - \Delta + 1) \right) \nonumber
		 \\
		 &\geq \sum_{j \in J_-} v_j \phi\left(  \sum_{q \in Q \setminus \{r\}} \sum_{i \in I^{(q)}} \lambda_i | v_j | \phi'_{i,j} (p + \Delta + 1) \right) \nonumber
		 \\
		 &= \sum_{j \in J_-} v_j \left( \sum_{q \in Q \setminus \{r\}} \sum_{i \in I^{(q)}} \lambda_i | v_j | \phi'_{i,j} (p + \Delta + 1) \right)  \nonumber
		 \\
		 &= -(p + \Delta + 1)  \left( \sum_{q \in Q \setminus \{r\}} \sum_{i \in I^{(q)}} \sum_{j \in J_-}  \lambda_i  v_j^2  \phi'_{i,j} \right)  \nonumber
		 \\
		 &\geq -(p + \Delta + 1)  k \alpha~.
	\end{align}
	The first inequality above uses the properties of the dataset $\cs$, that $j\in J_{-}$ and $\phi$ is non-decreasing, as well as the case assumption that $r\in Q_+$.  The last inequality uses the definition of $\alpha$.  
	Combining \eqref{eq:positive clusters bound1},~(\ref{eq:positive clusters bound2}), and~(\ref{eq:positive clusters bound3}) we get
	\begin{align*}
		1
		%&= y_{i'} \cn_\btheta(\bx_{i'})
		&\geq - (p + \Delta + 1) k \alpha + (d - \Delta + 1) \alpha - (p + \Delta + 1)  k \alpha 
		\\
		&= \alpha \left( (d - \Delta + 1) -2k (p + \Delta + 1) \right)
		\\		
		&= \alpha \left( (d - \Delta + 1) -2 \cdot \frac{c' (d - \Delta + 1)}{p+\Delta+ 1}  (p + \Delta + 1) \right)
		\\
		&= \alpha \left( (d - \Delta + 1) -2 c' (d - \Delta + 1)  \right)
		\\
		&= \alpha (d - \Delta + 1) (1-2c')
		\\
		&> \frac{1}{(1-2c')(d-\Delta + 1)}  (d - \Delta + 1)(1-2c') 
		\\
		&= 1~,
	\end{align*}
	where in the last inequality we used our assumption on $\alpha$.  We have thus reached a contradiction following our assumption on $\alpha$ in the case where $r\in Q_+$. 
	
	{\bf Case 2:} Assume that $r \in Q_-$. Fix some $j \in J_+$. If $\phi'_{i,j} = 0$ for every $i \in I^{(r)}$ then 
	\begin{equation} \label{eq:case 2 simple case}
		\sum_{i \in I^{(r)}} \lambda_i v_j \phi'_{i,j} (d - \Delta + 1) = 0 \leq \sum_{q \in Q \setminus \{r\}} \sum_{i \in I^{(q)}} \lambda_i v_j \phi'_{i,j} (p + \Delta + 1)~.
	\end{equation}
	Otherwise, i.e., if there is $s \in  I^{(r)}$ such that $\phi'_{s,j} > 0$, then by the definition of $\phi'_{s,j}$ we have $\bw_j^\top \bx_s + b_j \geq 0$, and hence by \eqref{eq:single neuron separating r} we have
	\begin{align*}
		0 
		&\leq \bw_j^\top \bx_s + b_j
		\\
		&= \left( \sum_{q \in Q \setminus \{r\}} \sum_{i \in I^{(q)}} \lambda_i y_i v_j \phi'_{i,j} (\bx_i^\top \bx_{s} + 1) \right) +  \sum_{i \in I^{(r)}} \lambda_i y_i v_j \phi'_{i,j} (\bx_i^\top \bx_{s} + 1)
		\\
		&\leq \left( \sum_{q \in Q \setminus \{r\}} \sum_{i \in I^{(q)}} \lambda_i v_j \phi'_{i,j} (p + \Delta + 1) \right) - \sum_{i \in I^{(r)}} \lambda_i v_j \phi'_{i,j} (d - \Delta + 1)
	\end{align*}
	Hence, we get again an expression similar to \eqref{eq:case 2 simple case}. Thus for any $j\in J_+$ we have,
	\[
		\sum_{i \in I^{(r)}} \lambda_i v_j \phi'_{i,j} 
		\leq \frac{ p + \Delta + 1 }{d - \Delta + 1} \left( \sum_{q \in Q \setminus \{r\}} \sum_{i \in I^{(q)}} \lambda_i v_j \phi'_{i,j} \right)~.
	\]
	Since this holds for every $j \in J_+$, we get
	\begin{align*}
		\sum_{i \in I^{(r)}} \sum_{j \in J_+}   \lambda_i v_j^2 \phi'_{i,j}
		&= \sum_{j \in J_+} v_j \sum_{i \in I^{(r)}} \lambda_i v_j \phi'_{i,j}
		\\
		&\leq \sum_{j \in J_+} v_j \cdot \frac{p + \Delta + 1}{d - \Delta + 1} \left( \sum_{q \in Q \setminus \{r\}} \sum_{i \in I^{(q)}} \lambda_i v_j \phi'_{i,j} \right)
		\\
		&= \frac{p + \Delta + 1}{d - \Delta + 1} \sum_{q \in Q \setminus \{r\}} \sum_{i \in I^{(q)}} \sum_{j \in J_+} \lambda_i v_j^2 \phi'_{i,j}  
		\\
		&\leq \frac{p + \Delta + 1}{d - \Delta + 1} \cdot k \cdot \max_{q \in Q} \left( \sum_{i \in I^{(q)}} \sum_{j \in J_+} \lambda_i v_j^2 \phi'_{i,j} \right)
		\\
		&= \frac{p + \Delta + 1}{d - \Delta + 1} \cdot \frac{c'(d-\Delta+1)}{p+\Delta+1} \cdot \max_{q \in Q} \left(\sum_{i \in I^{(q)}} \sum_{j \in J_+} \lambda_i v_j^2 \phi'_{i,j} \right)
		\\
		&< \max_{q \in Q} \left( \sum_{i \in I^{(q)}} \sum_{j \in J_+} \lambda_i v_j^2 \phi'_{i,j} \right). 
	\end{align*}
Since $r \in \argmax_{q \in Q} \left( \sum_{i \in I^{(q)}} \sum_{j \in J_+} v_j^2 \lambda_i \phi'_{i,j} \right)$, we have reached a contradiction following our assumption on $\alpha$ for the case $r\in Q_{-}$.  This completes the proof that we must have $\alpha \leq \frac{1}{(1-2c')(d-\Delta+1)}$. 
\end{proof}

We next show that the relevant sums of the form $\sum_{i\in I^{(q)}} \sum_{j\in J_{\circ}} v_j^2 \lambda_i \phi'_{i,j}$, for $\circ \in \{+,-\}$, are never too small. 

\begin{lemma} \label{lem:lam lower bound}
	If $\cs$ satisfies the properties~\ref{p:noise.norm} through~\ref{p:sample.in.every.cluster}, then for all $q \in Q_+$ we have 
	\[
		\sum_{i \in I^{(q)}} \sum_{j \in J_+} v_j^2 \lambda_i \phi'_{i,j} \geq \left( 1- \frac{c'}{1-2c'} \right) \frac{1}{3d+\Delta + 1}~,
	\]
	and for all $q \in Q_-$ we have 
	\[
		\sum_{i \in I^{(q)}} \sum_{j \in J_-} v_j^2 \lambda_i \phi'_{i,j} \geq \left( 1- \frac{c'}{1-2c'} \right) \frac{1}{3d+\Delta + 1}~.
	\]
\end{lemma}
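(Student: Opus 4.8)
The plan is to extract the lower bound from the \emph{feasibility} of the KKT point at a training example lying inside cluster $q$, reusing the computation behind the proof of \lemref{lem:lam upper bound}. Fix $q \in Q_+$. By property~\ref{p:sample.in.every.cluster} there exists $i' \in I^{(q)}$, and since a KKT point of Problem~(\ref{eq:optimization problem}) is feasible, $\cn_\btheta(\bx_{i'}) = y_{i'}\cn_\btheta(\bx_{i'}) \geq 1$. Writing $\cn_\btheta(\bx_{i'}) = \sum_{j \in J_+} v_j \phi(\bw_j^\top \bx_{i'} + b_j) + \sum_{j \in J_-} v_j \phi(\bw_j^\top \bx_{i'} + b_j)$ and noting the $J_-$ sum is $\leq 0$ (as $v_j < 0$ and $\phi \geq 0$), we get $\sum_{j \in J_+} v_j \phi(\bw_j^\top \bx_{i'} + b_j) \geq 1$. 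So it suffices to upper bound the left-hand side by $\frac{c'}{1-2c'} + (3d+\Delta+1)\sum_{i \in I^{(q)}} \sum_{j \in J_+} v_j^2 \lambda_i \phi'_{i,j}$ and rearrange.

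For that upper bound I would start from \eqref{eq:kkt condition w} and \eqref{eq:kkt condition b}, exactly as in the derivation of \eqref{eq:single neuron separating r}, to write, for every $j \in J$, $\bw_j^\top \bx_{i'} + b_j = \sum_{q' \in Q \setminus \{q\}} \sum_{i \in I^{(q')}} \lambda_i y_i v_j \phi'_{i,j}(\bx_i^\top \bx_{i'} + 1) + \sum_{i \in I^{(q)}} \lambda_i y_i v_j \phi'_{i,j}(\bx_i^\top \bx_{i'} + 1)$. Fix $j \in J_+$, so $v_j > 0$, and since $q \in Q_+$ we have $y_i = 1$ for $i \in I^{(q)}$, so the within-cluster terms are nonnegative. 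Using the dataset properties~\ref{p:noise.norm} through~\ref{p:sample.in.every.cluster} (in particular $\bx_i^\top \bx_{i'} \leq 3d + \Delta$ for $i,i'$ in the same cluster and $|\bx_i^\top \bx_{i'}| \leq p + \Delta$ for $i,i'$ in distinct clusters), together with $\lambda_i, \phi'_{i,j} \geq 0$, we get $\bw_j^\top \bx_{i'} + b_j \leq v_j\big[(3d+\Delta+1)\sum_{i \in I^{(q)}} \lambda_i \phi'_{i,j} + (p + \Delta + 1)\sum_{q' \neq q}\sum_{i \in I^{(q')}} \lambda_i \phi'_{i,j}\big]$. The bracketed expression is nonnegative, so by monotonicity of $\phi$ and $\phi(z)=z$ for $z\ge 0$, $\phi(\bw_j^\top \bx_{i'} + b_j)$ is bounded by the same right-hand side; multiplying by $v_j > 0$ and summing over $j \in J_+$ yields $1 \leq \sum_{j \in J_+} v_j \phi(\bw_j^\top \bx_{i'} + b_j) \leq (3d + \Delta + 1)\sum_{i \in I^{(q)}}\sum_{j \in J_+}v_j^2 \lambda_i \phi'_{i,j} + (p + \Delta + 1)\sum_{q' \neq q}\sum_{i \in I^{(q')}}\sum_{j \in J_+}v_j^2 \lambda_i \phi'_{i,j}$.

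To finish, bound the cross-cluster term with \lemref{lem:lam upper bound}: each per-cluster sum is $\leq \frac{1}{(1-2c')(d - \Delta + 1)}$, and there are at most $k$ clusters other than $q$, so the cross term is at most $(p + \Delta + 1)\cdot \frac{k}{(1-2c')(d-\Delta+1)} = \frac{c'}{1-2c'}$, using the identity $k(p+\Delta+1) = c'(d - \Delta + 1)$. Rearranging gives $\sum_{i\in I^{(q)}}\sum_{j\in J_+}v_j^2\lambda_i\phi'_{i,j} \geq \left(1 - \frac{c'}{1-2c'}\right)\frac{1}{3d+\Delta+1}$, which is positive since $c' \leq \tfrac1{10}$. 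The case $q \in Q_-$ is symmetric: pick $i' \in I^{(q)}$, use $y_{i'}=-1$ to get $-\cn_\btheta(\bx_{i'}) \geq 1$, discard the nonnegative $\sum_{j\in J_+}v_j\phi(\cdots)$ contribution to obtain $\sum_{j\in J_-}(-v_j)\phi(\bw_j^\top\bx_{i'}+b_j)\geq 1$, and repeat the argument with $|v_j|$ in place of $v_j$ and $J_-$ in place of $J_+$ (note that for $j \in J_-$ and $i \in I^{(q)}$ with $q \in Q_-$ one again has $y_i v_j = |v_j| > 0$, so the within-cluster terms remain nonnegative).

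The only genuine obstacle is the sign bookkeeping: one must check that within the fixed cluster the coefficients $y_i v_j$ are positive so those terms reinforce rather than cancel, that the cross-cluster terms can only be controlled in absolute value through the near-orthogonality properties, and that the monotonicity step is legitimate because the derived upper bound on $\bw_j^\top\bx_{i'}+b_j$ is itself nonnegative. Beyond the dataset properties, the only quantitative inputs are \lemref{lem:lam upper bound} and the exact relation $k(p+\Delta+1) = c'(d-\Delta+1)$, which is precisely what collapses the cross term to $\frac{c'}{1-2c'} < 1$ and keeps the final constant strictly positive.
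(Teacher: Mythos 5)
Your proposal is correct and follows essentially the same route as the paper's proof: both start from feasibility $y_{i'}\cn_\btheta(\bx_{i'})\ge 1$ at a training point in cluster $q$, drop the negative-neuron contribution, upper-bound the remaining positive-neuron sum via the KKT representation and the dataset correlation bounds, control the cross-cluster term with \lemref{lem:lam upper bound} and the identity $k(p+\Delta+1)=c'(d-\Delta+1)$, and rearrange. The only cosmetic difference is that you track signs to show the within-cluster contributions are nonnegative and invoke monotonicity of $\phi$, whereas the paper simply bounds $\phi(z)\le|z|$ and applies the triangle inequality; both yield the identical intermediate inequality.
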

\begin{proof}
	Let $r \in Q_+$ and let $s \in I^{(r)}$. We have
	\[
		1 
		\leq \cn_\btheta(\bx_s) 
		= \sum_{j \in J} v_j \phi(\bw_j^\top \bx_s + b_j) 
		\leq \sum_{j \in J_+} v_j \phi(\bw_j^\top \bx_s + b_j) 
		\leq \sum_{j \in J_+} v_j \left| \bw_j^\top \bx_s + b_j \right|~.
	\]
	By \eqref{eq:kkt condition w} and~(\ref{eq:kkt condition b}), since $r\in Q_+$ the above equals
	\begin{align*}
		\sum_{j \in J_+} v_j & \left|  \sum_{i \in I} \left( \lambda_i y_i v_j \phi'_{i,j} \bx_i^\top \bx_s + \lambda_i y_i v_j \phi'_{i,j} \right) \right|
		\leq \sum_{j \in J_+} v_j \sum_{q \in Q} \sum_{i \in I^{(q)}} \left| \lambda_i y_i v_j \phi'_{i,j} (\bx_i^\top \bx_s + 1)\right|
		\\
		&=  \sum_{j \in J_+} v_j \left[ \left( \sum_{i \in I^{(r)}}  \lambda_i v_j \phi'_{i,j} \left| \bx_i^\top \bx_s + 1 \right| \right) + \sum_{q \in Q \setminus \{r\}} \sum_{i \in I^{(q)}}  \lambda_i v_j \phi'_{i,j} \left| \bx_i^\top \bx_s + 1 \right| \right] 
		\\
		&= \left( \sum_{i \in I^{(r)}} \sum_{j \in J_+}  \lambda_i v_j^2 \phi'_{i,j} \left| \bx_i^\top \bx_s + 1 \right| \right)
		  + \sum_{q \in Q \setminus \{r\}} \sum_{i \in I^{(q)}}  \sum_{j \in J_+}  \lambda_i v_j^2 \phi'_{i,j} \left| \bx_i^\top \bx_s + 1 \right|
		\\
		&\leq \left( (3d + \Delta + 1) \sum_{i \in I^{(r)}} \sum_{j \in J_+}  \lambda_i v_j^2 \phi'_{i,j} \right)
		  + (p +\Delta + 1)\sum_{q \in Q \setminus \{r\}} \sum_{i \in I^{(q)}}  \sum_{j \in J_+}  \lambda_i v_j^2 \phi'_{i,j} 
	\end{align*}
The final inequality uses the properties of the dataset $\cs$.	Combining the above with \lemref{lem:lam upper bound} we get
	\begin{align*}
		1 
		&\leq \left( (3d + \Delta + 1) \sum_{i \in I^{(r)}} \sum_{j \in J_+}  \lambda_i v_j^2 \phi'_{i,j} \right) + (p +\Delta + 1) k \cdot \frac{1}{(1-2c')(d-\Delta + 1)}
		\\
		&= \left( (3d + \Delta + 1) \sum_{i \in I^{(r)}} \sum_{j \in J_+}  \lambda_i v_j^2 \phi'_{i,j} \right) + (p +\Delta + 1) \cdot \frac{c'(d-\Delta + 1)}{p+\Delta+1} \cdot \frac{1}{(1-2c')(d-\Delta + 1)} 
		\\
		&= \left( (3d + \Delta + 1) \sum_{i \in I^{(r)}} \sum_{j \in J_+}  \lambda_i v_j^2 \phi'_{i,j} \right) +\frac{c'}{1-2c'}~. 
	\end{align*}
	Therefore,
	\[
		 \sum_{i \in I^{(r)}} \sum_{j \in J_+}  \lambda_i v_j^2 \phi'_{i,j} 
		 \geq \left( 1- \frac{c'}{1-2c'} \right) \frac{1}{3d+\Delta + 1}~.
	\]
	By similar arguments with $r \in Q_-$ we also get 
	\[
		 \sum_{i \in I^{(r)}} \sum_{j \in J_-}  \lambda_i v_j^2 \phi'_{i,j} 
		 \geq \left( 1- \frac{c'}{1-2c'} \right) \frac{1}{3d+\Delta + 1}~.
	\]
\end{proof}

Recall that test examples will come from one of the $k$ nearly-orthogonal clusters.  Since the clusters are nearly-orthogonal, the pairwise correlations between the test example and training data from the same cluster will be much larger than the pairwise correlations between the test example and training data from the other (nearly-orthogonal) clusters.  To characterize the decision boundary of the neural network on test data it therefore suffices to characterize the decision boundary for an example $\bx$ that is (1) highly correlated to examples from a given cluster and (2) nearly-orthogonal to samples from other clusters.    The next lemma leverages the structural conditions provided in Lemmas~\ref{lem:lam upper bound} and~\ref{lem:lam lower bound} to show exactly this.
\begin{lemma} \label{lem:good example labeld correctly}
Suppose $\cs$ satisfies the properties~\ref{p:noise.norm} through~\ref{p:sample.in.every.cluster}. 	Let $\bx \in \reals^d$ and $r \in Q$ be such that for all $i \in I^{(r)}$ we have $\inner{\bx,\bx_i} \in \left[d - \Delta, d + \Delta \right]$, and for all $q \in Q \setminus \{r\}$ and $i \in I^{(q)}$ we have $| \inner{\bx,\bx_i} | \leq p + \Delta$.  Then, $\sign\left(\cn_{\btheta}(\bx)\right) = y^{(r)}$.
\end{lemma}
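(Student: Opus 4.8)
The plan is to reduce to the case $r \in Q_+$ (so $y^{(r)} = 1$); the case $r \in Q_-$ is entirely symmetric, with the roles of $J_+ \leftrightarrow J_-$ and $Q_+ \leftrightarrow Q_-$ swapped. It then suffices to show $\cn_\btheta(\bx) > 0$. I would start from the decomposition in \eqref{eq:network.output},
\[
    \cn_\btheta(\bx) \geq \sum_{j\in J_+} v_j(\bw_j^\top\bx + b_j) + \sum_{j\in J_-} v_j\phi(\bw_j^\top\bx + b_j),
\]
and show that the first (positive-neuron) sum is bounded below by a positive $\Theta(1)$ quantity, while the second (negative-neuron) sum is only slightly negative.

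For the positive-neuron sum I would substitute the KKT identity $\bw_j^\top\bx + b_j = \sum_{i\in I}\lambda_i y_i v_j\phi'_{i,j}(\bx_i^\top\bx + 1)$ coming from \eqref{eq:kkt condition w} and \eqref{eq:kkt condition b}, then split the inner sum into $i\in I^{(r)}$ (where $y_i = 1$ and $\bx_i^\top\bx + 1 \geq d - \Delta + 1 > 0$ by hypothesis) and $i\in I^{(q)}$ for $q\neq r$ (where $|\bx_i^\top\bx + 1| \leq p + \Delta + 1$ by hypothesis), giving
\[
    \sum_{j\in J_+} v_j(\bw_j^\top\bx + b_j) \geq (d-\Delta+1)\!\sum_{i\in I^{(r)}}\sum_{j\in J_+}\!\lambda_i v_j^2\phi'_{i,j} \;-\; (p+\Delta+1)\!\sum_{q\neq r}\sum_{i\in I^{(q)}}\sum_{j\in J_+}\!\lambda_i v_j^2\phi'_{i,j}.
\]
The lower bound of \lemref{lem:lam lower bound} controls the first term from below, and the upper bound of \lemref{lem:lam upper bound} applied to each of the $k-1$ clusters $q \neq r$, together with $k(p+\Delta+1) = c'(d-\Delta+1)$, controls the second term from above; this yields $\sum_{j\in J_+} v_j(\bw_j^\top\bx + b_j) \geq \bigl(1 - \tfrac{c'}{1-2c'}\bigr)\tfrac{d-\Delta+1}{3d+\Delta+1} - \tfrac{c'}{1-2c'}$.

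For the negative-neuron sum I would argue exactly as in \eqref{eq:positive clusters bound3}: for $j\in J_-$ the contribution of $i\in I^{(r)}$ to $\bw_j^\top\bx + b_j$ is nonpositive (since $v_j < 0$, $y_i = 1$, $\bx_i^\top\bx + 1 > 0$), so $\bw_j^\top\bx + b_j \leq (p+\Delta+1)|v_j|\sum_{q\neq r}\sum_{i\in I^{(q)}}\lambda_i\phi'_{i,j} =: B_j \geq 0$; since $\phi$ is nondecreasing and $v_j < 0$ this gives $v_j\phi(\bw_j^\top\bx + b_j) \geq v_j B_j$, and summing over $j\in J_-$ and applying \lemref{lem:lam upper bound} gives $\sum_{j\in J_-} v_j\phi(\bw_j^\top\bx + b_j) \geq -\tfrac{c'}{1-2c'}$. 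Combining the two estimates, $\cn_\btheta(\bx) \geq \bigl(1 - \tfrac{c'}{1-2c'}\bigr)\tfrac{d-\Delta+1}{3d+\Delta+1} - \tfrac{2c'}{1-2c'}$. To finish, I would write $\beta := \tfrac{c'}{1-2c'}$, note $\beta \leq \tfrac18$ because $c' \leq \tfrac1{10}$, reduce the claim to $\tfrac{d-\Delta+1}{3d+\Delta+1} > \tfrac{2\beta}{1-\beta}$, bound the right side by $\tfrac27$, and observe that $\Delta \leq \tfrac{d}{21}$ (\lemref{lem:bound Delta}) forces $7(d-\Delta+1) > 2(3d+\Delta+1)$, i.e. the left side exceeds $\tfrac27$. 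Hence $\cn_\btheta(\bx) > 0 = \sign$-threshold, and $\sign(\cn_\btheta(\bx)) = 1 = y^{(r)}$. The main obstacle is not conceptual but careful bookkeeping: tracking signs correctly when the KKT identities are substituted into both neuron groups (in particular that the cluster-$r$ terms enter the negative neurons with a favorable sign), and confirming that the closing numerical inequality genuinely holds with the slack built into \assref{ass:dist}.
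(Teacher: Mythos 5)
Your proof is correct and takes essentially the same route as the paper: the same decomposition via $\phi(z)\geq z$, the same KKT substitution into both neuron groups, the same application of \lemref{lem:lam upper bound} and \lemref{lem:lam lower bound} to control the cluster-$r$ and off-cluster sums, and the same final arithmetic relying on $c'\leq \tfrac{1}{10}$ and $\Delta\leq\tfrac{d}{21}$. Your bookkeeping for the negative-neuron term (including the favorable sign of the cluster-$r$ contribution) and the closing inequality $\tfrac{d-\Delta+1}{3d+\Delta+1}>\tfrac{2\beta}{1-\beta}$ both check out.
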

\begin{proof}
	We prove the claim for $r \in Q_+$. The proof for $r \in Q_-$ is similar.
	By \eqref{eq:kkt condition w} and~(\ref{eq:kkt condition b}), for every $j \in J$ we have 
	\begin{align} \label{eq:single neuron separating r 2}
		\bw_j^\top \bx + b_j
		&= \left( \sum_{i \in I} \lambda_i y_i v_j \phi'_{i,j} \bx_i^\top \bx \right) + \sum_{i \in I} \lambda_i y_i v_j \phi'_{i,j} \nonumber
		\\
		&= \sum_{i \in I} \lambda_i y_i v_j \phi'_{i,j} (\bx_i^\top \bx + 1) \nonumber
		\\
		&= \left( \sum_{i \in I^{(r)}} \lambda_i v_j \phi'_{i,j} (\bx_i^\top \bx + 1)  \right) + \sum_{q \in Q \setminus \{r\}} \sum_{i \in I^{(q)}} \lambda_i y_i v_j \phi'_{i,j} (\bx_i^\top \bx + 1)~.
	\end{align}

	Now,
	\begin{align} \label{eq:general neurons sum}
		\cn_\btheta(\bx)
		&= \sum_{j \in J} v_j \phi(\bw_j^\top \bx + b_j)
		\geq \sum_{j \in J_+} v_j (\bw_j^\top \bx + b_j) + \sum_{j \in J_-} v_j \phi(\bw_j^\top \bx + b_j)~.
	\end{align}
	By \eqref{eq:single neuron separating r 2} we have
	\begin{align*}
		\sum_{j \in J_+} v_j (\bw_j^\top \bx + b_j) 
		&= \sum_{j \in J_+} \left[ \left( \sum_{i \in I^{(r)}} \lambda_i v_j^2 \phi'_{i,j} (\bx_i^\top \bx + 1) \right) + \sum_{q \in Q \setminus \{r\}} \sum_{i \in I^{(q)}} \lambda_i y_i v_j^2 \phi'_{i,j} (\bx_i^\top \bx + 1) \right]
		\\
		&\geq \sum_{j \in J_+} \left[ \left( \sum_{i \in I^{(r)}} \lambda_i v_j^2 \phi'_{i,j} (d - \Delta + 1) \right) - \sum_{q \in Q \setminus \{r\}} \sum_{i \in I^{(q)}} \lambda_i v_j^2 \phi'_{i,j} (p + \Delta + 1) \right]
		\\
		&= \left( (d - \Delta + 1) \sum_{i \in I^{(r)}} \sum_{j \in J_+} \lambda_i v_j^2 \phi'_{i,j}  \right) 
		\\
		&\;\;\;\; - \left(  (p + \Delta + 1) \sum_{q \in Q \setminus \{r\}} \sum_{i \in I^{(q)}} \sum_{j \in J_+} \lambda_i v_j^2 \phi'_{i,j} \right)~.
	\end{align*}
	By \lemref{lem:lam upper bound} and \lemref{lem:lam lower bound} the above is at least
	\begin{align} \label{eq:general neurons sum part 1}
		(d - \Delta + 1) &\left( 1- \frac{c'}{1-2c'} \right) \frac{1}{3d+\Delta + 1} -  (p + \Delta + 1) k \cdot \frac{1}{(1-2c')(d-\Delta + 1)} \nonumber
		\\
		&=\left( 1- \frac{c'}{1-2c'} \right) \frac{d - \Delta + 1}{3d+\Delta + 1} -  (p + \Delta + 1) c' \cdot \frac{d - \Delta + 1}{p+\Delta+1} \cdot  \frac{1}{(1-2c')(d-\Delta + 1)} \nonumber
		\\
		&= \left( 1- \frac{c'}{1-2c'} \right) \frac{d - \Delta + 1}{3d+\Delta + 1} - \frac{c'}{1-2c'}~.
	\end{align}
	
	Likewise, we have 
	\begin{align*}
		 \sum_{j \in J_-} v_j \phi(\bw_j^\top \bx + b_j)
		 &=  \sum_{j \in J_-} v_j \phi \left(  \left( \sum_{i \in I^{(r)}} \lambda_i v_j \phi'_{i,j} (\bx_i^\top \bx + 1)  \right) + \sum_{q \in Q \setminus \{r\}} \sum_{i \in I^{(q)}} \lambda_i y_i v_j \phi'_{i,j} (\bx_i^\top \bx + 1) \right)
		 \\
		 &\geq  \sum_{j \in J_-} v_j \phi \left(  \left( \sum_{i \in I^{(r)}} \lambda_i v_j \phi'_{i,j} (d - \Delta + 1)  \right) + \sum_{q \in Q \setminus \{r\}} \sum_{i \in I^{(q)}} \lambda_i | v_j | \phi'_{i,j} (p + \Delta + 1) \right)
		 \\
		 &\geq \sum_{j \in J_-} v_j \phi \left( \sum_{q \in Q \setminus \{r\}} \sum_{i \in I^{(q)}} \lambda_i | v_j | \phi'_{i,j} (p + \Delta + 1) \right)
		 \\
		 &= - \sum_{j \in J_-}  \sum_{q \in Q \setminus \{r\}} \sum_{i \in I^{(q)}} \lambda_i v_j^2 \phi'_{i,j} (p + \Delta + 1)
		 \\
		 &= -  (p + \Delta + 1)  \sum_{q \in Q \setminus \{r\}}  \sum_{i \in I^{(q)}} \sum_{j \in J_-} \lambda_i v_j^2 \phi'_{i,j}~.
	\end{align*}
	By  \lemref{lem:lam upper bound} the above is at least
	\begin{align} \label{eq:general neurons sum part 2}
		- (p + \Delta + 1) k \cdot  \frac{1}{(1-2c')(d-\Delta + 1)}
		&=- (p + \Delta + 1) c' \cdot \frac{d - \Delta + 1}{p+\Delta+1} \cdot  \frac{1}{(1-2c')(d-\Delta + 1)} \nonumber
		\\
		&= - \frac{c'}{1-2c'}~. 
	\end{align}
	Combining \eqref{eq:general neurons sum},~(\ref{eq:general neurons sum part 1}), and~(\ref{eq:general neurons sum part 2}), we get
	\begin{align*}
		\cn_\btheta(\bx)
		&\geq  \left( 1- \frac{c'}{1-2c'} \right) \frac{d - \Delta + 1}{3d+\Delta + 1} - \frac{c'}{1-2c'} - \frac{c'}{1-2c'}~. 
	\end{align*}	
	Using $c' \leq \frac{1}{10}$ and $\Delta \leq d$ (which holds by \lemref{lem:bound Delta}),  the above is at least 
	\[
		\frac{7}{8} \cdot \frac{d - \Delta + 1}{3d+\Delta + 1} - \frac{2}{8} 
		\geq \frac{7}{8} \cdot \frac{d - \Delta}{3d+\Delta} - \frac{2}{8}~.
	\]
	By \lemref{lem:bound Delta},
	the displayed equation is at least
	\[
		\frac{7}{8} \cdot \frac{d - d/21}{3d+d/21} - \frac{2}{8}
		= \frac{7}{8} \cdot \frac{5}{16} - \frac{2}{8}
		> 0~.
	\]	
\end{proof}

\subsection{Generalization from KKT conditions}
\label{app:generalization.conclusion}
Lemma~\ref{lem:good example labeld correctly} shows that in order to show generalization, it suffices to show that with high probability, a test example is highly correlated to one cluster and nearly-orthogonal to all other clusters.  In this section we prove that this is the case.  We shall re-apply many of the concentration bounds provided in Section~\ref{app:generalization.nicedata} to do so.

\begin{lemma} \label{lem:random example is good}
	Suppose $\cs$ satisfies Properties~\ref{p:noise.norm} through~\ref{p:sample.in.every.cluster}.  
 Let $r \in Q$ and let $\bx = \bmu^{(r)} + \bxi$ where $\bxi \sim \ndist(\zero, \sigma^2 I_d)$.
	With probability at least $1 - 4 n d^{-\frac{\ln(d)}{2}}$ over $\bx$ the following hold for all $i \in I$:
	\begin{itemize}
		\item $| \inner{\bxi_i,\bxi}| \leq \Delta$.
		\item $| \inner{\bx_i,\bxi}| \leq \Delta$. 
		\item If $i \not \in I^{(r)}$ then $|\inner{\bx_i,\bx}| \leq p + \Delta$.
		\item If $i \in I^{(r)}$ then $| \inner{\bx_i,\bx} - d | \leq \Delta$.
	\end{itemize}
\end{lemma}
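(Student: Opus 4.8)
The plan is to first condition on the training set $\cs$, so that by the lemma's hypothesis all of Properties~\ref{p:noise.norm} through~\ref{p:sample.in.every.cluster} hold deterministically and the noise vectors $\bxi_1,\dots,\bxi_n$ and cluster assignments $\cluster(1),\dots,\cluster(n)$ are fixed; the only remaining randomness is then the fresh noise $\bxi \sim \ndist(\zero,\sigma^2 I_d)$. Writing $\bx_i = \bmu^{(\cluster(i))} + \bxi_i$ and $\bx = \bmu^{(r)} + \bxi$, each of the four inner products in the statement expands into a sum of terms of three types: $\inner{\bmu^{(q)},\bmu^{(q')}}$ (or $\norm{\bmu^{(r)}}^2$), $\inner{\bmu^{(q)},\bxi_i}$, and $\inner{\bmu^{(q)},\bxi}$ or $\inner{\bxi_i,\bxi}$. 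The first two types are already controlled, by \assref{ass:dist} together with the definition of $p$ (giving $|\inner{\bmu^{(q)},\bmu^{(q')}}| \le p$ and $\norm{\bmu^{(q)}} = \sqrt d$) and by the assumed properties of $\cs$ (giving $\norm{\bxi_i} \le \sigma\sqrt{2d}$ and $|\inner{\bmu^{(q)},\bxi_i}| \le \sigma\sqrt d\ln(d)$); only the last type involves the fresh randomness.

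Hence the only probabilistic step is to control $\inner{\bmu^{(q)},\bxi}$ and $\inner{\bxi_i,\bxi}$. For each $i \in I$ define the events $E_i := \{\, |\inner{\bxi_i,\bxi}| \le \norm{\bxi_i}\,\sigma\ln(d) \,\}$ and $F_i := \{\, |\inner{\bmu^{(\cluster(i))},\bxi}| \le \sigma\sqrt d\,\ln(d) \,\}$. By \lemref{lem:bound xis product fixed}, applied with the \emph{fixed} vector $\bxi_i$ and the Gaussian $\bxi$, the event $E_i$ fails with probability at most $2d^{-\ln(d)/2}$, and by \lemref{lem:bound mu xi product} the event $F_i$ fails with probability at most $2d^{-\ln(d)/2}$. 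A union bound over these $2n$ events shows that with probability at least $1 - 4n d^{-\ln(d)/2}$ every $E_i$ and every $F_i$ holds; call this event $G$.

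On $G$ all four bullets follow from triangle inequalities, using $\norm{\bxi_i} \le \sigma\sqrt{2d}$ and $\sigma \le 1$. Indeed $|\inner{\bxi_i,\bxi}| \le \sigma^2\sqrt{2d}\ln(d) \le \sqrt 2\,\sigma\sqrt d\ln(d) \le \Delta$ gives the first bullet, and adding $|\inner{\bmu^{(\cluster(i))},\bxi}| \le \sigma\sqrt d\ln(d)$ gives $|\inner{\bx_i,\bxi}| \le (1+\sqrt 2)\sigma\sqrt d\ln(d) \le \Delta$, the second. For the third bullet, when $\cluster(i) \ne r$, expanding $\inner{\bx_i,\bx}$ into its four cross terms and bounding them by $p$, $\sigma\sqrt d\ln(d)$, $\sigma\sqrt d\ln(d)$ (the third using the property $|\inner{\bmu^{(r)},\bxi_i}| \le \sigma\sqrt d\ln(d)$ of $\cs$), and $\sqrt 2\,\sigma\sqrt d\ln(d)$ yields $|\inner{\bx_i,\bx}| \le p + (2+\sqrt 2)\sigma\sqrt d\ln(d) \le p + \Delta$. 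For the fourth bullet, when $\cluster(i) = r$, the leading term is $\norm{\bmu^{(r)}}^2 = d$ and the other three cross terms are bounded exactly as before, giving $|\inner{\bx_i,\bx} - d| \le (2+\sqrt 2)\sigma\sqrt d\ln(d) \le \Delta$.

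There is no real obstacle here; the lemma is bookkeeping. The only points requiring care are: (i) conditioning on $\cs$ up front, so that \lemref{lem:bound xis product fixed} may legitimately be invoked with $\bxi_i$ as the fixed vector; (ii) taking the union bound over exactly $2n$ events so the failure probability is $4nd^{-\ln(d)/2}$ and not larger; and (iii) checking the numerical slack $1+\sqrt 2 \le 4$ and $2+\sqrt 2 \le 4$ (with $\sigma \le 1$) so that every error term collapses into a single copy of $\Delta = 4\sigma\sqrt d\ln(d)$.
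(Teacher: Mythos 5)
Your proof is correct and takes essentially the same route as the paper's: condition on $\cs$, invoke \lemref{lem:bound xis product fixed} and \lemref{lem:bound mu xi product} for the two types of fresh-noise inner products, union bound over $i\in I$, and then finish with triangle inequalities using the deterministic properties of $\cs$. The only cosmetic difference is that the paper rounds $\sqrt{2}\,\sigma\sqrt d\ln(d)$ up to $2\sigma\sqrt d\ln(d)$ so that the error terms sum exactly to $\Delta$, whereas you keep the $\sqrt 2$ and observe $1+\sqrt 2\le 4$ and $2+\sqrt 2\le 4$; both are fine.
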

\begin{proof}
	By our assumption on the dataset $\cs$, for all $i \in I$ and $q \in Q$ we have $\norm{\bxi_i} \leq \sigma \sqrt{2d}$ and $\inner{\bmu^{(q)},\bxi_i} \leq \sigma \sqrt{d} \ln(d)$. 
	By \lemref{lem:bound xis product fixed} and since $\sigma \leq 1$, for $i \in I$ we have 
	\[
		\Pr\left[ | \inner{\bxi, \bxi_i} | \geq 2\sigma \sqrt{d} \ln(d) \right] 
		\leq \Pr\left[ | \inner{\bxi, \bxi_i} | \geq \sigma \sqrt{2d} \cdot \sigma \ln(d) \right] 
		\leq \Pr\left[ | \inner{\bxi, \bxi_i} | \geq \norm{\bxi_i} \sigma \ln(d) \right] 
		\leq 2d^{-\frac{\ln(d)}{2}}~,
	\]
	and by \lemref{lem:bound mu xi product} for $q \in Q$ we have 
	\[
		\Pr \left[ |\inner{\bmu^{(q)}, \bxi} | \geq \sigma \sqrt{d} \ln(d) \right] 
		\leq 2d^{-\frac{\ln(d)}{2}}~.
	\]
	Fix some $q \in Q$ and $i \in I^{(q)}$. With probability at least $1 - 4 d^{-\frac{\ln(d)}{2}}$ over $\bxi$, we have $| \inner{\bxi, \bxi_i} | \leq 2\sigma \sqrt{d} \ln(d)$ and $ |\inner{\bmu^{(q)}, \bxi} | \leq \sigma \sqrt{d} \ln(d)$. 
	
	Then, the following hold:
	\begin{itemize}
		\item $| \inner{\bxi_i,\bxi}| \leq \Delta$.
		\item We have
			\[
				| \inner{\bx_i,\bxi} | \leq 
				| \inner{\bmu^{(q)}, \bxi} | + | \inner{\bxi_i, \bxi} |
				\leq \sigma \sqrt{d} \ln(d) + 2\sigma \sqrt{d} \ln(d)
				\leq \Delta~. 
			\]
		\item If $q \neq r$, then 
			\begin{align*}
				|\inner{\bx,\bx_i}| 
				&= |\inner{\bmu^{(r)} + \bxi, \bmu^{(q)} + \bxi_i } |
				\\
				&\leq  |\inner{\bmu^{(r)}, \bmu^{(q)}} | + |\inner{\bmu^{(r)}, \bxi_i } | + |\inner{\bxi, \bmu^{(q)}} | + |\inner{\bxi, \bxi_i } |
				 \\
				 &\leq p +  \sigma \sqrt{d} \ln(d) +\sigma \sqrt{d} \ln(d) + 2\sigma \sqrt{d} \ln(d)
				 \\
				 &= p + \Delta~.
			\end{align*}
		\item If $q = r$, then	
			\begin{align*}
				| \inner{\bx, \bx_i} - d |
				&= \left| \inner{\bmu^{(r)} + \bxi, \bmu^{(r)} + \bxi_i } - d \right|
				\\
				&= \left| \inner{\bmu^{(r)}, \bxi_i } +\inner{\bxi, \bmu^{(r)}} + \inner{\bxi, \bxi_i }  \right|  
				\\
				&\leq | \inner{\bmu^{(r)}, \bxi_i } | + | \inner{\bxi, \bmu^{(q)}} | + | \inner{\bxi, \bxi_i } |
				\\
				&\leq \sigma \sqrt{d} \ln(d) + \sigma \sqrt{d} \ln(d)  + 2 \sigma \sqrt{d} \ln(d)
				\\
				&= \Delta~.
	\end{align*}
			
	\end{itemize}	
	
	Overall, by the union bound, with probability at least $1 - 4 n d^{-\frac{\ln(d)}{2}}$ the requirements hold for all $i \in I$.	
\end{proof}

Theorem~\ref{thm:generalizaton orig} now follows immediately from Proposition~\ref{prop:trainingdata} and Lemmas~\ref{lem:good example labeld correctly} and~\ref{lem:random example is good}.

\ignore{
Finally, the theorem immediately follows from the following lemma.

\begin{lemma} \label{lem:random example labeld correctly}
	\[
		\Pr_{(\bx,y) \sim \cd} \left[ y \cn_\btheta(\bx)) \leq 0 \right] 
		\leq 4 n d^{-\frac{\ln(d)}{2}}~. 
	\]
\end{lemma}
\begin{proof}
	By \lemref{lem:good example labeld correctly}, it suffices to show that with probability at least $1 - 4 n d^{-\frac{\ln(d)}{2}}$ an example $\bx$ drawn from cluster $r \in Q$ satisfies the following requirement:
	For all $i \in I^{(r)}$ we have $\inner{\bx,\bx_i} \in \left[d - \Delta, d + \Delta \right]$, and for all $q \in Q \setminus \{r\}$ and $i \in I^{(q)}$ we have $| \inner{\bx,\bx_i} | \leq p + \Delta$.

	By our assumption on the dataset $\cs$, for all $i \in I$ and $q \in Q$ we have $\norm{\bxi_i} \leq \sigma \sqrt{2d}$ and $\inner{\bmu^{(q)},\bxi_i} \leq \sigma \sqrt{d} \ln(d)$. We denote $\bx = \bmu^{(r)} + \bxi$.
	By \lemref{lem:bound xis product fixed} for $i \in I$ we have 
	\[
		\Pr\left[ | \inner{\bxi, \bxi_i} | \geq 2\sigma \sqrt{d} \ln(d) \right] 
		\leq \Pr\left[ | \inner{\bxi, \bxi_i} | \geq \sigma \sqrt{2d} \cdot \sigma \ln(d) \right] 
		\leq \Pr\left[ | \inner{\bxi, \bxi_i} | \geq \norm{\bxi_i} \sigma \ln(d) \right] 
		\leq 2d^{-\frac{\ln(d)}{2}}~,
	\]
	and by \lemref{lem:bound mu xi product} for $q \in Q$ we have 
	\[
		\Pr \left[ |\inner{\bmu^{(q)}, \bxi} | \geq \sigma \sqrt{d} \ln(d) \right] 
		\leq 2d^{-\frac{\ln(d)}{2}}~.
	\]
	
	Hence, 
	%for $q \neq r$ and $i \in I^{(q)}$ 
	for $i \in I^{(q)}$ with $q \neq r$ 
	we have
	with probability at least $1 - 4 d^{-\frac{\ln(d)}{2}}$ that
	\begin{align*}
		 |\inner{\bmu^{(r)} + \bxi, \bmu^{(q)} + \bxi_i } |
		 &\leq  |\inner{\bmu^{(r)}, \bmu^{(q)}} | + |\inner{\bmu^{(r)}, \bxi_i } | + |\inner{\bxi, \bmu^{(q)}} | + |\inner{\bxi, \bxi_i } |
		 \\
		 &\leq p +  \sigma \sqrt{d} \ln(d) +\sigma \sqrt{d} \ln(d) + 2\sigma \sqrt{d} \ln(d)
		 \\
		 &= p + \Delta~.
	\end{align*}

	For $i \in I^{(r)}$ we have with probability at least $1 - 4 d^{-\frac{\ln(d)}{2}}$ that
	\begin{align*}
		\left| \inner{\bmu^{(r)} + \bxi, \bmu^{(r)} + \bxi_i } - d \right|
		&= \left| \inner{\bmu^{(r)}, \bxi_i } +\inner{\bxi, \bmu^{(r)}} + \inner{\bxi, \bxi_i }  \right|  
		\\
		&\leq | \inner{\bmu^{(r)}, \bxi_i } | + | \inner{\bxi, \bmu^{(r)}} | + | \inner{\bxi, \bxi_i } |
		\\
		&\leq \sigma \sqrt{d} \ln(d) + \sqrt{d} \ln(d)  + 2 \sqrt{d} \ln(d)
		\\
		&= \Delta~.
	\end{align*}

	Overall, by the union bound the requirements hold for all $i \in I$ with probability at least $1 - 4 n d^{-\frac{\ln(d)}{2}}$.	
\end{proof}
}%ignore

\section{Proof of \lemref{lem:linearly separable}} \label{app:proof of linearly separable}

Let $\bx  = \bmu^{(j)} + \bxi$ where $\bxi \sim \ndist(\zero, \sigma^2 I_d)$, and $y = y^{(j)}$. Then, 
\[
	y \bu^\top \bx 
	=  y^{(j)} \sum_{q \in [k]} y^{(q)} (\bmu^{(q)})^\top (\bmu^{(j)} + \bxi)
	\geq \norm{\bmu^{(j)}}^2 - k \left( \max_{q \neq j} |(\bmu^{(q)})^\top \bmu^{(j)} | + \max_{q \in [k]} | (\bmu^{(q)})^\top \bxi | \right)~.
\]
By  
\lemref{lem:bound mu xi product} we have $\Pr \left[ |\inner{\bmu^{(q)}, \bxi} | \geq \sigma \sqrt{d} \ln(d) \right] \leq 2d^{-\frac{\ln(d)}{2}}$.
Hence, by the union bound with probability at least $1- 2kd^{-\ln(d)/2} \geq 1- 2d^{1-\ln(d)/2}$ we have $\max_{q \in [k]} | (\bmu^{(q)})^\top \bxi | \leq \sigma \sqrt{d} \ln(d)$. Note that by \assref{ass:dist} we must have $k \leq d$.
Using \assref{ass:dist} again, we conclude that with probability at least $1 - 2d^{1-\ln(d)/2}$ we have  
\[
	y \bu^\top \bx 
	\geq d - k  \left( \max_{q \neq j} |(\bmu^{(q)})^\top \bmu^{(j)} | +  \sigma \sqrt{d} \ln(d) \right)
	\geq d -  \frac{d  - 4\sigma \sqrt{d} \ln(d) + 1 }{10}
	\geq \frac{9d - 1}{10}
	> 0~.
\]

\section{Proof of \thmref{thm:robust exists}} \label{app:proof of robust exists}

We prove the theorem for $r=k$. The proof for $r > k$ follows immediately by adding zero-weight neurons.
Consider the network $\cn(\bx) = \sum_{j=1}^k v_j \phi(\bw_j^\top \bx + b_j)$ such that for every $j \in [k]$ we have $v_j=y^{(j)}$, $\bw_j = \frac{4 \bmu^{(j)}}{d}$, and $b_j = - 2$. Let $q \sim \cu([k])$, let $\bx = \bmu^{(q)} + \bxi$ where $\bxi \sim \ndist(\zero, \sigma^2 I_d)$, and let $y=y^{(q)}$. Thus $(\bx,y)$ is drawn from $\Dclust$.  We have,
	\[
		\bw_q^\top \bx + b_q
		= \frac{4 (\bmu^{(q)})^\top ( \bmu^{(q)} + \bxi)}{d} - 2
		= \frac{4 (d + \inner{\bmu^{(q)}, \bxi})}{d} - 2~.
	\] 
	By \lemref{lem:bound mu xi product}, we have $\Pr \left[ |\inner{\bmu^{(q)}, \bxi} | \geq \sigma \sqrt{d} \ln(d) \right] \leq 2d^{-\frac{\ln(d)}{2}}$. Hence, with probability at least $1 - 2d^{-\frac{\ln(d)}{2}}$ over $\bxi$, for large enough $d$ we have
	\begin{equation} \label{eq:good neuron is good}
		\bw_q^\top \bx + b_q
		\geq \frac{4 (d - \sigma \sqrt{d} \ln(d))}{d} - 2
		\geq \frac{4 (d - \sqrt{d} \ln(d))}{d} - 2
		= 2 - \frac{4 \ln(d)}{\sqrt{d}}
		\geq 1~.
	\end{equation}
	
	For $j \neq q$ we have
	\[
		\bw_j^\top \bx + b_j
		= \frac{4 (\bmu^{(j)})^\top ( \bmu^{(q)} + \bxi)}{d} - 2
		\leq \frac{4 \max_{j \neq q} | \inner{\bmu^{(j)},\bmu^{(q)}}|}{d} + \frac{4(\bmu^{(j)})^\top \bxi}{d} - 2~,
	\]
	and thus with probability at least  $1- 2d^{-\frac{\ln(d)}{2}}$ over $\bxi$ we have
	\begin{equation*} %\label{eq:bad neuron is bad}
		\bw_j^\top \bx + b_j 
		\leq -2 + \frac{4}{d} \left(  \max_{j \neq q} | \inner{\bmu^{(j)},\bmu^{(q)}}| + \sigma \sqrt{d} \ln(d) \right)~.
		%\leq -2 + \frac{4}{d} \cdot \frac{d+1}{10k}
		%\leq -2 + \frac{4}{d} \cdot \frac{2d}{10}
		%\leq -1~.
	\end{equation*}
	Since by \assref{ass:dist} we have $k \left( \max_{j \neq q} |\inner{\bmu^{(j)}, \bmu^{(q)}} | + 4\sigma \sqrt{d} \ln(d) + 1 \right) \leq \frac{d  - 4\sigma \sqrt{d} \ln(d) + 1 }{10}$, then the displayed equation is at most
	\begin{equation} \label{eq:bad neuron is bad}
	    -2 + \frac{4}{d} \cdot \frac{d+1}{10k}
		\leq -2 + \frac{4}{d} \cdot \frac{2d}{10}
		\leq -1~.
	\end{equation}
	Overall, by the union bound, with probability at least  $1- 2kd^{-\frac{\ln(d)}{2}} \geq 1- 2d^{1-\frac{\ln(d)}{2}} = 1-o_d(1)$ we have $\cn(\bx) = \sum_{j=1}^k v_j \phi(\bw_j^\top \bx + b_j) = v_q (\bw_q^\top \bx + b_q) + 0$, and 
	\[
		\sign(\cn(\bx)) = \sign(v_q) = y^{(q)} = y~.
	\] 
	
	We now prove that $\cn$ is $\sqrt{d}$-robust w.r.t. $\cd_\bx$. Thus, we show that with probability at least $1-o_d(1)$ over $\bx$, for every $\bx' \in \reals^d$ such that $\norm{\bx-\bx'} \leq \frac{\sqrt{d}}{8}$ we have $\sign(\cn(\bx)) = \sign(\cn(\bx'))$. Note that with probability $1-o_d(1)$, \eqref{eq:good neuron is good} holds, and \eqref{eq:bad neuron is bad} holds for all $j \neq q$, and hence  
	\[
		\bw_q^\top \bx' + b_q
		= \bw_q^\top (\bx'-\bx) + \left( \bw_q^\top \bx + b_q \right)
		\geq - \norm{\bw_q} \cdot \norm{\bx'-\bx} + 1
		\geq  - \frac{4}{\sqrt{d}} \cdot \frac{\sqrt{d}}{8} + 1 
		= \frac{1}{2}~,
	\]
	and for all $j \neq q$ we have
	\[
		\bw_j^\top \bx' + b_j
		= \bw_j^\top (\bx'-\bx) + \left( \bw_j^\top \bx + b_j \right)
		\leq \norm{\bw_j} \cdot \norm{\bx'-\bx} - 1
		\leq \frac{4}{\sqrt{d}} \cdot \frac{\sqrt{d}}{8} - 1
		= -\frac{1}{2}~. 
	\]
	Therefore, $\sign(\cn(\bx')) = \sign(v_q) = y^{(q)} = \sign(\cn(\bx))$.	

\section{Proof of \thmref{thm:non-robust}} \label{app:proof of non-robust}

We will prove the following theorem:

\begin{theorem} \label{thm:non-robust orig}
	Let $\cs = \{(\bx_i,y_i)\}_{i=1}^n \subseteq \reals^d \times \{-1,1\}$ be a training set drawn i.i.d. from the distribution $\Dclust$, where $n \geq k \ln^2(d)$. 
	We denote $Q_+ = \{q \in [k]: y^{(q)} = 1\}$ and $Q_- = \{q \in [k]: y^{(q)} = -1\}$, and assume that $\min\left\{\frac{|Q_+|}{k},\frac{|Q_-|}{k} \right\} \geq c$ for some $c > 0$.
	Let $\cn_\btheta$ be a depth-$2$ ReLU network such that $\btheta = [\bw_1,\ldots,\bw_m,\bb,\bv]$ is a KKT point of Problem~(\ref{eq:optimization problem}). 
	Then, 
        with probability at least
	\[
		1- \left(3 n^2 d^{-\frac{\ln(d)}{2}} + n^2 e^{-d/16} + d^{1-\ln(d)} \right) 
	\]	
	over $\cs$, there is a vector $\bz = \eta \cdot \sum_{j \in [k]} y^{(j)} \bmu^{(j)}$ with $\eta > 0$ and $\norm{\bz} \leq \co\left( \sqrt{\frac{d}{c^2 k}} \right)$, such that 
	\[
		\Pr_{(\bx,y) \sim \Dclust} \left[ \sign(\cn_\btheta(\bx)) \neq \sign(\cn_\btheta(\bx - y \bz)) \right] 
		\geq 1 - 4 n d^{-\frac{\ln(d)}{2}}~.
	\]
\end{theorem}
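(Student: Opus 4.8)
The proof reuses the scaffolding of the proof of \thmref{thm:generalizaton orig}. First I would condition on the event of Proposition~\ref{prop:trainingdata}, so that $\cs$ satisfies Properties~\ref{p:noise.norm}--\ref{p:sample.in.every.cluster}; this has probability at least $1-\left(3n^2 d^{-\ln(d)/2} + n^2 e^{-d/16} + d^{1-\ln(d)}\right)$, exactly the probability in the statement. On this event, Lemmas~\ref{lem:lam upper bound} and~\ref{lem:lam lower bound} control the quantities $S_q^\circ := \sum_{i\in I^{(q)}}\sum_{j\in J_\circ} v_j^2\lambda_i\phi'_{i,j}$ (for $\circ\in\{+,-\}$): all are at most $\tfrac{1}{(1-2c')(d-\Delta+1)}=\Theta(1/d)$, while $S_q^+ \geq (1-\tfrac{c'}{1-2c'})\tfrac{1}{3d+\Delta+1}$ for $q\in Q_+$ and symmetrically for $Q_-$; here $\lambda_i,\phi'_{i,j}$ come from the KKT identities~\eqref{eq:kkt condition w},~\eqref{eq:kkt condition b}. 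For the fresh example I would condition on the event of Lemma~\ref{lem:random example is good} (probability at least $1-4nd^{-\ln(d)/2}$ over $(\bx,y)$), under which Lemma~\ref{lem:good example labeld correctly} gives $\sign(\cn_\btheta(\bx)) = y$; so it suffices to show $y\,\cn_\btheta(\bx - y\bz)<0$. The deduction of \thmref{thm:non-robust} from this statement is identical to the deduction of \thmref{thm:generalizaton} from \thmref{thm:generalizaton orig}.

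The plan is to take $\bz := \eta\,\bu$ with $\bu := \sum_{q\in[k]} y^{(q)}\bmu^{(q)}$ and $\eta := C_0/(ck)$ for a sufficiently large absolute constant $C_0$. Since $\norm{\bu}^2 = kd + \sum_{q\neq q'} y^{(q)}y^{(q')}\inner{\bmu^{(q)}}$ (abuse: $\inner{\bmu^{(q)},\bmu^{(q')}}$) and $|\sum_{q\neq q'}\cdots| \leq k^2 p \leq kd/10$ by \assref{ass:dist}, we get $\norm{\bu} = \Theta(\sqrt{kd})$, hence $\norm{\bz} = \Theta\!\big(\sqrt{d/(c^2 k)}\big)$ and $\eta>0$, as required. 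Assume w.l.o.g. that $\bx$ lies in a cluster $r\in Q_+$, so $y=1$ (the case $r\in Q_-$ is symmetric after swapping $J_+\!\leftrightarrow\! J_-$, $Q_+\!\leftrightarrow\! Q_-$ and $\bz\!\leftrightarrow\! -\bz$). Writing $a_j := \sum_{i\in I^{(r)}}\lambda_i\phi'_{i,j}$ and $b_j := \sum_{i\in I}\lambda_i\phi'_{i,j}$ (so $0\leq a_j\leq b_j$), and using $\bw_j = \sum_i \lambda_i y_i v_j\phi'_{i,j}\bx_i$ together with $\bz^\top\bx_i = \eta\,\bu^\top\bx_i = \eta\big(y_i d + \sum_{q'\neq q_i}y^{(q')}\inner{\bmu^{(q')},\bmu^{(q_i)}} + \inner{\bu,\bxi_i}\big)$, a direct computation gives
\begin{equation}\label{eq:preact-decomp}
\bw_j^\top(\bx-\bz) + b_j = v_j\big(d(a_j - \eta b_j) + \zeta_j\big),\qquad |\zeta_j|\leq \kappa_1 a_j + \kappa_2(b_j-a_j) + \kappa_3 b_j,
\end{equation}
where $\kappa_1 := 1+\Delta$, $\kappa_2 := p+\Delta+1$, $\kappa_3 := \eta\big((k-1)p + k\sigma\sqrt{d}\ln(d)\big)\leq \eta d/8$; the bound on $\zeta_j$ uses the control on $\bx^\top\bx_i$ from Lemma~\ref{lem:random example is good} (the ``$\eta=0$'' part, giving $\kappa_1 a_j+\kappa_2(b_j-a_j)$) and the bounds $|\inner{\bmu^{(q')},\bxi_i}|\leq\sigma\sqrt{d}\ln(d)$ from Proposition~\ref{prop:trainingdata} (the correction terms, giving $\kappa_3 b_j$), together with $k(p+\Delta+1)\leq d/10$ and $\Delta\leq d/21$. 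Using $\phi(v_j t) = |v_j|\,\phi(\sign(v_j)t)$, \eqref{eq:preact-decomp} yields
\begin{equation}\label{eq:output-split}
\cn_\btheta(\bx-\bz) = \underbrace{\textstyle\sum_{j\in J_+} v_j^2\,\phi\!\big(d(a_j-\eta b_j)+\zeta_j\big)}_{=:\,T_+} \;-\; \underbrace{\textstyle\sum_{j\in J_-} v_j^2\,\phi\!\big(d(\eta b_j - a_j) - \zeta_j\big)}_{=:\,T_-}.
\end{equation}

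It remains to show $T_+ < T_-$. For $T_+$: since $a_j\leq b_j$, since $\kappa_2+\kappa_3\leq\eta d$ (because $\eta\geq C_0/k$ is large relative to $(p+\Delta+1)/d$), and using subadditivity of $\phi$ and $\phi(t)\leq t_+$, each summand of \eqref{eq:output-split}'s first sum is at most $v_j^2(d+\kappa_1-\kappa_2)a_j = v_j^2(d-p)a_j$, so $T_+ \leq (d-p)\sum_{j\in J_+} v_j^2 a_j = (d-p)S_r^+ \leq \tfrac{d}{(1-2c')(d-\Delta+1)} < 2$ by Lemma~\ref{lem:lam upper bound}. For $T_-$: using $\phi(t)\geq t$ and then \eqref{eq:preact-decomp},
\[
T_- \;\geq\; \big(d\eta - \kappa_2 - \kappa_3\big)\,\textstyle\sum_{j\in J_-} v_j^2 b_j \;-\; (d+\kappa_1)\,\textstyle\sum_{j\in J_-} v_j^2 a_j \;=\; \big(d\eta - \kappa_2 - \kappa_3\big)\,\textstyle\sum_{q} S_q^- \;-\; (d+\kappa_1)\,S_r^-.
\]
Now $\sum_{q} S_q^- \geq \sum_{q\in Q_-} S_q^- \geq |Q_-|\big(1-\tfrac{c'}{1-2c'}\big)\tfrac{1}{3d+\Delta+1} \geq \tfrac{ck}{5d}$ (by Lemma~\ref{lem:lam lower bound}, $|Q_-|\geq ck$, $c'\leq\tfrac{1}{10}$, $\Delta\leq\tfrac{d}{21}$), while $S_r^- \leq \tfrac{1}{(1-2c')(d-\Delta+1)}$ by Lemma~\ref{lem:lam upper bound}; moreover $\kappa_3\leq\eta d/8$ and $\eta$ is large enough relative to $(p+\Delta+1)/d$ that $d\eta-\kappa_2-\kappa_3\geq\tfrac{7d\eta}{16}$. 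Plugging in $\eta=C_0/(ck)$ gives $T_- \geq \tfrac{7C_0}{80} - 2$, so a large enough absolute constant $C_0$ forces $T_- \geq 2 > T_+$, whence $\cn_\btheta(\bx-\bz)=T_+-T_-<0$ while $\cn_\btheta(\bx)>0$, and the signs differ. The main obstacle is precisely the control of $\zeta_j$ in \eqref{eq:preact-decomp}: the cross-correlation and noise corrections in $\bz^\top\bx_i$ produce a term of order $\kappa_3 = O(\eta d)$ multiplying the \emph{same} quantity $\sum_q S_q^-$ that carries the ``signal'' $d\eta\sum_q S_q^-$, so one must keep $\sum_q S_q^-$ intact (rather than sandwiching it between its per-cluster extremes, which would lose a spurious factor $1/c$) and show, using only \assref{ass:dist}, that $\kappa_3$ is a small fraction (here $\leq 1/8$) of $d\eta$.
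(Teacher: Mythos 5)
Your proposal is correct and leads to the same conclusion, but the endgame is packaged differently from the paper. You share the scaffolding: condition on Proposition~\ref{prop:trainingdata} and Lemma~\ref{lem:random example is good}; use the KKT identities~\eqref{eq:kkt condition w}--\eqref{eq:kkt condition b}; rely on Lemmas~\ref{lem:lam upper bound} and~\ref{lem:lam lower bound} as the quantitative core; take $\bz=\eta\bu$ with $\bu=\sum_q y^{(q)}\bmu^{(q)}$ and $\eta=\Theta\big(\tfrac1{ck}\big)$. Where you diverge is in how the perturbed output is analyzed. The paper splits $\eta=\eta_1+\eta_2$ and proceeds in two stages: $\eta_1$ is chosen (via Lemmas~\ref{lem:neuron input not small} and~\ref{lem:neuron input changes fast}) so that after the first half of the perturbation every neuron in $J_-$ (resp.\ $J_+$) becomes active (Lemma~\ref{lem:neurons input become positive}), at which point $\phi$ can be replaced by the identity; the remaining $\eta_2$ then drives the output past zero, and the unperturbed output is controlled separately via Lemma~\ref{lem:output not too large}. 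You collapse both stages into a single step by writing each preactivation as $v_j\big(d(a_j-\eta b_j)+\zeta_j\big)$ with $a_j=\sum_{i\in I^{(r)}}\lambda_i\phi'_{i,j}$, $b_j=\sum_{i\in I}\lambda_i\phi'_{i,j}$, and decomposing $\cn_\btheta(\bx-\bz)=T_+-T_-$. The observation that makes the two-stage argument unnecessary is that $\phi(t)\ge t$ already gives a valid lower bound on $T_-$ regardless of which neurons are active, so the ``activation'' step of Lemma~\ref{lem:neurons input become positive} can be skipped, and $T_+$ is controlled directly by Lemma~\ref{lem:lam upper bound} rather than by passing through $\cn_\btheta(\bx)\le 2$. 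This recovers the same signal--noise tradeoff (the signal $d\eta\sum_q S_q^-\gtrsim\eta ck$ must dominate $T_+\lesssim 1$ and the cross-correlation error $\kappa_3 b_j$) with fewer intermediate lemmas; the price is that the constants are handled somewhat more loosely (e.g.\ ``$C_0$ large enough''). One cosmetic remark: your reference to ``subadditivity of $\phi$'' when bounding $T_+$ is not needed --- $\phi(t)=\max(0,t)$ is monotone, and bounding the argument of $\phi$ and using $\phi(t)\le t_+$ suffices.
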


It is easy to verify that \thmref{thm:non-robust orig} implies \thmref{thm:non-robust}. Indeed, if 
$\frac{1}{\delta} \leq \frac{1}{3} d^{\ln(d) - 1}$
and 
\[
    n \leq \min\left\{ \sqrt{\frac{\delta}{3}} \cdot e^{d/32}, \frac{\sqrt{\delta}}{3} \cdot d^{\ln(d)/4}, \frac{\epsilon}{4} \cdot d^{\ln(d)/2} \right\}~,
\]    
then we showed in \appref{app:generalization} that 
\begin{itemize}
    \item $3 n^2 d^{-\frac{\ln(d)}{2}} \leq \frac{\delta}{3}$.
    \item $n^2 e^{-d/16} \leq \frac{\delta}{3}$.
    \item $d^{1-\ln(d)} \leq \frac{\delta}{3}$.
    \item $4 n d^{-\frac{\ln(d)}{2}} \leq \epsilon$.
\end{itemize}
Hence, under the above assumptions on $\delta$ and $n$, \thmref{thm:non-robust orig} implies that with probability at least $1-\delta$ over $\cs$, we have $\Pr_{(\bx,y) \sim \Dclust} \left[ \sign(\cn_\btheta(\bx)) \neq \sign(\cn_\btheta(\bx - y \bz)) \right] \geq 1 - \epsilon$.

We now turn to prove \thmref{thm:non-robust orig}. 
The reader may find it useful to refer back to the notations from the proof of \thmref{thm:generalizaton} in Section~\ref{app:generalization}.  We shall show that when the dataset $\cs$ satisfies the ``nice'' properties outlined in Properties~\ref{p:noise.norm} through~\ref{p:sample.in.every.cluster}, then every KKT point of Problem~(\ref{eq:optimization problem}) is non-robust in the sense stated in the theorem.  By Proposition~\ref{prop:trainingdata}, the dataset $\cs$ satisfies these ``nice'' properties with probability 
%$1-o_d(1)$. 
at least $1- \left(3 n^2 d^{-\frac{\ln(d)}{2}} + n^2 e^{-d/16} + d^{1-\ln(d)} \right)$.

\ignore{
We start with an outline of the proof of the theorem.  At a high-level, the key idea is that for KKT points of Problem~(\ref{eq:optimization problem}), there is a direction $\bu$ which causes large changes in the network output relative to the output for test examples.  Thus, small perturbations of the input in the direction of $\bu$ can flip the sign of the network output on test examples.

To prove this, we first show that if a vector $\bx$ is highly correlated to one cluster and nearly orthogonal to all other clusters (which we show in Lemma~\ref{lem:prob x is good} is satisfied w.h.p. for test examples from the distribution), then the KKT points of Problem~(\ref{eq:optimization problem}) are such that the network output at $\bx$ is not too far from the margin: it satisfies $|\cn_\btheta(\bx)|\leq 2$.  This is shown in Lemma~\ref{lem:output not too large}.  

Next, we want to show that the network output in the direction of $\bu = \sum_{q\in [k]} y^{(q)} \bmu^{(q)}$ is very large relative to the network output in the direction of test examples $\bx$ (which by the previous paragraph is controlled by Lemma~\ref{lem:output not too large}).  This requires first showing that $\bw_j^\top \bx + b_j$ is small relative to $\bw_j^\top \bu + b_j$, which we show in  Lemmas~\ref{lem:neuron input not small}, ~\ref{lem:neuron input changes fast}, and~\ref{lem:neurons input become positive}.    Intuitively, we show that KKT points of Problem~(\ref{eq:optimization problem}) impose a structure on the neurons so that they are highly correlated with \textit{each} of the orthogonal clusters, rather than a single cluster $\bmu^{(q)}$, so that $|\bw_j^\top \bu/\|\bu\|| \gg |\bw_j^\top \bmu^{(q)}/\|\bmu^{(q)}\||$.  Moreover, the scale of this difference is greater when there is a larger number of clusters, which gives some intuition why Theorem~\ref{thm:non-robust} shows KKT points are non-robust when the number of clusters $k$ is larger.  

After we show that the neurons in the network are more significantly affected by $\bu$ than by test examples $\bx$, we then argue that this implies the network output itself satisfies a similar property.  This occurs in Lemmas~\ref{lem:pert flips1} and~\ref{lem:pert flips2}.  To conclude, we provide a bound on the scale of the adversarial perturbation needed to flip the sign of the network for test examples (Lemma~\ref{lem:norm z}).  This provides all of the pieces needed to prove Theorem~\ref{thm:non-robust}.  We can now begin with the proof by showing that with high probability, test examples are highly correlated with one and only one cluster. 
}%ignore

In the following lemma, we state several ``nice'' properties that are satisfied w.h.p. in a test example.

\begin{lemma} \label{lem:prob x is good}
	Suppose $\cs$ satisfies Properties~\ref{p:noise.norm} through~\ref{p:sample.in.every.cluster}.  With probability at least $1 - 4 n d^{-\frac{\ln(d)}{2}}$ over $\bx \sim \cd_\bx$ there exists $r \in Q$ such that the following hold:
	$\bx = \bmu^{(r)} + \bxi$ where for all $i \in I$ we have $| \inner{\bx_i,\bxi}| \leq \Delta$ and $| \inner{\bxi_i,\bxi}| \leq \Delta$. Also, $| \inner{\bx_i,\bx} - d | \leq \Delta$ for all $i \in I^{(r)}$ and $|\inner{\bx_i,\bx}| \leq p + \Delta$ for all $i \not \in I^{(r)}$. 
\end{lemma}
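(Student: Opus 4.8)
The plan is to deduce this directly from Lemma~\ref{lem:random example is good}. A test example $\bx \sim \cd_\bx$ is generated by first drawing a cluster index $r \sim \cu(Q)$ and then setting $\bx = \bmu^{(r)} + \bxi$ with $\bxi \sim \ndist(\zero, \sigma^2 I_d)$. For each fixed value of $r$, Lemma~\ref{lem:random example is good}---whose hypotheses are exactly Properties~\ref{p:noise.norm} through~\ref{p:sample.in.every.cluster}, which we are assuming---states that, conditionally on this $r$, all four listed events hold for every $i \in I$ with probability at least $1 - 4nd^{-\ln(d)/2}$ over the draw of $\bxi$. Since this bound does not depend on $r$, averaging over $r \sim \cu(Q)$ immediately gives that with probability at least $1 - 4nd^{-\ln(d)/2}$ over $\bx \sim \cd_\bx$ there is such an $r$ (namely the one from which $\bx$ was drawn), which is exactly the assertion.

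If one prefers a self-contained argument, I would simply repeat the short concentration computation behind Lemma~\ref{lem:random example is good}. First I would fix $r$, write $\bx = \bmu^{(r)} + \bxi$, and record two per-index tail bounds over $\bxi$. Using Property~\ref{p:noise.norm} (so $\norm{\bxi_i} \le \sigma\sqrt{2d}$), $\sigma \le 1$, and Lemma~\ref{lem:bound xis product fixed} applied with the fixed vector $\bxi_i$, one gets for each $i \in I$
\[
\Pr\left[ |\inner{\bxi_i,\bxi}| \ge 2\sigma\sqrt{d}\ln(d) \right] \le \Pr\left[ |\inner{\bxi_i,\bxi}| \ge \norm{\bxi_i}\,\sigma\ln(d) \right] \le 2d^{-\ln(d)/2}~,
\]
and from Lemma~\ref{lem:bound mu xi product} one gets $\Pr[\,|\inner{\bmu^{(q)},\bxi}| \ge \sigma\sqrt{d}\ln(d)\,] \le 2d^{-\ln(d)/2}$ for each $q \in Q$. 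A union bound over these $n$ events and the $k \le n$ events of the second type (total cost $4nd^{-\ln(d)/2}$) puts us on the event where $|\inner{\bxi_i,\bxi}| \le 2\sigma\sqrt{d}\ln(d)$ for all $i$ and $|\inner{\bmu^{(q)},\bxi}| \le \sigma\sqrt{d}\ln(d)$ for all $q$. On this event, for $i \in I^{(q)}$, the triangle inequality together with the assumed bound (P3) $|\inner{\bmu^{(q)},\bxi_i}| \le \sigma\sqrt{d}\ln(d)$ and $|\inner{\bmu^{(r)},\bmu^{(q)}}| \le p$, plus the identity $\Delta = 4\sigma\sqrt{d}\ln(d)$, yields all four conclusions: $|\inner{\bxi_i,\bxi}| \le \Delta$; $|\inner{\bx_i,\bxi}| \le |\inner{\bmu^{(q)},\bxi}| + |\inner{\bxi_i,\bxi}| \le \Delta$; $|\inner{\bx_i,\bx}| \le p + \Delta$ when $q \ne r$; and $|\inner{\bx_i,\bx} - d| \le \Delta$ when $q = r$.

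There is no serious obstacle here---the statement is essentially a re-packaging of Lemma~\ref{lem:random example is good} (that lemma fixes $r$, here $r$ is drawn from the mixture), so the shortest route is simply to cite it and average over $r$. The only points needing a little care are: invoking the \emph{fixed-vector} tail bound (Lemma~\ref{lem:bound xis product fixed}) rather than the i.i.d.-Gaussian version, since $\bxi_i$ belongs to the already-fixed dataset $\cs$; using $\sigma \le 1$ to replace $\norm{\bxi_i}\,\sigma\ln(d)$ by $2\sigma\sqrt{d}\ln(d)$; and arranging the union bound so that the $n$ events involving the $\bxi_i$ and the $k \le n$ events involving the cluster means together contribute only $4nd^{-\ln(d)/2}$.
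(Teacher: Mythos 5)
Your primary route is exactly the paper's proof: the paper also derives this lemma as an immediate corollary of Lemma~\ref{lem:random example is good}, simply noting that since the stated probability bound holds for each fixed $r \in Q$, it also holds after averaging over $r \sim \cu(Q)$. Your self-contained fallback is likewise correct and matches the computation the paper uses to prove Lemma~\ref{lem:random example is good} itself (with a slightly tighter union-bound accounting, using $k \le n$ rather than paying $4d^{-\ln(d)/2}$ per index $i$).
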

\begin{proof}
	The lemma follows immediately from \lemref{lem:random example is good}.	
\end{proof}

We next show that if the training data and test data are ``nice'', then KKT points have network outputs that are not too far from the margin. 

\begin{lemma} \label{lem:output not too large}
	Suppose $\cs$ satisfies Properties~\ref{p:noise.norm} through~\ref{p:sample.in.every.cluster}.  
 Let $\bx \in \reals^d$ and $r \in Q$ such that for all $i \in I^{(r)}$ we have $\inner{\bx,\bx_i} \in \left[d - \Delta, d + \Delta \right]$, and for all $q \in Q \setminus \{r\}$ and $i \in I^{(q)}$ we have $| \inner{\bx,\bx_i} | \leq p + \Delta$. Then, $| \cn_\btheta(\bx) | \leq 2$.
\end{lemma}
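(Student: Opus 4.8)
The plan is to control $y^{(r)}\cn_\btheta(\bx)$, which by \lemref{lem:good example labeld correctly} equals $|\cn_\btheta(\bx)|$ for a test point satisfying the stated near-orthogonality hypotheses. I would first reduce to the case $r\in Q_+$ by symmetry: simultaneously negating every second-layer weight $v_j$ and every cluster label $y^{(q)}$ leaves Problem~(\ref{eq:optimization problem}), its KKT conditions~(\ref{eq:kkt condition w})--(\ref{eq:kkt condition b}), and Properties~\ref{p:noise.norm} through~\ref{p:sample.in.every.cluster} unchanged, while swapping $J_+\leftrightarrow J_-$, $Q_+\leftrightarrow Q_-$, and replacing $\cn_\btheta$ by $-\cn_\btheta$ (the inner products $\bx_i^\top\bx$ and the index sets $I^{(q)}$ are untouched, so all hypotheses are preserved). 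Thus we may assume $r\in Q_+$, in which case $\cn_\btheta(\bx)>0$ by \lemref{lem:good example labeld correctly}, and it suffices to prove $\cn_\btheta(\bx)\le 2$.

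First, discard the negative-neuron contributions: since $v_j<0$ and $\phi\ge 0$ for $j\in J_-$, we have $\cn_\btheta(\bx)\le\sum_{j\in J_+}v_j\phi(\bw_j^\top\bx+b_j)$. For each $j\in J_+$, expand $\bw_j^\top\bx+b_j$ via the KKT identity~(\ref{eq:single neuron separating r 2}) (using $y_i=1$ for $i\in I^{(r)}$ since $r\in Q_+$) and bound it with the hypotheses on $\bx$ — namely $\bx_i^\top\bx+1\le d+\Delta+1$ for $i\in I^{(r)}$ and $y_i(\bx_i^\top\bx+1)\le|\bx_i^\top\bx|+1\le p+\Delta+1$ for $i\notin I^{(r)}$ — together with $v_j,\lambda_i,\phi'_{i,j}\ge 0$. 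This gives
\[
	\bw_j^\top\bx+b_j \le v_j\left[(d+\Delta+1)\sum_{i\in I^{(r)}}\lambda_i\phi'_{i,j}+(p+\Delta+1)\sum_{q\in Q\setminus\{r\}}\sum_{i\in I^{(q)}}\lambda_i\phi'_{i,j}\right] =: v_jB_j,
\]
where $B_j\ge 0$. The key observation is that since this upper bound is nonnegative it also bounds the ReLU value: $\phi(\bw_j^\top\bx+b_j)\le\max\{0,\bw_j^\top\bx+b_j\}\le v_jB_j$, and hence $v_j\phi(\bw_j^\top\bx+b_j)\le v_j^2B_j$.

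Summing over $j\in J_+$ and exchanging the order of summation yields
\[
	\cn_\btheta(\bx)\le(d+\Delta+1)\sum_{i\in I^{(r)}}\sum_{j\in J_+}\lambda_iv_j^2\phi'_{i,j}+(p+\Delta+1)\sum_{q\in Q\setminus\{r\}}\sum_{i\in I^{(q)}}\sum_{j\in J_+}\lambda_iv_j^2\phi'_{i,j}.
\]
Applying \lemref{lem:lam upper bound} to bound every inner sum $\sum_{i\in I^{(q)}}\sum_{j\in J_+}\lambda_iv_j^2\phi'_{i,j}$ by $\frac{1}{(1-2c')(d-\Delta+1)}$, and using the defining relation $k(p+\Delta+1)=c'(d-\Delta+1)$ to collapse the second term, we obtain $\cn_\btheta(\bx)\le\frac{1}{1-2c'}\big(\frac{d+\Delta+1}{d-\Delta+1}+c'\big)$. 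Plugging in $c'\le\frac1{10}$ and $\Delta\le\frac{d}{21}$ (\lemref{lem:bound Delta}) bounds $\frac{d+\Delta+1}{d-\Delta+1}\le\frac{23}{20}$ for $d$ large, so $\cn_\btheta(\bx)\le\frac54\cdot\frac54=\frac{25}{16}<2$.

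I do not anticipate a genuine difficulty; the arithmetic is routine once the structural estimates of Section~\ref{app:generalization.kkt.implications} are in hand. The one step that requires care is the ReLU bound flagged above — one cannot use $\phi(z)\le z$ since $\bw_j^\top\bx+b_j$ may be negative, so it is essential that the upper bound $v_jB_j$ be nonnegative in order for it to dominate $\phi(\bw_j^\top\bx+b_j)$ as well. It is also worth recording that the symmetry-reduced case $r\in Q_-$, if done directly, in fact gives the sharper bound $\frac{c'}{1-2c'}\le\frac18$, since there the contributing neurons (those in $J_-$) only interact with the small cross-cluster correlations $p+\Delta$ rather than the large within-cluster correlation $d$.
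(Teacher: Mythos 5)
Correct, and essentially the paper's own argument: you restrict to $J_+$ neurons, expand $\bw_j^\top\bx+b_j$ via the KKT identity~(\ref{eq:single neuron separating r 3}), bound the within- and cross-cluster correlations by $d+\Delta+1$ and $p+\Delta+1$ respectively, and invoke \lemref{lem:lam upper bound} together with $k(p+\Delta+1)=c'(d-\Delta+1)$, just as the paper does (the paper bounds $v_j\phi(\cdot)\le v_j\left|\bw_j^\top\bx+b_j\right|$ and applies the triangle inequality, whereas you track the sign of $\bw_j^\top\bx+b_j$ and note the resulting upper bound is already nonnegative so dominates $\phi$; both yield the same estimate, and the arithmetic constants $25/16$ versus $3/2$ are interchangeable). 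Your concluding aside that the case $r\in Q_-$ ``in fact gives the sharper bound $c'/(1-2c')\le\frac{1}{8}$'' is mistaken: by the very symmetry you invoked, when $r\in Q_-$ the contributing neurons $j\in J_-$ satisfy $y^{(r)}v_j=(-1)\cdot v_j=|v_j|>0$, so the within-cluster term $\sum_{i\in I^{(r)}}\lambda_i y^{(r)}v_j\phi'_{i,j}(\bx_i^\top\bx+1)$ still carries the large factor $d+\Delta+1$; the two cases are exactly symmetric and give the same bound.
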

\begin{proof}
	We prove the claim for $r \in Q_+$. The proof for $r \in Q_-$ is similar. 
	By \lemref{lem:good example labeld correctly} we have $\cn_\btheta(\bx) > 0$.
	We now show that $\cn_\btheta(\bx) \leq 2$.
	
	By \eqref{eq:kkt condition w} and~(\ref{eq:kkt condition b}), for every $j \in J$ we have 
	\begin{align} \label{eq:single neuron separating r 3}
		\bw_j^\top \bx + b_j
		&= \left( \sum_{i \in I} \lambda_i y_i v_j \phi'_{i,j} \bx_i^\top \bx \right) + \sum_{i \in I} \lambda_i y_i v_j \phi'_{i,j} \nonumber
		\\
		&= \sum_{i \in I} \lambda_i y_i v_j \phi'_{i,j} (\bx_i^\top \bx + 1) \nonumber
		\\
		&= \left( \sum_{i \in I^{(r)}} \lambda_i v_j \phi'_{i,j} (\bx_i^\top \bx + 1)  \right) + \sum_{q \in Q \setminus \{r\}} \sum_{i \in I^{(q)}} \lambda_i y_i v_j \phi'_{i,j} (\bx_i^\top \bx + 1)~.
	\end{align}
	
	Now,
	\begin{align*}
		\cn_\btheta(\bx)
		&= \sum_{j \in J} v_j \phi(\bw_j^\top \bx + b_j)
		\leq \sum_{j \in J_+} v_j \left| \bw_j^\top \bx + b_j \right|~.
	\end{align*}
	By \eqref{eq:single neuron separating r 3} the above is at most 
	\begin{align*}
		\sum_{j \in J_+} &v_j \left[ \left( \sum_{i \in I^{(r)}} \lambda_i v_j \phi'_{i,j} (d + \Delta + 1)  \right) + \sum_{q \in Q \setminus \{r\}} \sum_{i \in I^{(q)}} \lambda_i v_j \phi'_{i,j} (p + \Delta + 1) \right]
		\\
		&= \sum_{j \in J_+} \left[ \left( \sum_{i \in I^{(r)}} \lambda_i v_j^2 \phi'_{i,j} (d + \Delta + 1)  \right) + \sum_{q \in Q \setminus \{r\}} \sum_{i \in I^{(q)}} \lambda_i v_j^2  \phi'_{i,j} (p + \Delta + 1) \right]
		\\
		&= \left[ (d + \Delta + 1) \sum_{i \in I^{(r)}}  \sum_{j \in J_+} \lambda_i v_j^2 \phi'_{i,j} \right] + \left[  (p + \Delta + 1) \sum_{q \in Q \setminus \{r\}} \sum_{i \in I^{(q)}} \sum_{j \in J_+}  \lambda_i v_j^2  \phi'_{i,j} \right]
		\\
		&\leq \left[ (d + \Delta + 1) \cdot \frac{1}{(1-2c')(d-\Delta + 1)}  \right] + \left[  (p + \Delta + 1) k \cdot \frac{1}{(1-2c')(d-\Delta + 1)} \right]~,
	\end{align*}
	where in the last inequality we used \lemref{lem:lam upper bound}. Plugging in $k = c' \cdot \frac{d - \Delta + 1}{p+\Delta+1}$, the above equals
	\begin{align*}
		&\left[\frac{d + \Delta + 1}{(1-2c')(d-\Delta + 1)}  \right] + \left[  c' \cdot \frac{d - \Delta + 1}{p+\Delta+1} \cdot \frac{p + \Delta + 1}{(1-2c')(d-\Delta + 1)} \right]
		\\
		&= \frac{d + \Delta + 1}{(1-2c')(d-\Delta + 1)}  +  \frac{c'}{(1-2c')}~.
	\end{align*}
	Using $c' \leq \frac{1}{10}$, the above is at most 
	\begin{align*}
		\frac{5(d + \Delta + 1)}{4(d-\Delta + 1)}  +  \frac{1}{8} 
		\leq \frac{5(d + \Delta)}{4(d-\Delta)}  +  \frac{1}{8}~.
	\end{align*}
    By \lemref{lem:bound Delta},
    the displayed equation is at most
	\[
		\frac{5(22d/21)}{4(20d/21)} + \frac{1}{8} 
		= \frac{5 \cdot 22}{4 \cdot 20}  +  \frac{1}{8} 
		= \frac{3}{2}
		\leq 2~. 
	\]
\end{proof}

Next, we show that the inputs to the neurons are not too negative for nice test examples.  

\begin{lemma} \label{lem:neuron input not small}
	Suppose $\cs$ satisfies Properties~\ref{p:noise.norm} through~\ref{p:sample.in.every.cluster}.  
 Let $r \in Q$ and let $\bx = \bmu^{(r)} + \bxi$ such that for all $i \in I$ we have $| \inner{\bx_i,\bxi}| \leq \Delta$ and $| \inner{\bxi_i,\bxi}| \leq \Delta$. Also, assume that $\inner{\bx_i,\bx} \in \left[d - \Delta, d + \Delta \right]$ for all $i \in I^{(r)}$ and $|\inner{\bx_i,\bx}| \leq p + \Delta$ for all $i \not \in  I^{(r)}$.
	Then, for all $j \in J$ we have
	\[
		\bw_j^\top \bx + b_j \geq - \sum_{i \in I} \lambda_i | v_j | \phi'_{i,j} (2 \Delta + p + 1)~.
	\]
\end{lemma}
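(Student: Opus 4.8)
The plan is to work directly from the KKT stationarity identities for $\bw_j$ and $b_j$ (Eq.~(\ref{eq:kkt condition w}) and~(\ref{eq:kkt condition b})), which combine to give
$\bw_j^\top \bx + b_j = \sum_{i \in I} \lambda_i y_i v_j \phi'_{i,j}(\bx_i^\top \bx + 1)$,
and to lower-bound this sum by separating $I$ into $I^{(r)}$ and $I \setminus I^{(r)}$. For $i \notin I^{(r)}$ the hypothesis $|\inner{\bx,\bx_i}| \le p + \Delta$ gives $|\bx_i^\top \bx + 1| \le p + \Delta + 1 \le 2\Delta + p + 1$, so those terms contribute at least $-(p+\Delta+1)\sum_{i \notin I^{(r)}} \lambda_i |v_j| \phi'_{i,j}$, which is harmless. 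The whole difficulty is the $I^{(r)}$ terms, where $\bx_i^\top \bx + 1$ is of order $d$; these are handled by a case analysis on the sign of $v_j$ relative to $y^{(r)}$.

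\textbf{Case 1: $v_j = 0$, or $\sign(v_j) = y^{(r)}$.} If $v_j = 0$ the KKT identities force $\bw_j = \zero$ and $b_j = 0$, so the inequality reads $0 \ge 0$. If $\sign(v_j) = y^{(r)}$, then for $i \in I^{(r)}$ we have $y_i = y^{(r)}$, hence $y_i v_j = |v_j| > 0$, while $\bx_i^\top \bx + 1 \ge d - \Delta + 1 > 0$ by Lemma~\ref{lem:bound Delta}; thus each $I^{(r)}$ term is nonnegative and may simply be discarded, leaving $\bw_j^\top \bx + b_j \ge -(p+\Delta+1)\sum_{i \in I} \lambda_i |v_j| \phi'_{i,j}$.

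\textbf{Case 2: $\sign(v_j) = -y^{(r)}$.} If $\phi'_{i,j} = 0$ for all $i \in I^{(r)}$, the $I^{(r)}$ terms vanish and we again get $\bw_j^\top \bx + b_j \ge -(p+\Delta+1)\sum_{i \in I} \lambda_i |v_j| \phi'_{i,j}$. Otherwise fix $s \in I^{(r)}$ with $\phi'_{s,j} > 0$; by the definition of the subgradient $\phi'_{s,j}$ this forces $\bw_j^\top \bx_s + b_j \ge 0$, so
\[
\bw_j^\top \bx + b_j \;=\; (\bw_j^\top \bx_s + b_j) + \bw_j^\top(\bx - \bx_s) \;\ge\; \bw_j^\top(\bx - \bx_s) \;=\; \bw_j^\top \bxi - \bw_j^\top \bxi_s,
\]
using $\bx - \bx_s = (\bmu^{(r)} + \bxi) - (\bmu^{(r)} + \bxi_s) = \bxi - \bxi_s$. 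Expanding $\bw_j = \sum_i \lambda_i y_i v_j \phi'_{i,j} \bx_i$ and using the hypothesis $|\bx_i^\top \bxi| \le \Delta$ yields $\bw_j^\top \bxi \ge -\Delta \sum_i \lambda_i |v_j| \phi'_{i,j}$. For $\bw_j^\top \bxi_s$ I split off the $i=s$ term: for $i \ne s$, Properties~\ref{p:noise.norm}--\ref{p:sample.in.every.cluster} and $\sigma \le 1$ give $|\bx_i^\top \bxi_s| \le |\inner{\bmu^{(q_i)},\bxi_s}| + |\inner{\bxi_i,\bxi_s}| \le \sigma\sqrt{d}\ln d + \sigma^2\sqrt{2d}\ln d < \Delta$; while for $i = s$, since $\sign(v_j) = -y^{(r)} = -y_s$ we have $y_s v_j = -|v_j|$ and $\bx_s^\top \bxi_s = \inner{\bmu^{(r)},\bxi_s} + \norm{\bxi_s}^2 \ge -\sigma\sqrt{d}\ln d \ge -\Delta$, so the $i=s$ contribution to $\bw_j^\top\bxi_s$ equals $-\lambda_s |v_j| \phi'_{s,j}\,\bx_s^\top\bxi_s \le \Delta\, \lambda_s |v_j| \phi'_{s,j}$. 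Hence $\bw_j^\top \bxi_s \le \Delta \sum_i \lambda_i |v_j| \phi'_{i,j}$, and so $\bw_j^\top \bx + b_j \ge -2\Delta \sum_i \lambda_i |v_j| \phi'_{i,j}$. In all cases the bound is at least $-(2\Delta + p + 1)\sum_{i\in I}\lambda_i|v_j|\phi'_{i,j}$, which is the claim.

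The main obstacle is precisely the observation that makes Case 2 work: the $i=s$ term carries the factor $\norm{\bxi_s}^2$, which is \emph{not} small (it can be of order $d$), so a naive term-by-term bound is useless. The point is that when $\sign(v_j) = -y^{(r)}$ this large quantity enters with a nonnegative sign and can be dropped, leaving only the benign bound $\inner{\bmu^{(r)},\bxi_s} \ge -\sigma\sqrt d\ln d \ge -\Delta$; and correspondingly, the $\bx_s$-shift trick only helps in this ``opposite-sign'' case, whereas in the ``same-sign'' case one must instead exploit that the $I^{(r)}$ terms are manifestly nonnegative. Getting this sign bookkeeping right is the crux.
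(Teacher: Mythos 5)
Your proof is correct, and it rests on exactly the same core computation as the paper's: the shift identity that expresses the preactivation at the test point $\bx$ in terms of the preactivation at a training point $\bx_s$ from the same cluster, the expansion of $\bw_j^\top(\bxi-\bxi_s)$ into pairwise inner products bounded by the hypotheses and Properties~\ref{p:noise.norm}--\ref{p:sample.in.every.cluster}, and the decisive sign observation that when $\sign(v_j)=-y^{(r)}$ the $i=s$ term, which contains the potentially $\Theta(d)$-sized quantity $\norm{\bxi_s}^2$, enters with a favorable sign and can be dropped. The only difference is logical packaging: the paper argues by contradiction, using the assumed failure of the inequality to conclude that $\phi'_{i',j}=0$ for every $i'\in I^{(r)}$ whenever $y^{(r)}v_j<0$ (via showing $\bw_j^\top\bx_{i'}+b_j<0$), and then feeds this back into the KKT expansion; you instead argue directly, case-splitting on whether some $\phi'_{s,j}>0$ with $s\in I^{(r)}$ exists and, if so, anchoring at $\bw_j^\top\bx_s+b_j\ge 0$ and shifting back to $\bx$. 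Your direct version is slightly leaner, avoids the contradiction detour, and makes the intermediate bound $-2\Delta\sum_{i\in I}\lambda_i|v_j|\phi'_{i,j}$ in the hard subcase explicit, but the mathematical content is the same.
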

\begin{proof}
	Suppose towards contradiction that there exists $j \in J$ such that
	\begin{equation} \label{eq: small inputs to neurons assumption}
		\bw_j^\top \bx + b_j 
		<  - \sum_{i \in I} \lambda_i | v_j | \phi'_{i,j} (2 \Delta + p + 1)
		\leq - \sum_{i \in I} 2 \lambda_i | v_j | \phi'_{i,j} \Delta~.
	\end{equation}
	Suppose first that $y^{(r)} v_j < 0$. By  \eqref{eq:kkt condition w} for all $i' \in I^{(r)}$ we have
	\begin{align} \label{eq:neuron input not small derivation 1}
		\bw_j^\top& \bx_{i'} + b_j 
		= \bw_j^\top \bx + b_j + \bw_j^\top (\bx_{i'} - \bx) \nonumber
		\\
		&= \bw_j^\top \bx + b_j + \bw_j^\top (\bmu^{(r)} + \bxi_{i'} - \bmu^{(r)} - \bxi) \nonumber
		\\
		&<  - \sum_{i \in I} 2 \lambda_i | v_j | \phi'_{i,j} \Delta + \sum_{i \in I} \lambda_i y_i v_j \phi'_{i,j} \bx_i^\top (\bxi_{i'} - \bxi) \nonumber
		\\
		&=  - \sum_{i \in I} 2 \lambda_i | v_j | \phi'_{i,j} \Delta -  \left( \sum_{i \in I} \lambda_i y_i v_j \phi'_{i,j} \bx_i^\top  \bxi \right) + \left( \sum_{i \in I \setminus \{i'\}} \lambda_i y_i v_j \phi'_{i,j} \bx_i^\top \bxi_{i'}  \right) +  \lambda_{i'} y_{i' }v_j \phi'_{i',j} \bx_{i'}^\top \bxi_{i'} \nonumber
		\\
		&\leq  - \sum_{i \in I} 2 \lambda_i | v_j | \phi'_{i,j} \Delta + \left( \sum_{i \in I} \lambda_i | v_j | \phi'_{i,j} \Delta \right) + \left( \sum_{i \in I \setminus \{i'\}} \lambda_i y_i v_j \phi'_{i,j} \bx_i^\top \bxi_{i'}  \right)  +  \lambda_{i'} y^{(r)} v_j \phi'_{i',j} \bx_{i'}^\top \bxi_{i'}~.
	\end{align}
	Recall that by our assumption on $\cs$, for every $i \in I$ and $q \in Q$ we have $| \inner{\bmu^{(q)},\bxi_i} | \leq \sigma \sqrt{d} \ln(d)$, and for every $s \neq s'$ in $I$ we have $| \inner{\bxi_s, \bxi_{s'} } | \leq \sigma^2 \sqrt{2d} \ln(d)$. 
	Thus, for $q \in Q$ and $i \in I^{(q)}$ such that $i \neq i'$ we have 
	\[
		| \bx_i^\top \bxi_{i'} |
		\leq | (\bmu^{(q)})^\top \bxi_{i'} | + | \bxi_i^\top \bxi_{i'} |
		\leq \sigma \sqrt{d} \ln(d) + \sigma^2 \sqrt{2d} \ln(d)
		\leq \Delta~.
	\]
	Moreover,
	\[
		\bx_{i'}^\top \bxi_{i'} 
		= (\bmu^{(r)})^\top \bxi_{i'} + \bxi_{i'}^\top \bxi_{i'} 
		\geq (\bmu^{(r)})^\top \bxi_{i'}
		\geq - \sigma \sqrt{d} \ln(d)
		\geq - \Delta~.
	\]
	Using the above displayed equations and the assumption $y^{(r)} v_j < 0$, the RHS in \eqref{eq:neuron input not small derivation 1} is at most	
	\begin{align*}
		- \sum_{i \in I} &2 \lambda_i | v_j | \phi'_{i,j} \Delta +  \left( \sum_{i \in I} \lambda_i | v_j | \phi'_{i,j} \Delta \right) + \left( \sum_{i \in I \setminus \{i'\}} \lambda_i | v_j | \phi'_{i,j} \Delta \right) + \lambda_{i'} | v_j | \phi'_{i',j} \Delta 
		\\
		&= - \sum_{i \in I} 2 \lambda_i | v_j | \phi'_{i,j} \Delta +  2 \left( \sum_{i \in I} \lambda_i | v_j | \phi'_{i,j} \Delta \right) 
		\\
		&= 0~.
	\end{align*}
	Hence, $y^{(r)} v_j < 0$ implies that $\phi'_{i',j} = \onefunc[\bw_j^\top \bx_{i'} + b_j \geq 0] = 0$ for all $i' \in I^{(r)}$.
	
	By \eqref{eq:kkt condition w} and~(\ref{eq:kkt condition b}) we have
	\begin{align} \label{eq:neuron input main}
		\bw_j^\top \bx + b_j 
		&=  \sum_{i \in I} \lambda_i y_i v_j \phi'_{i,j} \bx_i^\top \bx + \sum_{i \in I} \lambda_i y_i v_j \phi'_{i,j} \nonumber
		\\
		&=  \sum_{i \in I} \lambda_i y_i v_j \phi'_{i,j} (\bx_i^\top \bx + 1) \nonumber
		\\
		&= \left[ \sum_{i \in I^{(r)}} \lambda_i y^{(r)} v_j \phi'_{i,j} (\bx_i^\top \bx + 1) \right] + \left[ \sum_{q \in Q \setminus \{r\}}  \sum_{i \in I^{(q)}} \lambda_i y_i v_j \phi'_{i,j} (\bx_i^\top \bx + 1)  \right]~.
	\end{align}
	We now analyze both terms in the above RHS.
	
	Note that if $y^{(r)} v_j \geq 0$ then 
	\[
		 \sum_{i \in I^{(r)}} \lambda_i y^{(r)} v_j \phi'_{i,j} (\bx_i^\top \bx + 1) 
		 \geq \sum_{i \in I^{(r)}} \lambda_i y^{(r)} v_j \phi'_{i,j} (d - \Delta + 1) 
		 \geq 0~,
	\]
	and if $y^{(r)} v_j < 0$ then  we have 
	\[
		 \sum_{i \in I^{(r)}} \lambda_i y^{(r)} v_j \phi'_{i,j} (\bx_i^\top \bx + 1) 
		 = 0
	\]
	since $\phi'_{i',j} = 0$ for all $i' \in I^{(r)}$.
	Moreover, 
	\begin{align*}
		 \sum_{q \in Q \setminus \{r\}}  \sum_{i \in I^{(q)}} \lambda_i y_i v_j \phi'_{i,j} (\bx_i^\top \bx + 1)
		 &\geq - \sum_{q \in Q \setminus \{r\}}  \sum_{i \in I^{(q)}} \lambda_i | v_j | \phi'_{i,j} (p + \Delta + 1)
		 \\
		 &\geq - \sum_{i \in I} \lambda_i | v_j | \phi'_{i,j} (p + \Delta + 1)~.
	\end{align*} 
	Plugging the above equations into \eqref{eq:neuron input main} we get 
	\[
		\bw_j^\top \bx + b_j 
		\geq  - \sum_{i \in I}  \lambda_i | v_j | \phi'_{i,j} (p + \Delta + 1)
		\geq - \sum_{i \in I}  \lambda_i | v_j | \phi'_{i,j} (p + 2\Delta + 1)~,
	\]
	in contradiction to \eqref{eq: small inputs to neurons assumption}. 
\end{proof}

We now obtain a lower bound for the rate that perturbations in the direction $\bu = \sum_{r \in Q} y^{(r)} \bmu^{(r)}$ change the inputs to the neurons. 

\begin{lemma} \label{lem:neuron input changes fast}
	Suppose $\cs$ satisfies Properties~\ref{p:noise.norm} through~\ref{p:sample.in.every.cluster}.  
 Let $\bu = \sum_{r \in Q} y^{(r)} \bmu^{(r)}$. For every $j \in J_+$ we have 
	\[
		\bw_j^\top \bu \geq \sum_{i \in I} \lambda_i v_j \phi'_{i,j}  \left( d-\Delta - kp - k \Delta \right)~.
	\] 
	For every $j \in J_-$ we have 
	\[
		\bw_j^\top \bu \leq \sum_{i \in I} \lambda_i v_j \phi'_{i,j}  \left( d-\Delta - kp - k \Delta \right)~.
	\]
\end{lemma}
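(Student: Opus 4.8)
The plan is to read $\bw_j$ off from the stationarity condition~\eqref{eq:kkt condition w} and then simply expand the inner product with $\bu$. From $\bw_j=\sum_{i\in I}\lambda_i y_i v_j\phi'_{i,j}\bx_i$ we get $\bw_j^\top\bu=\sum_{i\in I}\lambda_i v_j\phi'_{i,j}\,(y_i\,\bx_i^\top\bu)$. Since $\lambda_i\ge 0$ and $\phi'_{i,j}\in[0,1]$, the sign of $v_j$ is the only thing that controls whether a uniform lower bound on the scalars $y_i\bx_i^\top\bu$ turns into an overall lower bound (for $j\in J_+$) or an overall upper bound (for $j\in J_-$). So the whole lemma reduces to one estimate: for every $i\in I$,
\[
y_i\,\bx_i^\top\bu\;\ge\; d-\Delta-kp-k\Delta~.
\]

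To establish this, write $q_i:=\cluster(i)$, so that $\bx_i=\bmu^{(q_i)}+\bxi_i$ and $y_i=y^{(q_i)}$, and expand $\bu=\sum_{r\in Q}y^{(r)}\bmu^{(r)}$. Then
\[
y_i\,\bx_i^\top\bu=\sum_{r\in Q}y^{(q_i)}y^{(r)}\bigl(\bmu^{(q_i)}+\bxi_i\bigr)^\top\bmu^{(r)}~.
\]
The diagonal term ($r=q_i$) equals $\bigl(y^{(q_i)}\bigr)^2\bigl(\norm{\bmu^{(q_i)}}^2+\inner{\bmu^{(q_i)},\bxi_i}\bigr)=d+\inner{\bmu^{(q_i)},\bxi_i}$, which by the properties of $\cs$ (in particular $|\inner{\bmu^{(q)},\bxi_i}|\le\sigma\sqrt d\ln(d)=\Delta/4$) is at least $d-\Delta/4$. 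Each of the $k-1$ off-diagonal terms is at least $-\bigl(|\inner{\bmu^{(q_i)},\bmu^{(r)}}|+|\inner{\bxi_i,\bmu^{(r)}}|\bigr)\ge-(p+\Delta/4)$, so together they contribute at least $-kp-k\Delta/4$. Adding, $y_i\bx_i^\top\bu\ge d-\Delta/4-kp-k\Delta/4\ge d-\Delta-kp-k\Delta$, as claimed.

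Finally I would assemble the two cases. For $j\in J_+$ we have $v_j>0$, hence $\lambda_i v_j\phi'_{i,j}\ge0$; multiplying the displayed bound on $y_i\bx_i^\top\bu$ by this factor and summing over $i\in I$ gives $\bw_j^\top\bu\ge\sum_{i\in I}\lambda_i v_j\phi'_{i,j}(d-\Delta-kp-k\Delta)$. For $j\in J_-$ we have $v_j<0$, so $\lambda_i v_j\phi'_{i,j}\le0$, and multiplying the same lower bound on $y_i\bx_i^\top\bu$ by it reverses the inequality; summing yields $\bw_j^\top\bu\le\sum_{i\in I}\lambda_i v_j\phi'_{i,j}(d-\Delta-kp-k\Delta)$. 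This is a direct computation with no genuine obstacle; the only thing needing care is the sign bookkeeping in the $j\in J_-$ case (and, if one wants, noting via \assref{ass:dist} that $d-\Delta-kp-k\Delta>0$, although positivity is not actually needed for the stated inequalities).
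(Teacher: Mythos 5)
Your proposal is correct and follows essentially the same route as the paper: plug the stationarity condition \eqref{eq:kkt condition w} for $\bw_j$ into $\bw_j^\top\bu$, expand each $y_i\bx_i^\top\bmu^{(r)}$ into a diagonal term ($r=\cluster(i)$, giving $d$ plus a small noise correction) and $k-1$ near-orthogonal cross terms bounded by $p+\Delta$, and then use the sign of $v_j$ to turn the per-sample lower bound $y_i\bx_i^\top\bu\ge d-\Delta-kp-k\Delta$ into the two stated inequalities. The paper organizes the sum by cluster rather than per sample, but the estimates and the sign bookkeeping for $J_+$ versus $J_-$ are identical.
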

\begin{proof}
	%Let $j \in J_+$. 
	Let $j \in J$.
	Using \eqref{eq:kkt condition w} we have
	\begin{align} \label{eq:uz bound}
		\bw_j^\top &\sum_{r \in Q} y^{(r)} \bmu^{(r)}
		= \sum_{i \in I} \lambda_i y_i v_j \phi'_{i,j} \bx_i^\top  \sum_{r \in Q} y^{(r)} \bmu^{(r)} \nonumber
		\\
		&= \sum_{q \in Q} \sum_{i \in I^{(q)}} \lambda_i y^{(q)} v_j \phi'_{i,j}  \bx_i^\top  \left(  y^{(q)} \bmu^{(q)} + \sum_{r \in Q \setminus \{q\}} y^{(r)} \bmu^{(r)} \right) \nonumber
		\\
		&= \sum_{q \in Q} \sum_{i \in I^{(q)}} \lambda_i v_j \phi'_{i,j}  \left(  (y^{(q)})^2 \bx_i^\top \bmu^{(q)} + \sum_{r \in Q \setminus \{q\}} y^{(q)} y^{(r)} \bx_i^\top \bmu^{(r)} \right) \nonumber
		\\
		&= \sum_{q \in Q} \sum_{i \in I^{(q)}} \lambda_i v_j \phi'_{i,j}  \left[  (\bmu^{(q)})^\top \bmu^{(q)} + \bxi_i^\top \bmu^{(q)} + \sum_{r \in Q \setminus \{q\}} \left( y^{(q)} y^{(r)} (\bmu^{(q)})^\top \bmu^{(r)} + y^{(q)} y^{(r)} \bxi_i^\top \bmu^{(r)} \right) \right]~. 
	\end{align}
	
	For $j \in J_+$ the above is at least
	\begin{align*}
	    \sum_{q \in Q} \sum_{i \in I^{(q)}} \lambda_i v_j \phi'_{i,j}  \left( d - \Delta - \sum_{r \in Q \setminus \{q\}} (p + \Delta) \right)
% 		\\
 		\geq \sum_{i \in I} \lambda_i v_j \phi'_{i,j}  \left( d - \Delta - k(p + \Delta) \right)~.        
	\end{align*}
	
	Similarly, for $j \in J_-$, \eqref{eq:uz bound} is at most
	\begin{align*}
	    \sum_{q \in Q} \sum_{i \in I^{(q)}} \lambda_i v_j \phi'_{i,j}  \left( d - \Delta - \sum_{r \in Q \setminus \{q\}} (p + \Delta) \right)
% 		\\
 		\leq \sum_{i \in I} \lambda_i v_j \phi'_{i,j}  \left( d - \Delta - k(p + \Delta) \right)~.
	\end{align*}
	
\end{proof}

We next show that the quantity $d-\Delta-kp-k\Delta$ appearing in the lemma above is strictly positive. 
\begin{lemma} \label{lem:expression is positive}
	We have $d - \Delta - k(p + \Delta) > 0$.
\end{lemma}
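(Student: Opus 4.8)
The plan is to derive the inequality directly from the third item of Assumption~\ref{ass:dist} together with Lemma~\ref{lem:bound Delta}. Recall that $\Delta = 4\sigma\sqrt{d}\ln(d)$ and $p = \max_{q \neq q'} |\inner{\bmu^{(q)},\bmu^{(q')}}|$, so Assumption~\ref{ass:dist} says precisely that $k(p + \Delta + 1) \leq \frac{d - \Delta + 1}{10}$. First I would use the trivial bound $k(p+\Delta) \leq k(p+\Delta+1)$, which combined with the assumption gives $k(p+\Delta) \leq \frac{d-\Delta+1}{10}$. Substituting this in,
\[
    d - \Delta - k(p+\Delta) \;\geq\; d - \Delta - \frac{d-\Delta+1}{10} \;=\; \frac{9(d-\Delta) - 1}{10}~,
\]
so it suffices to check that $9(d-\Delta) > 1$, i.e. $d - \Delta > \tfrac{1}{9}$.

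For this last step I would invoke Lemma~\ref{lem:bound Delta}, which gives $\Delta \leq \frac{d}{21}$ and hence $d - \Delta \geq \frac{20}{21}d \geq \frac{20}{21} > \frac{1}{9}$ (using $d \geq 1$). This yields $d - \Delta - k(p+\Delta) \geq \frac{9 \cdot (20/21) - 1}{10} > 0$, as desired. There is essentially no obstacle here; the only thing to be careful about is that it is $p + \Delta + 1$ (not $p + \Delta$) that appears in Assumption~\ref{ass:dist}, so one exploits the harmless extra slack of the $+1$, and that Lemma~\ref{lem:bound Delta} is what rules out the degenerate regime in which $d - \Delta$ could a priori be arbitrarily small.
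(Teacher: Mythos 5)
Your proof is correct and takes essentially the same route as the paper: both bound $k(p+\Delta) \leq k(p+\Delta+1) \leq \frac{1}{10}(d-\Delta+1)$ via the third item of Assumption~\ref{ass:dist} and then invoke Lemma~\ref{lem:bound Delta} ($\Delta \leq d/21$) to show the remaining slack $d - \Delta - \frac{d-\Delta+1}{10}$ is strictly positive. The algebraic bookkeeping differs only cosmetically.
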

\begin{proof}
	By \assref{ass:dist} and \lemref{lem:bound Delta}, 
	we have 
	\begin{align*}
		k(p + \Delta)
		&< k(p + \Delta + 1)
		\leq \frac{1}{10} \cdot (d - \Delta + 1)
		= (d - \Delta) - \left[ \frac{9}{10} (d - \Delta) - \frac{1}{10} \right]
		\\
		&\leq (d - \Delta) - \left[ \frac{9}{10} \cdot \frac{20d}{21} - \frac{1}{10} \right]
		\leq (d - \Delta) - \left[ \frac{18 \cdot 1}{21} - \frac{1}{10} \right]
		< d - \Delta~.
	\end{align*}
\end{proof}

Using the above lemmas, we show that a small perturbation to nice inputs suffices for obtaining positive inputs to all hidden neurons.
  
\begin{lemma} \label{lem:neurons input become positive}
	Suppose $\cs$ satisfies Properties~\ref{p:noise.norm} through~\ref{p:sample.in.every.cluster}.  
 Let $\bz = \eta \sum_{q \in Q} y^{(q)} \bmu^{(q)}$ for $\eta \geq  \frac{ 2 \Delta + p + 1}{d-\Delta - kp - k \Delta}$. 
	Let $r \in Q$ and let $\bx = \bmu^{(r)} + \bxi$ such that for all $i \in I$ we have $| \inner{\bx_i,\bxi}| \leq \Delta$ and $| \inner{\bxi_i,\bxi}| \leq \Delta$. Also, assume that $\inner{\bx_i,\bx} \in \left[d - \Delta, d + \Delta \right]$ for all $i \in I^{(r)}$ and $|\inner{\bx_i,\bx}| \leq p + \Delta$ for all $i \not \in  I^{(r)}$.
	Then, 
	we have for all $j \in J_-$ that $\bw_j^\top (\bx - \bz) + b_j \geq 0$, and for all $j \in J_+$ that $\bw_j^\top (\bx + \bz) + b_j \geq 0$.
\end{lemma}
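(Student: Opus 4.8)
The statement should follow by directly combining the three preceding lemmas: \lemref{lem:neuron input not small} (a lower bound on $\bw_j^\top \bx + b_j$ for nice test points), \lemref{lem:neuron input changes fast} (a bound on $\bw_j^\top \bu$ in the favorable direction $\bu = \sum_{q\in Q} y^{(q)}\bmu^{(q)}$, where $\bz = \eta\bu$), and \lemref{lem:expression is positive} (that the scalar $d - \Delta - kp - k\Delta$ is strictly positive). The idea is that moving $\bx$ by $+\bz$ (for $j \in J_+$) or $-\bz$ (for $j \in J_-$) increases the pre-activation by at least $\eta \cdot (d-\Delta-kp-k\Delta)\sum_{i\in I}\lambda_i |v_j|\phi'_{i,j}$, which by the choice $\eta \geq \tfrac{2\Delta+p+1}{d-\Delta-kp-k\Delta}$ is at least $(2\Delta+p+1)\sum_{i\in I}\lambda_i|v_j|\phi'_{i,j}$, exactly offsetting the worst-case negative value of $\bw_j^\top\bx+b_j$ allowed by \lemref{lem:neuron input not small}.

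First I would handle $j \in J_+$ (so $v_j > 0$ and $|v_j| = v_j$). By \lemref{lem:neuron input changes fast}, $\bw_j^\top \bz = \eta\bw_j^\top\bu \geq \eta(d-\Delta-kp-k\Delta)\sum_{i\in I}\lambda_i v_j\phi'_{i,j}$; since $\lambda_i,\phi'_{i,j},v_j \geq 0$ and $d-\Delta-kp-k\Delta>0$ by \lemref{lem:expression is positive}, the right-hand side is nonnegative, and plugging in the lower bound on $\eta$ gives $\bw_j^\top\bz \geq (2\Delta+p+1)\sum_{i\in I}\lambda_i v_j\phi'_{i,j}$. Adding this to the bound $\bw_j^\top\bx+b_j \geq -(2\Delta+p+1)\sum_{i\in I}\lambda_i v_j\phi'_{i,j}$ from \lemref{lem:neuron input not small} yields $\bw_j^\top(\bx+\bz)+b_j \geq 0$.

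For $j \in J_-$ (so $v_j < 0$ and $|v_j| = -v_j$), the only subtlety is keeping track of signs. Here \lemref{lem:neuron input changes fast} gives $\bw_j^\top\bu \leq (d-\Delta-kp-k\Delta)\sum_{i\in I}\lambda_i v_j\phi'_{i,j}$, and multiplying by $-\eta < 0$ flips the inequality: $\bw_j^\top(-\bz) = -\eta\bw_j^\top\bu \geq \eta(d-\Delta-kp-k\Delta)\sum_{i\in I}\lambda_i(-v_j)\phi'_{i,j} = \eta(d-\Delta-kp-k\Delta)\sum_{i\in I}\lambda_i|v_j|\phi'_{i,j}$. Again this is nonnegative, and applying the lower bound on $\eta$ gives $\bw_j^\top(-\bz) \geq (2\Delta+p+1)\sum_{i\in I}\lambda_i|v_j|\phi'_{i,j}$, which combined with \lemref{lem:neuron input not small} gives $\bw_j^\top(\bx-\bz)+b_j \geq 0$.

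There is essentially no serious obstacle here — all the real work has been done in \lemref{lem:neuron input not small} and \lemref{lem:neuron input changes fast}. The one thing to be careful about is the sign bookkeeping in the $J_-$ case (where $v_j<0$ means $\sum_i \lambda_i v_j \phi'_{i,j} \leq 0$, so multiplying by $-\eta$ reverses the inequality direction), and verifying that the hypotheses of \lemref{lem:neuron input not small} and \lemref{lem:neuron input changes fast} are exactly the hypotheses assumed here on $\cs$, $\bx$, $\bxi$, and $r$ — which they are, since the nice-test-point conditions on $\inner{\bx_i,\bxi}$, $\inner{\bxi_i,\bxi}$, $\inner{\bx_i,\bx}$ match verbatim.
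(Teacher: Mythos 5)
Your proposal is correct and matches the paper's proof in all essential respects: both combine Lemma~\ref{lem:neuron input not small}, Lemma~\ref{lem:neuron input changes fast}, and Lemma~\ref{lem:expression is positive}, using the choice of $\eta$ to make the favorable shift offset the worst-case preactivation. The only cosmetic difference is that the paper factors the final bound as $\sum_i \lambda_i v_j \phi'_{i,j}\bigl(2\Delta+p+1-\eta(d-\Delta-kp-k\Delta)\bigr)$ and reads off the sign, whereas you bound $\bw_j^\top(\pm\bz)$ and $\bw_j^\top\bx+b_j$ separately before adding; the sign bookkeeping for $J_-$ is handled correctly either way.
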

\begin{proof}
	Let $j \in J_-$. By \lemref{lem:neuron input not small}, we have 
	\[
		\bw_j^\top \bx + b_j \geq 
		\sum_{i \in I} \lambda_i  v_j \phi'_{i,j} (2 \Delta + p + 1)~.
	\] 
	By \lemref{lem:neuron input changes fast}, we have
	\[
		- \bw_j^\top \bz 
		\geq -\eta \sum_{i \in I} \lambda_i v_j \phi'_{i,j}  \left( d-\Delta - kp - k \Delta \right)~.
	\]
	Combining the last two displayed equations, we get
	\begin{align*}
		\bw_j^\top (\bx - \bz) + b_j
		&= \bw_j^\top \bx + b_j - \bw_j^\top \bz
		\\
		&\geq \sum_{i \in I} \lambda_i  v_j \phi'_{i,j} (2 \Delta + p + 1) - \eta \sum_{i \in I} \lambda_i v_j \phi'_{i,j}  \left( d-\Delta - kp - k \Delta \right)
		\\
		&= \sum_{i \in I} \lambda_i  v_j \phi'_{i,j} \left( 2 \Delta + p + 1 - \eta (d-\Delta - kp - k \Delta) \right)~. 
	\end{align*}
	Note that by \lemref{lem:expression is positive} we have $d-\Delta - kp - k \Delta > 0$. Hence, for $\eta \geq \frac{ 2 \Delta + p + 1}{d-\Delta - kp - k \Delta}$ we have $\bw_j^\top (\bx - \bz) + b_j \geq 0$.
	
	The proof for $j \in J_+$ is similar. Namely, by Lemmas~\ref{lem:neuron input not small} and~\ref{lem:neuron input changes fast}, we have
	\begin{align*}
	    \bw_j^\top (\bx + \bz) + b_j 
	    &= \bw_j^\top \bx + b_j + \bw_j^\top \bz
		\\
		&\geq - \sum_{i \in I} \lambda_i  v_j \phi'_{i,j} (2 \Delta + p + 1) + \eta \sum_{i \in I} \lambda_i v_j \phi'_{i,j}  \left( d-\Delta - kp - k \Delta \right)
		\\
		&= \sum_{i \in I} \lambda_i  v_j \phi'_{i,j} \left( - 2 \Delta - p - 1 + \eta (d-\Delta - kp - k \Delta) \right)~, 
	\end{align*}
	and hence for $\eta \geq \frac{ 2 \Delta + p + 1}{d-\Delta - kp - k \Delta}$ we get $\bw_j^\top (\bx + \bz) + b_j \geq 0$.
\end{proof}

Let now $\eta_1 =  \frac{ 2 \Delta + p + 1}{d-\Delta - kp - k \Delta}$ and $\eta_2 =  \frac{3(3d+\Delta+1)(1-2c')}{(d-\Delta -kp -k\Delta)(1-3c')ck}$. Note that by \lemref{lem:expression is positive} both $\eta_1$ and $\eta_2$ are positive. We denote $\bz = (\eta_1 + \eta_2) \sum_{q \in Q} y^{(q)} \bmu^{(q)}$.  

In the next lemma, we show that perturbing nice test examples from positive clusters with the vector $-\bz$ changes the sign of the network output. 

\begin{lemma} \label{lem:pert flips1}
	Suppose $\cs$ satisfies Properties~\ref{p:noise.norm} through~\ref{p:sample.in.every.cluster}.  
 Let $r \in Q_+$ and let $\bx = \bmu^{(r)} + \bxi$ such that for all $i \in I$ we have $| \inner{\bx_i,\bxi}| \leq \Delta$ and $| \inner{\bxi_i,\bxi}| \leq \Delta$. Also, assume that $\inner{\bx_i,\bx} \in \left[d - \Delta, d + \Delta \right]$ for all $i \in I^{(r)}$ and $|\inner{\bx_i,\bx}| \leq p + \Delta$ for all $i \not \in  I^{(r)}$.
	Then,  $\cn_\btheta(\bx - \bz) \leq -1$.
\end{lemma}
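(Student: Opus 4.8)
Proof proposal.\ The plan is to compare $\cn_\btheta(\bx-\bz)$ directly with $\cn_\btheta(\bx)$, which is already pinned down by \lemref{lem:output not too large} (its hypotheses on $\bx$ are exactly those assumed here and $r\in Q_+$, so $\cn_\btheta(\bx)\le |\cn_\btheta(\bx)|\le 2$). Write $\cn_\btheta(\bx-\bz)=\sum_{j\in J_+}v_j\phi(\bw_j^\top(\bx-\bz)+b_j)+\sum_{j\in J_-}v_j\phi(\bw_j^\top(\bx-\bz)+b_j)$ with $\bz=(\eta_1+\eta_2)\bu$ and $\bu=\sum_{q\in Q}y^{(q)}\bmu^{(q)}$. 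For $j\in J_+$ the perturbation $-\bz$ can only decrease the pre-activation: by \lemref{lem:neuron input changes fast} together with $d-\Delta-kp-k\Delta>0$ from \lemref{lem:expression is positive} we have $\bw_j^\top\bu\ge 0$, hence $\bw_j^\top\bz\ge 0$ and $\bw_j^\top(\bx-\bz)+b_j\le\bw_j^\top\bx+b_j$; since $\phi$ is nondecreasing and $v_j>0$, this gives $v_j\phi(\bw_j^\top(\bx-\bz)+b_j)\le v_j\phi(\bw_j^\top\bx+b_j)$. Consequently, using $v_j=-|v_j|$ for $j\in J_-$,
\[
\cn_\btheta(\bx-\bz)\le \sum_{j\in J_+}v_j\phi(\bw_j^\top\bx+b_j)+\sum_{j\in J_-}v_j\phi(\bw_j^\top(\bx-\bz)+b_j)=\cn_\btheta(\bx)-\sum_{j\in J_-}|v_j|\bigl(\phi(\bw_j^\top(\bx-\bz)+b_j)-\phi(\bw_j^\top\bx+b_j)\bigr),
\]
so it suffices to show $\sum_{j\in J_-}|v_j|\bigl(\phi(\bw_j^\top(\bx-\bz)+b_j)-\phi(\bw_j^\top\bx+b_j)\bigr)\ge 3$.

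Next I lower-bound the per-neuron increment for each $j\in J_-$. Since $\eta_1+\eta_2\ge\eta_1=\tfrac{2\Delta+p+1}{d-\Delta-kp-k\Delta}$, \lemref{lem:neurons input become positive} yields $\bw_j^\top(\bx-\bz)+b_j\ge 0$, hence $\phi(\bw_j^\top(\bx-\bz)+b_j)=\bw_j^\top(\bx-\bz)+b_j=(\bw_j^\top\bx+b_j)-\bw_j^\top\bz$, and since $\phi(\bw_j^\top\bx+b_j)=\max\{\bw_j^\top\bx+b_j,0\}$,
\[
\phi(\bw_j^\top(\bx-\bz)+b_j)-\phi(\bw_j^\top\bx+b_j)=\min\{\bw_j^\top\bx+b_j,0\}-\bw_j^\top\bz.
\]
By \lemref{lem:neuron input not small}, $\bw_j^\top\bx+b_j\ge -(2\Delta+p+1)\sum_{i\in I}\lambda_i|v_j|\phi'_{i,j}$, so $\min\{\bw_j^\top\bx+b_j,0\}\ge -(2\Delta+p+1)\sum_{i\in I}\lambda_i|v_j|\phi'_{i,j}$; and by \lemref{lem:neuron input changes fast} for $j\in J_-$, $\bw_j^\top\bz=(\eta_1+\eta_2)\bw_j^\top\bu\le -(\eta_1+\eta_2)(d-\Delta-kp-k\Delta)\sum_{i\in I}\lambda_i|v_j|\phi'_{i,j}$. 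Substituting these two bounds and using $\eta_1(d-\Delta-kp-k\Delta)=2\Delta+p+1$, the increment is at least $\eta_2(d-\Delta-kp-k\Delta)\sum_{i\in I}\lambda_i|v_j|\phi'_{i,j}$: the $\eta_1$ part of $\bz$ exactly absorbs the slack coming from neurons inactive at $\bx$.

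Multiplying by $|v_j|$ and summing over $j\in J_-$,
\[
\sum_{j\in J_-}|v_j|\bigl(\phi(\bw_j^\top(\bx-\bz)+b_j)-\phi(\bw_j^\top\bx+b_j)\bigr)\ge \eta_2(d-\Delta-kp-k\Delta)\sum_{j\in J_-}\sum_{i\in I}\lambda_i v_j^2\phi'_{i,j}.
\]
Restricting the inner double sum to clusters in $Q_-$ and applying \lemref{lem:lam lower bound} to each of the $|Q_-|\ge ck$ clusters with label $-1$, together with $1-\tfrac{c'}{1-2c'}=\tfrac{1-3c'}{1-2c'}$, gives $\sum_{j\in J_-}\sum_{i\in I}\lambda_i v_j^2\phi'_{i,j}\ge ck\cdot\tfrac{1-3c'}{1-2c'}\cdot\tfrac{1}{3d+\Delta+1}$. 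Plugging in $\eta_2=\tfrac{3(3d+\Delta+1)(1-2c')}{(d-\Delta-kp-k\Delta)(1-3c')ck}$ makes the resulting expression equal exactly $3$, so combined with $\cn_\btheta(\bx)\le 2$ we obtain $\cn_\btheta(\bx-\bz)\le 2-3=-1$. The main obstacle is the bookkeeping in the middle step: a naive triangle-inequality bound on $\sum_{j\in J_-}v_j\phi(\bw_j^\top(\bx-\bz)+b_j)$ only yields $\cn_\btheta(\bx-\bz)\le 0$, which is too weak; one must compare against $\cn_\btheta(\bx)$ neuron by neuron, exploiting that for $j\in J_-$ (i) the perturbed pre-activation is nonnegative so $\phi$ is linear there, (ii) $\bw_j^\top\bz\le 0$ so adding $-\bz$ genuinely increases these pre-activations, and (iii) $\eta_1$ is calibrated so that the lower bound of \lemref{lem:neuron input not small} for the inactive neurons is cancelled, leaving the clean $\eta_2$-scaled increment that aggregates via \lemref{lem:lam lower bound}. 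Careful sign tracking ($v_j<0$, $\bw_j^\top\bz\le 0$ on $J_-$, versus $v_j>0$, $\bw_j^\top\bz\ge 0$ on $J_+$) is essential throughout.
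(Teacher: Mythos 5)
Your proof is correct and takes essentially the same approach as the paper's: you control the $J_+$ neurons by monotonicity of $\phi$, linearize the $J_-$ neurons at $\bx-\bz$ via \lemref{lem:neurons input become positive}, split the perturbation into an $\eta_1$ part (which cancels the slack from \lemref{lem:neuron input not small}) and an $\eta_2$ part (which aggregates via \lemref{lem:lam lower bound}), and compare against $|\cn_\btheta(\bx)|\le 2$ from \lemref{lem:output not too large}. The paper realizes the same split by introducing the intermediate point $\tilde\bx=\bx-\eta_1\bu$ and moving $\bx\to\tilde\bx\to\bx'$, whereas you use the identity $\phi(a-b)-\phi(a)=\min\{a,0\}-b$ (valid once $a-b\ge 0$); this is purely a presentational difference.
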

\begin{proof}
	We denote $\bx' = \bx - \bz$.
	By \lemref{lem:neuron input changes fast}, for every $j \in J_+$ we have 
	\begin{align*}
		\bw_j^\top  \bx' + b_j 
		&= \bw_j^\top \bx + b_j - \bw_j^\top   (\eta_1 + \eta_2) \sum_{q \in Q} y^{(q)} \bmu^{(q)}
		\\
		&\leq \bw_j^\top \bx + b_j  - (\eta_1 + \eta_2)   \sum_{i \in I} \lambda_i v_j \phi'_{i,j}  \left( d-\Delta - kp - k \Delta \right)~.
	\end{align*}
	By \lemref{lem:expression is positive} we get 
	\begin{equation} \label{eq:value does not increase1}
	    \bw_j^\top  \bx' + b_j 
	    \leq \bw_j^\top \bx + b_j. 
	\end{equation}
	Thus, in the neurons $J_+$ the input does not increase when moving from $\bx$ to $\bx'$.
	
	Consider now $j \in J_-$. Let $\tilde{\bx} = \bx - \eta_1  \sum_{q \in Q} y^{(q)} \bmu^{(q)}$. By \lemref{lem:neurons input become positive}, we have 
	$\bw_j^\top \tilde{\bx} + b_j \geq 0$. Also, by \lemref{lem:neuron input changes fast}, we have 
	\begin{align*}
		\bw_j^\top  \tilde{\bx} + b_j 
		&= \bw_j^\top \bx + b_j - \bw_j^\top \eta_1 \sum_{q \in Q} y^{(q)} \bmu^{(q)}
		\\
		&\geq \bw_j^\top \bx + b_j  - \eta_1 \sum_{i \in I} \lambda_i v_j \phi'_{i,j}  \left( d-\Delta - kp - k \Delta \right)~,
	\end{align*}
	and by \lemref{lem:expression is positive} the above is at least $\bw_j^\top \bx + b_j$. Thus, when moving from $\bx$ to $\tilde{\bx}$ the input to the neurons $J_-$ can only increase, and at $\tilde{\bx}$ it is 
	non-negative. 
	
	Next, we move from $\tilde{\bx}$ to $\bx'$. We have
	\begin{align*}
		\bw_j^\top \bx' + b_j
		= \bw_j^\top \tilde{\bx} + b_j - \eta_2 \bw_j^\top  \sum_{q \in Q} y^{(q)} \bmu^{(q)}
		\geq \max \left\{0,   \bw_j^\top \bx + b_j  \right\} - \eta_2 \bw_j^\top  \sum_{q \in Q} y^{(q)} \bmu^{(q)}~.
	\end{align*}
	By \lemref{lem:neuron input changes fast}, the above is at least 
	\begin{equation} \label{eq:value increases a lot1}
		 \max \left\{0, \bw_j^\top \bx + b_j  \right\} - \eta_2 \sum_{i \in I} \lambda_i v_j \phi'_{i,j}  \left( d-\Delta - kp - k \Delta \right) 
		 \geq 0~,
	\end{equation}
	where in the last inequality we use \lemref{lem:expression is positive}.
	
	Overall, we have 
	\begin{align*}
		\cn_\btheta(\bx') 
		&= \left[ \sum_{j \in J_+} v_j \phi(\bw_j^\top \bx' + b_j) \right] + \left[ \sum_{j \in J_-}  v_j \phi(\bw_j^\top \bx' + b_j) \right]
		\\
		&\overset{(i)} = \left[ \sum_{j \in J_+} v_j \phi(\bw_j^\top \bx' + b_j) \right] + \left[ \sum_{j \in J_-}  v_j (\bw_j^\top \bx' + b_j) \right]
		\\
		&\overset{(ii)}  \leq \left[ \sum_{j \in J_+} v_j \phi(\bw_j^\top \bx + b_j) \right] + \\
		&\;\;\;\;\; \left[ \sum_{j \in J_-}  v_j \left(   \max \left\{0,   \bw_j^\top \bx + b_j  \right\} - \eta_2 \sum_{i \in I} \lambda_i v_j \phi'_{i,j}  \left( d-\Delta - kp - k \Delta \right) \right) \right]~,
	\end{align*}
    where in $(i)$ we used \eqref{eq:value increases a lot1}, and in $(ii)$ we used both \eqref{eq:value does not increase1} and \eqref{eq:value increases a lot1}.
    Now, the above equals	
	\begin{align*}
		&\left[ \sum_{j \in J} v_j \phi(\bw_j^\top \bx + b_j) \right] - \left[ \sum_{j \in J_-}  v_j \eta_2  \sum_{i \in I} \lambda_i v_j \phi'_{i,j}  \left( d-\Delta - kp - k \Delta \right)  \right]
		\\
		&= \cn_\btheta(\bx) - \eta_2  \left( d-\Delta - kp - k \Delta \right)  \left[ \sum_{q' \in Q} \sum_{i \in I^{(q')}} \sum_{j \in J_-} \lambda_i v_j^2 \phi'_{i,j}  \right]~.
	\end{align*}
	
	Combining the above with \lemref{lem:output not too large}, \lemref{lem:lam lower bound}, and \lemref{lem:expression is positive}, we get 
	\begin{align*}
		\cn_\btheta(\bx') 
		&\leq 2 - \eta_2  \left( d-\Delta - kp - k \Delta \right)  \left[ \sum_{q' \in Q_-} \sum_{i \in I^{(q')}} \sum_{j \in J_-} \lambda_i v_j^2 \phi'_{i,j}  \right]
		\\
		&\leq 2 - \eta_2   \left( d-\Delta - kp - k \Delta \right) | Q_- |  \left( 1- \frac{c'}{1-2c'} \right) \frac{1}{3d+\Delta + 1}
		\\
		&\leq 2 - \eta_2   \left( d-\Delta - kp - k \Delta \right) ck  \left( \frac{1-3c'}{1-2c'} \right) \frac{1}{3d+\Delta + 1}~.
	\end{align*}
	For 
	\[
		\eta_2 
		= \frac{3(3d+\Delta+1)(1-2c')}{(d-\Delta -kp -k\Delta)(1-3c')ck} 
	\] 
	we conclude that $\cn_\btheta(\bx')$ is at most $-1$.
\end{proof}

Next, we show that perturbing nice test examples from negative clusters with the vector $\bz$ changes the sign of the network output. 
\begin{lemma} \label{lem:pert flips2}
Suppose $\cs$ satisfies Properties~\ref{p:noise.norm} through~\ref{p:sample.in.every.cluster}.  
	Let $r \in Q_-$ and let $\bx = \bmu^{(r)} + \bxi$ such that for all $i \in I$ we have $| \inner{\bx_i,\bxi}| \leq \Delta$ and $| \inner{\bxi_i,\bxi}| \leq \Delta$. Also, assume that $\inner{\bx_i,\bx} \in \left[d - \Delta, d + \Delta \right]$ for all $i \in I^{(r)}$ and $|\inner{\bx_i,\bx}| \leq p + \Delta$ for all $i \not \in  I^{(r)}$.
	Then,  $\cn_\btheta(\bx + \bz) \geq 1$.
\end{lemma}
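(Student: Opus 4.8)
The plan is to run the argument of \lemref{lem:pert flips1} with the roles of the positive and negative neurons interchanged, perturbing in the $+\bz$ direction instead of $-\bz$. Write $\bx' = \bx + \bz$, where $\bz = (\eta_1 + \eta_2)\sum_{q\in Q} y^{(q)}\bmu^{(q)}$ with $\eta_1 = \frac{2\Delta + p + 1}{d - \Delta - kp - k\Delta}$ and $\eta_2 = \frac{3(3d+\Delta+1)(1-2c')}{(d-\Delta-kp-k\Delta)(1-3c')ck}$; both are positive by \lemref{lem:expression is positive}. The hypotheses imposed on $\bx$ here are exactly those of \lemref{lem:pert flips1} with $r\in Q_-$ in place of $r\in Q_+$, and one checks that Lemmas~\ref{lem:neurons input become positive}, \ref{lem:output not too large}, and \ref{lem:lam lower bound} apply verbatim when $r\in Q_-$.

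\textbf{Step 1 (the negative neurons do not increase).} For $j\in J_-$, \lemref{lem:neuron input changes fast} gives $\bw_j^\top\sum_{q\in Q} y^{(q)}\bmu^{(q)} \leq \sum_{i\in I}\lambda_i v_j\phi'_{i,j}(d-\Delta-kp-k\Delta)$; since $v_j<0$, $\lambda_i,\phi'_{i,j}\geq 0$, and $d-\Delta-kp-k\Delta>0$ by \lemref{lem:expression is positive}, the right-hand side is $\leq 0$, so $\bw_j^\top\bx'+b_j \leq \bw_j^\top\bx+b_j$ for all $j\in J_-$. \textbf{Step 2 (the positive neurons become active and grow).} Set $\tilde\bx = \bx + \eta_1\sum_{q\in Q} y^{(q)}\bmu^{(q)}$. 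By \lemref{lem:neurons input become positive} (its $J_+$ conclusion), $\bw_j^\top\tilde\bx+b_j\geq 0$ for all $j\in J_+$; and by \lemref{lem:neuron input changes fast} with $v_j>0$ and \lemref{lem:expression is positive}, $\bw_j^\top\tilde\bx+b_j\geq\bw_j^\top\bx+b_j$, hence $\bw_j^\top\tilde\bx+b_j\geq\max\{0,\bw_j^\top\bx+b_j\}$. Moving from $\tilde\bx$ to $\bx' = \tilde\bx + \eta_2\sum_{q\in Q} y^{(q)}\bmu^{(q)}$ and applying \lemref{lem:neuron input changes fast} once more gives, for $j\in J_+$,
\[ \bw_j^\top\bx'+b_j \geq \max\{0,\bw_j^\top\bx+b_j\} + \eta_2\sum_{i\in I}\lambda_i v_j\phi'_{i,j}(d-\Delta-kp-k\Delta) \geq 0. \]

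\textbf{Step 3 (conclude).} Since each $j\in J_+$ has $\bw_j^\top\bx'+b_j\geq 0$, we have $\phi(\bw_j^\top\bx'+b_j)=\bw_j^\top\bx'+b_j$, and multiplying the last display by $v_j>0$ yields $v_j\phi(\bw_j^\top\bx'+b_j)\geq v_j\phi(\bw_j^\top\bx+b_j) + \eta_2(d-\Delta-kp-k\Delta)\sum_{i\in I}\lambda_i v_j^2\phi'_{i,j}$. For $j\in J_-$, monotonicity of $\phi$ together with $v_j<0$ gives $v_j\phi(\bw_j^\top\bx'+b_j)\geq v_j\phi(\bw_j^\top\bx+b_j)$. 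Summing over all $j$,
\[ \cn_\btheta(\bx') \geq \cn_\btheta(\bx) + \eta_2(d-\Delta-kp-k\Delta)\sum_{j\in J_+}\sum_{i\in I}\lambda_i v_j^2\phi'_{i,j}. \]
By \lemref{lem:output not too large}, $\cn_\btheta(\bx)\geq -2$; and by \lemref{lem:lam lower bound} applied to the clusters $q\in Q_+$ together with $|Q_+|\geq ck$,
\[ \sum_{j\in J_+}\sum_{i\in I}\lambda_i v_j^2\phi'_{i,j} \geq \sum_{q\in Q_+}\sum_{i\in I^{(q)}}\sum_{j\in J_+}\lambda_i v_j^2\phi'_{i,j} \geq ck\cdot\frac{1-3c'}{1-2c'}\cdot\frac{1}{3d+\Delta+1}. \]
Substituting the value of $\eta_2$ makes the added term equal to $3$, so $\cn_\btheta(\bx')\geq -2+3 = 1$, as claimed. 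Because every step is the sign-reversed mirror of a step already established for \lemref{lem:pert flips1}, no genuinely new obstacle arises; the only care required is to keep the signs of the $v_j$ straight (multiplying an inequality by $v_j$ reverses it on $J_-$ but not on $J_+$) and to confirm that the invoked lemmas hold verbatim with $r\in Q_-$, which they do.
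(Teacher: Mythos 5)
Your proposal is correct and follows essentially the same argument as the paper's own proof, which explicitly mirrors the proof of \lemref{lem:pert flips1} with the roles of $J_+$ and $J_-$ exchanged and the perturbation sign flipped. The decomposition into the three steps (negative neurons do not increase, positive neurons become active via the intermediate point $\tilde\bx$, then assemble the output lower bound via \lemref{lem:output not too large} and \lemref{lem:lam lower bound}) and the final arithmetic with $\eta_2$ match the paper exactly.
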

\begin{proof}
    The proof follows similar arguments to the proof of \lemref{lem:pert flips1}. We provide it here for completeness.
    
    We denote $\bx' = \bx + \bz$.
	By \lemref{lem:neuron input changes fast}, for every $j \in J_-$ we have 
	\begin{align*}
		\bw_j^\top  \bx' + b_j 
		&= \bw_j^\top \bx + b_j + \bw_j^\top (\eta_1 + \eta_2) \sum_{q \in Q} y^{(q)} \bmu^{(q)}
		\\
		&\leq \bw_j^\top \bx + b_j + (\eta_1 + \eta_2) \sum_{i \in I} \lambda_i v_j \phi'_{i,j}  \left( d-\Delta - kp - k \Delta \right)~.
	\end{align*}
	By \lemref{lem:expression is positive} we get 
	\begin{equation} \label{eq:value does not increase2}
	    \bw_j^\top  \bx' + b_j 
	    \leq \bw_j^\top \bx + b_j. 
	\end{equation}
	Thus, in the neurons $J_-$ the input does not increase when moving from $\bx$ to $\bx'$.
	
	Consider now $j \in J_+$. Let $\tilde{\bx} = \bx + \eta_1  \sum_{q \in Q} y^{(q)} \bmu^{(q)}$. By \lemref{lem:neurons input become positive}, we have 
	$\bw_j^\top \tilde{\bx} + b_j \geq 0$. Also, by \lemref{lem:neuron input changes fast}, we have 
	\begin{align*}
		\bw_j^\top  \tilde{\bx} + b_j 
		&= \bw_j^\top \bx + b_j + \bw_j^\top \eta_1 \sum_{q \in Q} y^{(q)} \bmu^{(q)}
		\\
		&\geq \bw_j^\top \bx + b_j + \eta_1 \sum_{i \in I} \lambda_i v_j \phi'_{i,j}  \left( d-\Delta - kp - k \Delta \right)~,
	\end{align*}
	and by \lemref{lem:expression is positive} the above is at least $\bw_j^\top \bx + b_j$. Thus, when moving from $\bx$ to $\tilde{\bx}$ the input to the neurons $J_+$ can only increase, and at $\tilde{\bx}$ it is non-negative. 
	
	Next, we move from $\tilde{\bx}$ to $\bx'$. We have
	\begin{align*}
		\bw_j^\top \bx' + b_j
		= \bw_j^\top \tilde{\bx} + b_j + \eta_2 \bw_j^\top  \sum_{q \in Q} y^{(q)} \bmu^{(q)}
		\geq \max \left\{0,   \bw_j^\top \bx + b_j  \right\} + \eta_2 \bw_j^\top  \sum_{q \in Q} y^{(q)} \bmu^{(q)}~.
	\end{align*}
	By \lemref{lem:neuron input changes fast}, the above is at least 
	\begin{equation} \label{eq:value increases a lot2}
		 \max \left\{0, \bw_j^\top \bx + b_j  \right\} + \eta_2 \sum_{i \in I} \lambda_i v_j \phi'_{i,j}  \left( d-\Delta - kp - k \Delta \right) 
		 \geq 0~,
	\end{equation}
	%Note that by \lemref{lem:expression is positive} the above is non-negative. 
	where in the last inequality we use \lemref{lem:expression is positive}.
	
	Overall, we have 
	\begin{align*}
		\cn_\btheta(\bx') 
		&= \left[ \sum_{j \in J_-} v_j \phi(\bw_j^\top \bx' + b_j) \right] + \left[ \sum_{j \in J_+}  v_j \phi(\bw_j^\top \bx' + b_j) \right]
		\\
		&\overset{(i)} = \left[ \sum_{j \in J_-} v_j \phi(\bw_j^\top \bx' + b_j) \right] + \left[ \sum_{j \in J_+}  v_j (\bw_j^\top \bx' + b_j) \right]
		\\
		&\overset{(ii)} \geq \left[ \sum_{j \in J_-} v_j \phi(\bw_j^\top \bx + b_j) \right] + \\
		&\;\;\;\;\; \left[ \sum_{j \in J_+}  v_j \left( \max \left\{0,   \bw_j^\top \bx + b_j  \right\} + \eta_2 \sum_{i \in I} \lambda_i v_j \phi'_{i,j}  \left( d-\Delta - kp - k \Delta \right) \right) \right]~,
	\end{align*}
	where in $(i)$ we used \eqref{eq:value increases a lot2}, and in $(ii)$ we used both \eqref{eq:value does not increase2} and \eqref{eq:value increases a lot2}.
	Now, the above equals
	\begin{align*}
		&\left[ \sum_{j \in J} v_j \phi(\bw_j^\top \bx + b_j) \right] + \left[ \sum_{j \in J_+}  v_j \eta_2  \sum_{i \in I} \lambda_i v_j \phi'_{i,j}  \left( d-\Delta - kp - k \Delta \right)  \right]
		\\
		&= \cn_\btheta(\bx) + \eta_2  \left( d-\Delta - kp - k \Delta \right)  \left[ \sum_{q' \in Q} \sum_{i \in I^{(q')}} \sum_{j \in J_+} \lambda_i v_j^2 \phi'_{i,j}  \right]~.
	\end{align*}
	
	Combining the above with \lemref{lem:output not too large}, \lemref{lem:lam lower bound}, and \lemref{lem:expression is positive}, we get 
	\begin{align*}
		\cn_\btheta(\bx') 
		&\geq -2 + \eta_2 \left( d-\Delta - kp - k \Delta \right)  \left[ \sum_{q' \in Q_+} \sum_{i \in I^{(q')}} \sum_{j \in J_+} \lambda_i v_j^2 \phi'_{i,j}  \right]
		\\
		&\geq -2 + \eta_2 \left( d-\Delta - kp - k \Delta \right) | Q_+ |  \left( 1 - \frac{c'}{1-2c'} \right) \frac{1}{3d+\Delta + 1}
		\\
		&\geq -2 + \eta_2 \left( d-\Delta - kp - k \Delta \right) ck  \left( \frac{1-3c'}{1-2c'} \right) \frac{1}{3d+\Delta + 1}~.
	\end{align*}
% 	For 
% 	\[
% 		\eta_2 
% 		= \frac{3(3d+\Delta+1)(1-2c')}{(d-\Delta -kp -k\Delta)(1-3c')ck} 
% 	\] 
	Plugging-in $\eta_2$,
	we conclude that $\cn_\btheta(\bx')$ is at least $1$.
\end{proof}

Finally, we show that the scale of the perturbation $\bz$ is small when $k$ is large. 

\begin{lemma} \label{lem:norm z}
	We have $\norm{\bz} = \co \left( \sqrt{\frac{d}{c^2 k}} \right)$. 
\end{lemma}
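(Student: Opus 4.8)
The plan is a direct computation. Recall that $\bz = (\eta_1 + \eta_2)\,\bu$ where $\bu = \sum_{q \in Q} y^{(q)} \bmu^{(q)}$, $\eta_1 = \frac{2\Delta + p + 1}{d - \Delta - kp - k\Delta}$ and $\eta_2 = \frac{3(3d+\Delta+1)(1-2c')}{(d-\Delta-kp-k\Delta)(1-3c')ck}$. So it suffices to bound $\norm{\bu}$ and the scalar $\eta_1 + \eta_2$ separately and multiply. For $\norm{\bu}$, I would expand $\norm{\bu}^2 = \sum_{q \in Q}\norm{\bmu^{(q)}}^2 + \sum_{q \neq q'} y^{(q)}y^{(q')}\inner{\bmu^{(q)},\bmu^{(q')}}$; by \assref{ass:dist} the first sum is $kd$, and the second has absolute value at most $k(k-1)p \le k^2 p$, where $p = \max_{q\neq q'}|\inner{\bmu^{(q)},\bmu^{(q')}}|$. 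The key observation is to write $k^2 p = k\cdot(kp)$ and use \assref{ass:dist}, which gives $kp < k(p+\Delta+1) \le \tfrac{1}{10}(d-\Delta+1) = \co(d)$; hence $k^2 p = \co(kd)$, so $\norm{\bu}^2 = \co(kd)$ and $\norm{\bu} = \co(\sqrt{kd})$.

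Next I would lower-bound the common denominator $D := d - \Delta - kp - k\Delta = d - \Delta - k(p+\Delta)$. From \assref{ass:dist}, $k(p+\Delta) < k(p+\Delta+1) \le \tfrac{1}{10}(d-\Delta+1)$, so $D > \tfrac{9}{10}(d-\Delta) - \tfrac{1}{10}$, and combining with $\Delta \le d/21$ from \lemref{lem:bound Delta} yields $D = \Omega(d)$ (in particular $D>0$, which re-derives \lemref{lem:expression is positive}). Then for $\eta_1$ the numerator $2\Delta + p + 1 \le 2(p+\Delta+1) = 2c'\,\tfrac{d-\Delta+1}{k} = \co(d/k)$ (using $k = c'\tfrac{d-\Delta+1}{p+\Delta+1}$ and $c' \le \tfrac{1}{10}$), so $\eta_1 = \co(1/k)$. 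For $\eta_2$: using $\Delta \le d/21$ the numerator $3(3d+\Delta+1)(1-2c')$ is $\co(d)$; using $c' \le \tfrac{1}{10}$ we have $1-3c' \ge \tfrac{7}{10} = \Omega(1)$, so the denominator $D(1-3c')ck = \Omega(d)\cdot\Omega(1)\cdot ck = \Omega(dck)$, giving $\eta_2 = \co(1/(ck))$. Since $c \le \min\{|Q_+|,|Q_-|\}/k \le \tfrac{1}{2} < 1$, we have $1/k \le 1/(ck)$, so $\eta_1 + \eta_2 = \co(1/(ck))$.

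Combining the two estimates, $\norm{\bz} = (\eta_1+\eta_2)\norm{\bu} = \co(1/(ck))\cdot\co(\sqrt{kd}) = \co\!\left(\sqrt{\tfrac{d}{c^2 k}}\right)$, as claimed. There is no conceptual obstacle here; the only care needed is bookkeeping with the constants coming out of \assref{ass:dist} and \lemref{lem:bound Delta}, ensuring every numerator is $\co(d)$ or $\co(d/k)$ and every denominator is $\Omega(d)$ or $\Omega(dck)$. The one mildly non-obvious point, already flagged above, is that the cross-term contribution $k^2 p$ to $\norm{\bu}^2$ must be rewritten as $k\cdot(kp)$ so that the near-orthogonality bound $kp = \co(d)$ prevents it from dominating the diagonal term $kd$.
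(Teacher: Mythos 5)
Your proof is correct and follows essentially the same route as the paper: decompose $\norm{\bz}$ into the scalar factor $\eta_1+\eta_2$ and $\norm{\bu}$, bound $\norm{\bu}^2 \le kd + k^2p = \co(kd)$ via the near-orthogonality constraint from \assref{ass:dist}, lower-bound the common denominator $d-\Delta-k(p+\Delta)$ by $\Omega(d)$ using \lemref{lem:bound Delta}, and deduce $\eta_1+\eta_2 = \co(1/(ck))$. The paper carries out the same bookkeeping (squaring everything and tracking explicit constants rather than working at the level of $\co/\Omega$), but the steps and the two key observations you flag — rewriting $k^2p$ as $k\cdot(kp)$ with $kp=\co(d)$, and $p+\Delta+1=\co(d/k)$ — are exactly those used there.
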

\begin{proof}
	We have
	\[
		\norm{\bz}^2
		= (\eta_1+\eta_2)^2 \norm{ \sum_{q \in Q} y^{(q)} \bmu^{(q)}}^2
	\]
	
	Now,
	\begin{align*}
		(\eta_1+ \eta_2)^2
		&= \left( \frac{ 2 \Delta + p + 1}{d-\Delta - kp - k \Delta} +  \frac{3(3d+\Delta+1)(1-2c')}{(d-\Delta -kp -k\Delta)(1-3c')ck} \right)^2
		\\
		&\leq \left( \frac{  2 (p + \Delta + 1)}{d-\Delta - k (p + \Delta + 1)} +  \frac{3(3d+\Delta+1)(1-2c')}{(d-\Delta - k (p + \Delta + 1))(1-3c')ck} \right)^2
		\\
		&\overset{(i)} = \left( \frac{2 c' (d-\Delta+1)/k }{d-\Delta - c' (d - \Delta + 1)} +  \frac{3(3d+\Delta+1)(1-2c')}{(d-\Delta -  c' (d - \Delta + 1))(1-3c')ck} \right)^2
		\\
		&\overset{(ii)} \leq \left(  \frac{1}{k} \cdot \frac{ \frac{2}{10} (d+1) }{d - \frac{d}{21} - \frac{1}{10} (d + 1)} + \frac{1}{ck} \cdot \frac{3(3d + \frac{d}{21} + 1)(1-\frac{2}{10})}{(d - \frac{d}{21} -  \frac{1}{10} (d + 1))(1-\frac{3}{10})} \right)^2
		\\
		&\leq \left( \frac{1}{k} \cdot \frac{\co(d) }{\Omega(d)} + \frac{1}{ck} \cdot \frac{\co(d)}{\Omega(d)} \right)^2
		\\
		&\leq \co\left(\frac{1}{c^2 k^2} \right)~,
	\end{align*}
	where in $(i)$ we used $k = c' \cdot \frac{d - \Delta + 1}{p+\Delta+1}$, and in $(ii)$ we used \lemref{lem:bound Delta}.
	
	Moreover,
	\begin{align*}
		\norm{ \sum_{q \in Q} y^{(q)} \bmu^{(q)}}^2
		&= \sum_{r \in Q} \sum_{q \in Q} y^{(r)} y^{(q)} \inner{\bmu^{(r)}, \bmu^{(q)}}
		\\
		&= \sum_{r \in Q} \left[\norm{\bmu^{(r)}}^2 + \sum_{q \neq r} y^{(r)} y^{(q)} \inner{\bmu^{(r)}, \bmu^{(q)}} \right]
		\\
		&\leq kd + k^2 p
		\\
		&= kd + k^2 \left( \frac{c'(d-\Delta+1)}{k} - \Delta - 1 \right)
		\\
		&\leq kd + kc'(d-\Delta+1) 
		\\ 
		&\leq \co(kd)~.
	\end{align*}
	
	Overall, we get
	\[
		\norm{\bz}^2 \leq \co \left( \frac{d}{c^2 k} \right)~.
	\]
\end{proof}

The theorem now follows immediately from Lemmas~\ref{lem:prob x is good},~\ref{lem:pert flips1},~\ref{lem:pert flips2}, and~\ref{lem:norm z}.

\end{document}